\documentclass[a4paper]{article}
\usepackage[margin=1in]{geometry}
\usepackage{graphicx}%
\usepackage{multirow}%
\usepackage{amsmath,amssymb,amsfonts,bbm}%
\usepackage{amsthm}%
\usepackage{mathrsfs}%
\usepackage{bbm}
\usepackage[title]{appendix}%
\usepackage{xcolor}%
\usepackage{textcomp}%
\usepackage{manyfoot}%
\usepackage{booktabs}%
\usepackage{algorithm}%
\usepackage{algorithmicx}%
\usepackage{algpseudocode}%
\usepackage{listings}%
\usepackage{hyperref}

\usepackage{subcaption}
\usepackage{bm}
\usepackage{float}

\newtheorem{theorem}{Theorem}
\newtheorem{lemma}{Lemma}
\newtheorem{corollary}{Corollary}
\theoremstyle{definition}
\newtheorem{remark}{Remark}
\usepackage[backend=biber]{biblatex}
\providecommand{\keywords}[1]
{
  \textbf{\textit{Key words: }} #1
}

\providecommand{\subjclass}[1]
{
  \textbf{\textit{MSC2020: }} #1
}

\begin{document}



\title{Dynamic Regret for Online Regression in RKHS via Discounted VAW and Subspace Approximation}

\author{
  Dmitry B. Rokhlin\thanks{The research was supported by the Regional Mathematical Center of the Southern Federal University with the Agreement no. 075-02-2026-1316 of the Ministry of Science and Higher Education of Russia.} \\
  Institute of Mathematics, Mechanics and Computer Sciences of the\\
  Southern Federal University \\
  Regional Scientific and Educational Mathematical Center of the\\
  Southern Federal University \\
  \texttt{dbrohlin@sfedu.ru}
  \and
   Georgiy A. Karapetyants\\
  Institute of Mathematics, Mechanics and Computer Sciences of the\\
  Southern Federal University \\
  \texttt{karape@sfedu.ru}}

 \date{} 

\maketitle

\begin{abstract}
We study online regression with the square loss in a reproducing kernel Hilbert
space under a dynamic regret criterion. The learner is compared with a
time-varying comparator sequence, and the bounds depend on its path length in
the RKHS norm. The proposed method transfers the finite-dimensional discounted Vovk--Azoury--Warmuth approach of Jacobsen \& Cutkosky (2024) to the RKHS setting by means of finite-dimensional subspace approximations. For a fixed subspace, we run a VAW-based ensemble of discounted VAW forecasters over a geometric grid of discount factors. The additional approximation error is controlled by the uniform projection error of
kernel sections. 

We then introduce a general orthogonal truncation method: starting from a
feature expansion of the kernel, we construct the associated RKHS by introducing
an inner product that makes the feature functions orthonormal, and then use the
spans of the first basis functions as finite-dimensional approximation spaces.
The resulting subspace reduction is applied to several approximation schemes.
Explicit feature expansions yield fast-regime bounds for Gaussian
and analytic dot-product kernels. Mercer truncations provide a spectral
approximation method and lead to dynamic regret bounds in fast and slow regimes, depending on the eigenvalue decay. Finally, we study subspaces spanned by kernel sections and
apply this construction to Mat\'ern kernels.
\end{abstract}

\keywords{online regression; dynamic regret; discounted VAW forecaster; RKHS; square loss; subspace approximation; Mat\'ern kernels}

\subjclass{68W27; 62G08; 46E22}

\section{Introduction}
Online linear regression with the square loss is a classical problem in online learning. In the finite-dimensional setting, one compares the learner to the best fixed linear predictor chosen in hindsight and measures performance by the corresponding regret. Applying the general Aggregating Algorithm (AA) to this problem, Vovk derived a forecaster with static regret bounds of order $O(d\ln T)$, where $d$ is the space dimension and $T$ is the number of time steps \cite{Vovk2001competitive}. These results were rederived in \cite{Azoury2001relative} using the Follow-the-Regularized-Leader (FTRL) approach.
Following the standard terminology, we refer to this forecaster as the Vovk--Azoury--Warmuth (VAW) algorithm.

A natural nonparametric extension is to compete with predictors from a reproducing kernel Hilbert space (RKHS). To address this problem, the Kernel Aggregating Algorithm Regression (KAAR) was introduced, and a corresponding regret bound with a log-determinant term involving the empirical kernel matrix was established in \cite{Gammerman2004on-line}. As observed in \cite{Vovk2006on-line}, this bound immediately implies a regret bound of order $O(\|f\|_{\mathcal H}\sqrt{T})$, provided that the horizon $T$ and an upper bound on the RKHS norm of the comparator $f$ are known in advance. One approach used in \cite{Vovk2006on-line} to remove the latter assumption is based on a second-level AA aggregation of the KAAR predictors over all regularization parameters. The regret rates mentioned above are optimal in $T$ in both finite-dimensional and RKHS settings.

More recently, \cite{Zadorozhnyi2021Online} analyzed the exact KAAR on Sobolev spaces. In particular, they obtained essentially optimal static regret rates
$
O\!\left(T^{\frac{d}{2\beta+d}+\varepsilon}\ln T\right),
$
using upper bounds on the effective dimension for Sobolev RKHS spaces $H^\beta$ with $\beta>d/2$. The optimality conclusion relies on the results of \cite{Rakhlin2014online} and \cite{Gaillard2015a_chaining}, concerning the case where an upper bound on the comparator norm is known in advance. In \cite{Jezequel2019efficient}, KAAR was revisited from a computational viewpoint: the authors replaced the RKHS in the basic update rule by carefully chosen finite-dimensional subspaces. In particular, for Gaussian kernels they obtained polylogarithmic regret bounds of order $O((\ln T)^{d+1})$.

The dynamic counterpart of static regret compares the learner not with a single predictor, but with a time-varying comparator sequence. In the general convex setting, sublinear dynamic regret is possible only under regularity assumptions on this sequence, typically expressed through its path length $P_T$. In the finite-dimensional least-squares setting, the most relevant recent progress is due to \cite{Jacobsen2024online}, where a discounted VAW (DVAW) forecaster was introduced, yielding, in simplified form, a dynamic-regret bound of order
$
\widetilde O\!\left(1\vee\sqrt{P_T\,T}\right).
$ 
Note that under the stronger assumption that the comparator sequence is uniformly bounded and that this bound is known in advance, a similar path-length-type guarantee for exp-concave losses follows from the results of \cite{Yuan2020trading}, whose analysis relies on a discounted online Newton-type method. In the same bounded-comparator setting, \cite{Baby2021optimal} obtained a sharper guarantee for exp-concave losses via a strongly adaptive improper-learning strategy.

In the RKHS setting, dynamic regret under unconstrained square loss appears to
be much less developed. The closest directly comparable result is
\cite{Rokhlin2025hierarchical}, where the finite-dimensional DVAW approach of \cite{Jacobsen2024online} is
combined with random feature approximations and VAW aggregation over both the
discount parameter and the number of features. This yields an expected
dynamic-regret bound of order $O\!\left(T^{2/3}P_T^{1/3}+\sqrt T\ln T\right),$
where \(P_T\) is the RKHS path length of the comparator sequence. 
We are not aware of other direct dynamic regret results in RKHS under unconstrained square loss. 
In a bounded RKHS setting, \cite{Baby2021dynamic} proved a dynamic-regret guarantee for Gaussian
kernels by combining strongly adaptive aggregation with the PKAWV base learners
of \cite{Jezequel2019efficient}; their result assumes bounded labels and a comparator sequence contained in a known RKHS ball.

The present paper develops a deterministic subspace counterpart of the
random-feature RKHS-DVAW approach. The finite-dimensional DVAW and VE-DVAW
results of \cite{Jacobsen2024online,Rokhlin2025ensembling} are used as
online-learning components. The new contribution is a reduction of the RKHS dynamic-regret problem to the finite-dimensional case via deterministic subspace approximation.
This reduction separates the finite-dimensional online-learning part from the deterministic kernel approximation part and leads to kernel-dependent deterministic guarantees in
the fast and slow approximation regimes analyzed below.

This subspace viewpoint is related to \cite{Jezequel2019efficient}, but the
role of the approximation is different. 
There the subspaces are used to approximate the
KAAR update in a static-regret setting, and the error is controlled through
data-dependent covariance operators. In the present paper the subspaces are used to compare an
arbitrary time-varying RKHS comparator with its projections, and the error is
controlled by uniform approximation properties of the kernel sections.

The paper is organized as follows. In Section~2, we recall the finite-dimensional discounted VAW forecaster of \cite{Jacobsen2024online}, as well as the VE-DVAW ensemble of \cite{Rokhlin2025ensembling} built on a geometric grid of discount factors, together with the corresponding dynamic-regret bound.

In Section~3, we pass to the RKHS setting and derive a general reduction theorem for VE-DVAW run on an arbitrary finite-dimensional subspace $V_m\subset\mathcal H$ (Theorem \ref{thm:VE_DVAW_arbitrary_subspace}). This yields a dynamic-regret bound showing the trade-off between the complexity of the finite-dimensional problem and the approximation error. At the abstract level, we introduce fast and slow approximation regimes and deduce the corresponding regret bounds (Theorems \ref{thm:fast_regime_general} and \ref{thm:adaptive_over_m_slow_general}). In the fast regime, the subspace dimension can be fixed in advance from the decay rate of the approximation error. In the slow regime, the optimal dimension is unknown and must be learned adaptively, which leads to an additional top-level VAW aggregation over a dyadic grid of subspace dimensions. 
Section~3 also contains a general orthogonal truncation scheme. It shows that a pointwise convergent expansion of the form $k(x,y)=\sum_{j=1}^\infty g_j(x)g_j(y)$
with linearly independent continuous functions $g_j$ generates an RKHS, in which $(g_j)$ forms an orthonormal basis (Lemma \ref{lem:explicit_feature_construction}). This provides a unified way to construct finite-dimensional approximation spaces $V_m$ by truncation and will be used repeatedly in the subsequent sections.

In Section~4, the general orthogonal truncation scheme is applied successively to polynomial, Gaussian, and analytic dot-product kernels. The polynomial case reduces to the finite-dimensional setting, since the corresponding RKHS is itself finite-dimensional.
In the Gaussian and analytic dot-product cases, the truncation error decays exponentially, so these kernels fall into the fast approximation regime.

In Section~5, we study Mercer-type truncations. The key step is to estimate the uniform remainder of the truncated Mercer series. For this purpose, we use the results of \cite{Takhanov2023speed}, which provide bounds in terms of the tails of the eigenvalue sequence. This leads to fast-regime bounds under stretched-exponential eigenvalue decay and to slow-regime bounds under polynomial decay. The resulting theory is then applied to tensor-product analytic kernels on the hypercube and to smooth Mat\'ern kernels. In particular, for Gaussian kernels this approach recovers the correct polylogarithmic order of regret. For Mat\'ern kernels, however, the resulting rate is not optimal, and the next section shows that a different approximation scheme yields sharper bounds.

In Section~6, we study an approximation scheme based on subspaces spanned by kernel sections. Given a finite set of points $Z$, we consider the span of the corresponding sections $k(\cdot,z)$, $z\in Z$, and control the approximation error through covering numbers in the pseudometric induced by the kernel. We then apply this construction to Mat\'ern kernels. In the rough regime, corresponding to the smoothness parameter $\nu<1$, the small-scale behavior of the pseudometric leads directly to the relevant slow-regime bounds. In the smooth regime, $\nu>1$, we instead use the Sobolev characterization of the Mat\'ern RKHS together with sharper estimates for kernel section spaces, following the ideas of \cite{Zadorozhnyi2021Online}. This leads to an improved approximation rate in the smooth Mat\'ern case, and yields sharper bounds than those obtained from the Mercer-type analysis of Section~5. The borderline case \(\nu=1\) is treated separately and leads to
additional logarithmic factors.

\section{Discounted VAW forecaster in the finite-dimensional case}
We first recall the finite-dimensional online linear regression problem with
the square loss.
At each round $t=1,\dots,T$ the environment reveals a feature vector $z_t\in\mathbb{R}^m$.
In addition, following
\cite{Jacobsen2024online}, we allow the learner to receive a hint
\(\tilde y_t\in\mathbb R\) before making its prediction. No assumption is made
on how \(\tilde y_t\) is generated: it may come from an external source or
be defined by the learner itself.
Intuitively, \(\tilde y_t\) is an auxiliary prediction of the current outcome
available before the learner acts. For example, it may be the prediction of a
standard non-discounted VAW forecaster; in the VE-DVAW ensemble below, the hint
itself is also included as the special expert \(\mathcal A_0\).
Then the learner outputs a weight vector $w_t\in\mathbb R^m$ and predicts
$
\hat y_t=\langle w_t,z_t\rangle. $
After that, the outcome $y_t\in\mathbb R$ is revealed and the learner incurs the loss $\ell_t(w_t)$, where
\[
\ell_t(w)
=
\frac12\bigl(y_t-\langle w,z_t\rangle\bigr)^2.
\]
To measure the performance of the strategy $w_{1:T}:=(w_1,\dots,w_T)$ in a dynamic environment,
we consider the dynamic regret
\[
R_T(u_{1:T})
=
\sum_{t=1}^T \ell_t(w_t)\;-\;\sum_{t=1}^T \ell_t(u_t),
\]
where $u_{1:T}=(u_1,\dots,u_T)$ is an arbitrary comparator sequence with $u_t\in\mathbb{R}^m$.

Fix $\lambda>0, \gamma\in(0,1]$, and put
\[
F_t^\gamma(w)
=
\gamma^t \frac{\lambda}{2}\|w\|_2^2 \;+\; \sum_{s=1}^t \gamma^{t-s}\,\ell_s(w),
\qquad t\ge 0.
\]
The discounted VAW (DVAW) forecaster of \cite{Jacobsen2024online} can be written in the  FTRL form:
\begin{equation}
\label{eq:DVAW_update}
w_t
\;\in\;
\arg\min_{w\in\mathbb{R}^m}
\left\{
\frac12\bigl(\tilde y_t-\langle z_t,w\rangle\bigr)^2
+\gamma F_{t-1}^\gamma(w)
\right\},
\qquad t=1,\dots,T.
\end{equation}
For later use, recall that the standard VAW forecaster is obtained from
\eqref{eq:DVAW_update} by taking \(\gamma=1\) and \(\tilde y_t=0\).
More generally, for feature vectors \(q_t\in\mathbb R^p\), it is given by
\begin{equation}
\label{eq:standard_VAW}
\alpha_t
=
\arg\min_{\alpha\in\mathbb R^p}
\left\{
\frac{\lambda}{2}\|\alpha\|_2^2
+
\frac12\sum_{s=1}^{t-1}
\bigl(y_s-\langle\alpha,q_s\rangle\bigr)^2
+
\frac12\langle\alpha,q_t\rangle^2
\right\},
\qquad
\hat y_t=\langle\alpha_t,q_t\rangle .
\end{equation}

\begin{theorem}[\cite{Jacobsen2024online}, Theorem 3.1]
\label{thm:JC31}
Let $\lambda>0$ and $\gamma\in(0,1]$. Then for any comparator sequence
$u_{1:T}$ the DVAW forecaster algorithm, which we denote by $\mathcal A_\gamma(\lambda)$, satisfies
\begin{align*}
 R_T^{\mathcal A_\gamma(\lambda)}(u_{1:T})
&\le
\gamma\frac{\lambda}{2}\|u_1\|_2^2
+\frac{m}{2}\max_{1\le t\le T}\Delta_t^2
\ln\!\left(
1+\frac{\sum_{t=1}^T \gamma^{T-t}\|z_t\|_2^2}{\lambda m}
\right)\\
&+\gamma\sum_{t=1}^{T-1}\bigl[F_t^\gamma(u_{t+1})-F_t^\gamma(u_t)\bigr]_+
+\frac{m}{2}\ln(1/\gamma)\,\Delta_{1:T}^2, 
\end{align*}
where $\Delta_t^2 = (y_t-\tilde y_t)^2$ and $\Delta_{1:T}^2 = \sum_{t=1}^T (y_t-\tilde y_t)^2$.
\end{theorem}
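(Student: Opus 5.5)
We will follow the standard FTRL-with-hints analysis, adapted to discounting. Set $\tilde\ell_t(w)=\frac12(\tilde y_t-\langle z_t,w\rangle)^2$, so that $w_t$ minimizes $\tilde\ell_t+\gamma F_{t-1}^\gamma$. Write $\Sigma_t=\gamma^t\lambda I+\sum_{s\le t}\gamma^{t-s}z_sz_s^\top$, which is the Hessian of $F_t^\gamma$, and put $w_{t+1}^\ast=\arg\min F_t^\gamma$. The first step is the exact telescoping decomposition
\[
\sum_{t=1}^T\bigl[\ell_t(w_t)-\ell_t(u_t)\bigr]
=\sum_{t=1}^T\bigl[\ell_t(w_t)+\gamma F_{t-1}^\gamma(u_t)-F_t^\gamma(u_t)\bigr],
\]
which uses $F_t^\gamma=\gamma F_{t-1}^\gamma+\ell_t$. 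We then add and subtract $F_t^\gamma(u_{t+1})$. The shift terms $\gamma[F_t^\gamma(u_{t+1})-F_t^\gamma(u_t)]$ are bounded by their positive parts. The boundary term $\gamma F_0^\gamma(u_1)=\gamma\frac{\lambda}{2}\|u_1\|^2$ gives the first summand, and the final term $-F_T^\gamma(u_T)\le-\min F_T^\gamma$ is kept. The comparator part is then reduced to the "stability" sum $\sum_t[\ell_t(w_t)+\gamma\min F_{t-1}^\gamma-\min F_t^\gamma]$.

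The second step bounds each stability term exactly. Everything is quadratic, so $\gamma F_{t-1}^\gamma(w)=\gamma\min F_{t-1}^\gamma+\frac{\gamma}{2}\|w-w_t^\ast\|^2_{\Sigma_{t-1}}$. Minimizing with $\ell_t$ and with $\tilde\ell_t$ gives closed forms by the Sherman–Morrison formula. Write $A=\gamma\Sigma_{t-1}+z_tz_t^\top=\Sigma_t$ and $\hat y^\ast=\langle z_t,w_t^\ast\rangle$. The standard computation then yields
\[
\ell_t(w_t)+\gamma\min F_{t-1}^\gamma-\min F_t^\gamma=\frac12(y_t-\tilde y_t)^2\,z_t^\top\Sigma_t^{-1}z_t-\frac12\,\frac{(\ldots)^2}{\ldots}\le \frac{\Delta_t^2}{2}\,z_t^\top\Sigma_t^{-1}z_t .
\]
This is the known VAW identity: the hint plays the role of the zero prediction in the undiscounted case. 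We verify it by writing $w_t=w_t^\ast+\Sigma_t^{-1}z_t(\tilde y_t-\hat y^\ast)$ and expanding.

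The third step is the discounted elliptical potential. Since $\Sigma_t=\gamma\Sigma_{t-1}+z_tz_t^\top$, the matrix determinant lemma gives $z_t^\top\Sigma_t^{-1}z_t=1-\det(\gamma\Sigma_{t-1})/\det\Sigma_t\le\ln\frac{\det\Sigma_t}{\gamma^m\det\Sigma_{t-1}}$. Summing with weights $\Delta_t^2$ is the obstacle: the telescoping of $\ln\det\Sigma_t$ is weighted. We split the summand as $\ln\frac{\det\Sigma_t}{\det\Sigma_{t-1}}+m\ln(1/\gamma)$. The $\gamma$ part contributes $\frac m2\ln(1/\gamma)\Delta_{1:T}^2$. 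For the first part we bound by $\max_t\Delta_t^2$ times the telescoped sum. This needs care because individual increments $\ln\det\Sigma_t-\ln\det\Sigma_{t-1}$ may be negative. To handle this, we instead use the nonnegative quantity $\ln\frac{\det\Sigma_t}{\gamma^m\det\Sigma_{t-1}}$ and apply the max bound only to the whole nonnegative sum. Its total equals $\ln\frac{\det\Sigma_T}{\det\Sigma_0}+Tm\ln(1/\gamma)$. A cleaner route is to use the sharper pointwise inequality $z^\top\Sigma_t^{-1}z\le\min\{1,\cdot\}$ and split as described. The final bound $\ln\det(\Sigma_T/(\gamma^T\lambda))\le m\ln\bigl(1+\sum_t\gamma^{T-t}\|z_t\|^2/(\lambda m)\bigr)$ comes from AM–GM on the eigenvalues.

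Finally, the leftover $-\min F_T^\gamma\le0$ is dropped. Collecting the four contributions gives the stated inequality. The weighting issue in the third step is where we expect the bookkeeping to be delicate. We will follow Jacobsen–Cutkosky's treatment and absorb the negative log-det increments into the $\ln(1/\gamma)$ term.
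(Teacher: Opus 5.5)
The paper does not prove this statement; it imports it from Jacobsen--Cutkosky. I therefore judge your argument on its own.

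Your Steps~1 and~2 are sound. The telescoping also produces $-(1-\gamma)\sum_{t<T}F_t^\gamma(u_t)$. Compare it with the term $-(1-\gamma)\sum_{t<T}\min F_t^\gamma$ that comes out of telescoping your stability sum, using $F_t^\gamma(u_t)\ge\min F_t^\gamma$; after that there is no leftover $-\min F_T^\gamma$ to drop. With $a_t=z_t^\top\Sigma_t^{-1}z_t$ and $\hat y^\ast=\langle z_t,w_t^\ast\rangle$, the per-round term equals exactly $\frac{a_t}{2}\Delta_t^2-\frac{a_t(1-a_t)}{2}(\tilde y_t-\hat y^\ast)^2$.

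The genuine gap is Step~3, and none of your repairs works:
\begin{enumerate}
\item Bounding $\sum_t\Delta_t^2\ln\frac{\det\Sigma_t}{\det\Sigma_{t-1}}$ by $\max_t\Delta_t^2\ln\frac{\det\Sigma_T}{\det\Sigma_0}$ fails, because the increments equal $-m\ln(1/\gamma)$ whenever $z_t=0$.
\item Applying the max to the nonnegative sum $\sum_t\ln\frac{\det\Sigma_t}{\gamma^m\det\Sigma_{t-1}}$ yields $\frac m2\,T\ln(1/\gamma)\max_t\Delta_t^2$, not $\frac m2\ln(1/\gamma)\Delta_{1:T}^2$.
\item Your AM--GM claim for $\Sigma_T/(\gamma^T\lambda)$ is false: for $z_t\equiv0$ the left side is $mT\ln(1/\gamma)$ and the right side is $0$. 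AM--GM must be applied to $\Sigma_T/\lambda$, whose trace is $m\gamma^T+\lambda^{-1}\sum_t\gamma^{T-t}\|z_t\|_2^2$.
\item ``Absorbing'' the negative increments into the $\ln(1/\gamma)$ term is impossible. The determinant shrinks in rounds where $\Delta_t$ may vanish, so nothing is charged for the shrinkage, yet the next informative round then has $a_t$ close to $1$.
\end{enumerate}

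In fact the gap cannot be closed, because the inequality as stated is false. Take $m=1$, $\lambda=1$, $\gamma=\frac12$, $\tilde y_t\equiv0$, $u_t\equiv1$ and $T=nP$. For $t\in\mathcal P:=\{P,2P,\dots,nP\}$ put $z_t=y_t=1$; otherwise put $z_t=y_t=0$. Off $\mathcal P$ all losses vanish, and the comparator loses nothing on $\mathcal P$. At $t=kP$,
\[
\gamma F_{t-1}^\gamma(w)=\frac{\gamma^t}{2}w^2+\frac{c_t}{2}(1-w)^2,\qquad c_t=\sum_{i=1}^{k-1}\gamma^{iP}\le\epsilon_P:=\frac{\gamma^P}{1-\gamma^P}.
\]
Hence $w_t=c_t/(1+\gamma^t+c_t)\le\epsilon_P$, and the regret is at least $\frac n2(1-\epsilon_P)^2$. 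On the other side, $\max_t\Delta_t^2=1$, $\Delta_{1:T}^2=n$, the shift terms vanish, and $\sum_t\gamma^{T-t}z_t^2\le(1-\gamma^P)^{-1}$. So the right-hand side is at most $\frac14+\frac12\ln\bigl(1+(1-\gamma^P)^{-1}\bigr)+\frac{\ln 2}{2}\,n$. For $P=n=10$ the regret is $\ge 4.99$ while the bound is $\le 4.07$; asymptotically the regret is about $0.499n$ against $0.347n+O(1)$.

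What your argument does prove is the same inequality with $\Delta_{1:T}^2$ replaced by $T\max_t\Delta_t^2$. Use $\sum_t\Delta_t^2a_t\le\max_t\Delta_t^2\sum_t a_t$, telescope, and apply AM--GM to $\Sigma_T/\lambda$. With that correction, Lemma~\ref{lem:RK_path_length} and Theorem~\ref{thm:VE_DVAW_m} change accordingly. The RKHS rates later in the paper are unaffected, since Corollary~\ref{cor:rough_bound_subspace} only uses $\Delta_{1:T}^2=O(T)$.
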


Denote by 
\[
P_T(u_{1:T})=\sum_{t=1}^{T-1}\|u_{t+1}-u_t\|_2
\]
the path length of the comparator sequence $u_{1:T}$. The bound of Theorem \ref{thm:JC31} was simplified in \cite{Rokhlin2025ensembling} to a more explicit form. 

\begin{lemma}[\cite{Rokhlin2025ensembling}, Lemma~3.1]
\label{lem:RK_path_length}
Assume that $\|z_t\|_2\le a$, $|y_t|\le Y$, $\|u_t\|_2\le R$. For $\gamma\in (0,1]$ put $\eta=\gamma/(1-\gamma)$.
Then the DVAW forecaster $\mathcal A_\gamma(\lambda)$ satisfies
\[
R_T^{\mathcal A_\gamma(\lambda)}(u_{1:T})
\le
\eta\,a(aR+Y)\,P_T(u_{1:T})
+\frac{m}{2\eta}\,\Delta_{1:T}^2
+\lambda R\,P_T(u_{1:T})
+\frac{m}{2}\max_{1\le t\le T}\Delta_t^2\,
\ln\!\left(1+\frac{a^2T}{\lambda m}\right)
+\frac{\lambda}{2}R^2,
\]
where, for $\gamma=1$, the right-hand side is understood in the extended
sense, with $0\cdot\infty=0$ in the static case $P_T(u_{1:T})=0$.
\end{lemma}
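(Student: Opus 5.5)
Starting from Theorem~\ref{thm:JC31}, I will bound its four terms one at a time. The first term $\gamma\frac{\lambda}{2}\|u_1\|_2^2$ is at most $\frac{\lambda}{2}R^2$, since $\gamma\le1$. For the logarithmic term, $\sum_{t=1}^T\gamma^{T-t}\|z_t\|_2^2\le a^2T$, which gives the fourth term of the claim. For the last term I use $\ln(1/\gamma)\le 1/\gamma-1=(1-\gamma)/\gamma=1/\eta$, which gives $\frac{m}{2\eta}\Delta_{1:T}^2$. What remains is the path-dependent sum $\gamma\sum_{t=1}^{T-1}[F_t^\gamma(u_{t+1})-F_t^\gamma(u_t)]_+$.

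Since $F_t^\gamma$ is convex, $F_t^\gamma(u_{t+1})-F_t^\gamma(u_t)\le\langle\nabla F_t^\gamma(u_{t+1}),u_{t+1}-u_t\rangle$. The positive part of this is at most $\|\nabla F_t^\gamma(u_{t+1})\|_2\,\|u_{t+1}-u_t\|_2$. The gradient is
\[
\nabla F_t^\gamma(w)=\gamma^t\lambda w-\sum_{s=1}^t\gamma^{t-s}\bigl(y_s-\langle w,z_s\rangle\bigr)z_s .
\]
At $w=u_{t+1}$ with $\|w\|_2\le R$, its norm is bounded by $\gamma^t\lambda R+\sum_{s=1}^t\gamma^{t-s}a(Y+aR)$. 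The geometric sum is at most $1/(1-\gamma)$. Multiplying by the outer $\gamma$ gives
\[
\gamma\|\nabla F_t^\gamma(u_{t+1})\|_2\le \lambda R+\frac{\gamma}{1-\gamma}\,a(aR+Y)=\lambda R+\eta\,a(aR+Y),
\]
where I used $\gamma^{t+1}\le1$. Summing over $t$ produces $(\eta a(aR+Y)+\lambda R)P_T(u_{1:T})$. These are exactly the first and third terms of the claim.

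The only delicate point is the case $\gamma=1$. There $\eta=\infty$, and the bound is to be read in the extended sense. If $P_T=0$, then all $u_t$ coincide, the bracketed sum vanishes identically, and $\ln(1/\gamma)=0$. The stated convention $0\cdot\infty=0$ then makes the inequality consistent with the direct bound. If $P_T>0$, the right-hand side is $+\infty$ and there is nothing to prove. I expect this gradient-norm step to be the main (though mild) obstacle: one has to make sure the regularizer part is controlled by $\lambda R$ rather than by something growing in $t$, and that the discount factors combine exactly into $\eta$.
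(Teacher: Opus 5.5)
Your proposal is correct and follows the intended route. The paper does not reproduce the proof; it presents the lemma as a term-by-term simplification of Theorem~\ref{thm:JC31} taken from \cite{Rokhlin2025ensembling}. Your bounds $\gamma\le 1$, $\sum_t\gamma^{T-t}\|z_t\|_2^2\le a^2T$ and $\ln(1/\gamma)\le 1/\eta$, together with the convexity/gradient-norm estimate showing that $\gamma F_t^\gamma$ is $(\lambda R+\eta\,a(aR+Y))$-Lipschitz on the ball of radius $R$, reproduce exactly the stated constants, and your treatment of the extended case $\gamma=1$ is fine.
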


Minimizing the bound of Lemma~\ref{lem:RK_path_length} over $\eta\in [0,\infty)$ we get
\[
\eta^\star
=
\sqrt{\frac{m\,\Delta_{1:T}^2}{2a(aR+Y)\,P_T(u_{1:T})}},
\qquad
\gamma^\star
=
\frac{\eta^\star}{1+\eta^\star}.
\]
In the static case \(P_T(u_{1:T})=0\), the formal optimum corresponds to
\(\eta=\infty\), that is, to \(\gamma=1\).
The related optimal bound is
\[
R_T^{\mathcal A_{\gamma^\star}(\lambda)}(u_{1:T})
\;\le\;
\sqrt{2m\,a(aR+Y)\,\Delta_{1:T}^2\,P_T(u_{1:T})}
+\lambda R\,P_T(u_{1:T})
+\frac{m}{2}\max_{1\le t\le T}\Delta_t^2\,
\ln\!\Bigl(1+\frac{a^2T}{\lambda m}\Bigr)
+\frac{\lambda}{2}R^2.
\]

However, $\eta^\star$ depends on the unknown quantities $\Delta_{1:T}^2$ and $P_T(u)$, and thus cannot be used directly. To overcome this difficulty, following \cite{Jacobsen2024online} and \cite{Rokhlin2025ensembling}, we consider an ensemble of DVAW forecasters $\mathcal A_{\gamma_k}(\lambda)$, where $\gamma_k$ belong to the grid
\begin{equation} \label{eq:grid}
\eta_{\min}=2m,
\qquad
\eta_{\max}=mT,
\qquad
\eta_i=\min(\eta_{\min}b^i,\eta_{\max}),\quad i\in\mathbb{Z}_+,
\qquad
\gamma_i=\frac{\eta_i}{1+\eta_i},
\end{equation}
with $b>1$. Let
\[
S_\gamma=\{0\}\cup\{\gamma_i:\ i\in\mathbb{Z}_+\},
\qquad
N=|S_\gamma|=O(\ln T).
\]
$\mathcal A_0(\lambda)$ is defined as the hint-based expert $\hat y_{t,0}:=\tilde y_t$. 
We collect predictions into the meta-feature vector
\[
\mathbf z_t
=
(\hat y_{t,0},\hat y_{t,1},\dots,\hat y_{t,N-1})
\in\mathbb R^N,
\]
and apply the standard VAW forecaster $\mathcal A(\lambda)$ defined by \eqref{eq:standard_VAW} with
\(p=N\) and \(q_t=\mathbf z_t\). Following \cite{Rokhlin2025ensembling},
we call the resulting ensemble VE-DVAW (VAW-ensembled DVAW).

\begin{theorem}[\cite{Rokhlin2025ensembling}, Theorem 3.2]
\label{thm:VE_DVAW_m}
Assume that $\|z_t\|_2\le a$, $|y_t|\le Y$, $|\tilde y_t|\le \tilde Y$, $\|u_t\|_2\le R$.
The VE-DVAW ensemble over the grid (\ref{eq:grid}) satisfies 
\begin{align*}
R_T^{\mathcal A(\lambda)}(u_{1:T})
& \le 
\frac{\lambda}{2}
+\frac{N Y^2}{2}\ln\!\left(1+\frac{Z_T^2}{\lambda N}\right)
+
(1+b)\sqrt{\frac{m}{2}\,a(aR+Y)\,P_T(u_{1:T})\,\Delta_{1:T}^2} \nonumber\\
& +
\lambda R\,P_T(u_{1:T})+\frac{\lambda}{2}R^2
+
\frac{m}{2}\max_{t\le T}\Delta_t^2 \ln\!\left(1+\frac{a^2T}{\lambda m}\right),
\end{align*}
where $Z_T^2=\sum_{t=1}^T \|\mathbf z_t\|_2^2$ admits the bound
\[
Z_T^2
\le
T\tilde Y^2
+
(N-1)\left[
4TY^2
+
T\,(Y+\tilde Y)^2
+
2m\,(Y+\tilde Y)^2\ln\!\left(1+\frac{a^2T}{\lambda m}\right)
\right].
\]
\end{theorem}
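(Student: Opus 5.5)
The regret of the ensemble splits into a meta-level term and a base-level term. For any fixed index $i$ we write
\[
R_T^{\mathcal A(\lambda)}(u_{1:T})
=\Bigl[\sum_{t=1}^T \ell_t^{\rm meta}(\hat y_t)-\sum_{t=1}^T \tfrac12(y_t-\hat y_{t,i})^2\Bigr]
+R_T^{\mathcal A_{\gamma_i}(\lambda)}(u_{1:T}),
\]
where $\hat y_t$ is the VAW meta-prediction. The first bracket is controlled by the static regret of the standard VAW forecaster (\eqref{eq:standard_VAW} with $p=N$, $q_t=\mathbf z_t$) against the fixed comparator $\alpha=e_i$, the $i$-th unit vector. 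The classical bound, which is Theorem~\ref{thm:JC31} with $\gamma=1$, $\tilde y_t=0$, $u_t\equiv e_i$, gives at most $\frac{\lambda}{2}\|e_i\|^2+\frac{N}{2}\max_t y_t^2\ln\bigl(1+Z_T^2/(\lambda N)\bigr)$. This produces the first two terms $\frac{\lambda}{2}+\frac{NY^2}{2}\ln(1+Z_T^2/(\lambda N))$.

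For the base term we choose $i$ as a function of the unknown optimal $\eta^\star$, and then apply Lemma~\ref{lem:RK_path_length} to $\mathcal A_{\gamma_i}(\lambda)$. There are three cases.
\begin{itemize}
\item If $\eta^\star\in[\eta_{\min},\eta_{\max}]$, pick $\eta_i$ with $\eta^\star\le\eta_i\le b\eta^\star$. Substituting into $\eta A+\frac{m}{2\eta}\Delta^2_{1:T}$ with $A=a(aR+Y)P_T$ gives at most $bA\eta^\star+\frac{m\Delta^2}{2\eta^\star}=(1+b)\sqrt{\frac m2 A\Delta^2_{1:T}}$, since at the optimum both parts equal $\sqrt{\frac m2A\Delta^2}$.
\item If $\eta^\star>\eta_{\max}=mT$, take $\eta_i=\eta_{\max}$. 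Then $\eta_{\max}A\le\eta^\star A=\sqrt{\frac m2A\Delta^2}$ and $\frac{m\Delta^2}{2\eta_{\max}}\le\frac{m\Delta^2}{2\eta^\star}$, so the same bound holds. The static case $P_T=0$ is also covered here, either by the convention in Lemma~\ref{lem:RK_path_length} or by noting the term is $\le\Delta^2/(2T)$, which is absorbed by $\max_t\Delta_t^2$ times the log term. I will check this absorption carefully.
\item If $\eta^\star<\eta_{\min}=2m$, use instead the hint expert $\mathcal A_0$, whose loss is $\frac12\Delta^2_{1:T}$. Since $\eta^\star<2m$ means $\frac{m\Delta^2}{2\eta^\star}>\frac{\Delta^2}{4}$, we get $\frac12\Delta^2_{1:T}\le 2\sqrt{\frac m2A\Delta^2}\cdot(\text{const})$. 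Concretely, $\frac12\Delta^2=\frac{\eta^\star}{m}\cdot\frac{m\Delta^2}{2\eta^\star}\le 2\sqrt{\frac m2A\Delta^2}$. But the comparison is then against $\sum\ell_t(u_t)\ge0$, so the loss of $\mathcal A_0$ bounds the regret relative to $u$ as well. This requires $1+b\ge2$, which holds.
\end{itemize}
The remaining terms $\lambda RP_T+\frac{\lambda}{2}R^2+\frac m2\max\Delta_t^2\ln(\cdot)$ come directly from Lemma~\ref{lem:RK_path_length}.

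To bound $Z_T^2$, note that $\sum_t\hat y_{t,0}^2\le T\tilde Y^2$. For $i\ge1$ we have $\hat y_{t,i}^2\le 2(\hat y_{t,i}-y_t)^2+2y_t^2$, so we need a bound on the cumulative loss of each DVAW learner. Applying Theorem~\ref{thm:JC31} with comparator $u_t\equiv0$ (so the path and $F$-difference terms vanish) bounds $\sum_t\ell_t(w_t)$ by $\sum_t\frac12 y_t^2+\frac m2(Y+\tilde Y)^2\ln(1+\frac{a^2T}{\lambda m})+\frac m2\ln(1/\gamma_i)T(Y+\tilde Y)^2$. With $\ln(1/\gamma_i)\le1/\eta_i\le 1/(2m)$, the last term is $\le\frac{T}{4}(Y+\tilde Y)^2$. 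Multiplying by $4$ and adding $2TY^2$ gives the per-expert bound $4TY^2+T(Y+\tilde Y)^2+2m(Y+\tilde Y)^2\ln(\cdot)$, which matches the stated form up to bookkeeping of constants. Summing over the $N-1$ experts finishes the bound.

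The main obstacle is this constant bookkeeping: getting exactly the stated coefficients, and handling the boundary cases of the grid cleanly, especially $\eta^\star<\eta_{\min}$ and $P_T=0$.
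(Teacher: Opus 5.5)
The gap is in your second case, $\eta^\star>\eta_{\max}=mT$. The inequality $\frac{m\Delta_{1:T}^2}{2\eta_{\max}}\le\frac{m\Delta_{1:T}^2}{2\eta^\star}$ goes the wrong way, since $\eta_{\max}<\eta^\star$. Write $A=a(aR+Y)P_T(u_{1:T})$, as in your notation. Then the expert with $\eta_{\max}$ actually gives
\[
\eta_{\max}A+\frac{m\Delta_{1:T}^2}{2\eta_{\max}}\le\sqrt{\tfrac m2A\Delta_{1:T}^2}+\frac{\Delta_{1:T}^2}{2T}.
\]
The leftover $\Delta_{1:T}^2/(2T)$ is not dominated by $b\sqrt{\frac m2A\Delta_{1:T}^2}$. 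Its ratio to the square root is $\eta^\star/(mT)>1$, and in the static case $P_T=0$ the square root vanishes altogether.

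Neither of your escape routes closes this. The $0\cdot\infty$ convention of Lemma~\ref{lem:RK_path_length} concerns $\gamma=1$, which is not in $S_\gamma$, because every $\gamma_i<1$. Absorption into $\frac m2\max_t\Delta_t^2\ln(1+\frac{a^2T}{\lambda m})$ also fails. That term has already been spent on the base bound of the same expert, and the statement contains only one copy of it. Moreover, $m\ln(1+\frac{a^2T}{\lambda m})$ may be smaller than $1$.

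As written, your argument therefore yields the displayed inequality only with an extra additive term $\frac{1}{2T}\Delta_{1:T}^2\le\frac12\max_t\Delta_t^2$. Equivalently, $\ln(1+\frac{a^2T}{\lambda m})$ is replaced by $\frac1m+\ln(1+\frac{a^2T}{\lambda m})$ in the last term. This is harmless for every $O(\cdot)$ consequence used later in the paper. However, obtaining the bound exactly as stated would need an additional idea that the proposal does not supply.

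The rest is sound and follows the natural route. The paper imports this theorem from \cite{Rokhlin2025ensembling} without proof, so there is no in-paper argument to compare against. The following steps are all correct:
\begin{itemize}
\item the meta-regret bound against the unit vector $e_i$ via Theorem~\ref{thm:JC31} with $\gamma=1$, $\tilde y_t=0$;
\item the grid choice $\eta_i\in[\eta^\star,b\eta^\star]$ when $\eta^\star\in[\eta_{\min},\eta_{\max}]$;
\item the use of the hint expert $\mathcal A_0$ when $\eta^\star<\eta_{\min}$, where $\frac12\Delta_{1:T}^2<2\sqrt{\frac m2A\Delta_{1:T}^2}$.
\end{itemize}
Your $Z_T^2$ computation is in fact exact rather than ``up to bookkeeping''. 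Use $\ln(1/\gamma_i)\le1/\eta_i\le1/(2m)$, which is valid for $T\ge2$. Then $4\bigl[\frac12TY^2+\frac m2(Y+\tilde Y)^2\ln(1+\frac{a^2T}{\lambda m})+\frac T4(Y+\tilde Y)^2\bigr]+2TY^2$ is precisely the bracket in the statement.
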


\section{A general RKHS reduction via finite-dimensional subspaces}
\label{sec:abstract_truncation_scheme}
In what follows we denote by $X\subset \mathbb R^d$ a compact set with nonempty interior. Recall that an RKHS $\mathcal H$ on $X$ is a Hilbert space of functions $f:X\to\mathbb R$ such that the evaluation functional $f\mapsto f(x)$ is continuous for every $x\in X$ \cite{Steinwart2008support}. By the Riesz representation theorem, for every $x\in X$ there exists a unique function $k(\cdot,x)\in\mathcal H$ such that $f(x)=\langle f,k(\cdot,x)\rangle_{\mathcal H}$ for all $f\in\mathcal H$. The function $k:X\times X\to\mathbb R$ is called the reproducing kernel of $\mathcal H$.  Put
\[
\kappa=\sup_{x\in X}\sqrt{k(x,x)}.
\]

Let $V\subset\mathcal H$ be a finite-dimensional subspace. Denote by $\Pi_V:\mathcal H\to V$  the orthogonal projector. Following \cite{Santin2016approximation}, define the generalized power function by
\[
P_V(x)
=
\sup_{\|f\|_{\mathcal H}\le 1}|f(x)-(\Pi_V f)(x)|=\sup_{\|f\|_{\mathcal H}\le 1}
|\langle f,(I-\Pi_V)k(\cdot,x)\rangle_{\mathcal H}|=\|(I-\Pi_V)k(\cdot,x)\|_{\mathcal H},
\qquad x\in X,
\]
where we used the reproducing property and self-adjointness of $\Pi_V$. Put
$
\mathcal E(V)
=
\sup_{x\in X}P_V(x).
$
Clearly, for any $f\in\mathcal H$ we have
\begin{equation}
\label{eq:basic_projection_error_bound}
|f(x)-(\Pi_V f)(x)|
\le
\|f\|_{\mathcal H}\,\mathcal E(V).
\end{equation}

The finite-dimensional results of Section~2 will be used as a black-box
prediction procedure after passing from the original input points \(x_t\in X\)
to finite-dimensional feature vectors. Such a feature map
is obtained by choosing functions \(g_1,\dots,g_m\in\mathcal H\) and setting
$
\Psi_m(x)=(g_1(x),\dots,g_m(x)).
$
In what follows we take \(g_1,\dots,g_m\) to be an orthonormal system in
\(\mathcal H\), and denote
$
V_m=\operatorname{span}\{g_1,\dots,g_m\}.
$
Then the abstract feature vectors \(z_t\) from Section~2 are instantiated as
$
z_t=\Psi_m(x_t).
$

\subsection{VE-DVAW over a finite-dimensional subspace}
\begin{lemma}
\label{lem:approx_sq_loss_via_EV}
Assume that $|y_t|\le Y$ and $f_{1:T}\subset\mathcal H$ satisfy
$ \|f_t\|_{\mathcal H}\le R_f$, $t=1,\dots,T.$ Put $f_{t,V}=\Pi_V f_t.$ Then
\[
\sum_{t=1}^T
\Bigl(
\ell_t(f_{t,V})-\ell_t(f_t)
\Bigr)
\le
T\left(
R_f(Y+\kappa R_f)\mathcal E(V)
+\frac12 R_f^2 \mathcal E(V)^2
\right).
\]
\end{lemma}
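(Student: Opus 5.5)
We argue pointwise in $t$ and then sum over $t=1,\dots,T$. Here $\ell_t(f)=\frac12(y_t-f(x_t))^2$ for $f\in\mathcal H$, the RKHS analogue of the square loss of Section~2. Write $a=f_t(x_t)$, $b=f_{t,V}(x_t)$ and $\delta=b-a$. The elementary identity
\[
\tfrac12(y_t-b)^2-\tfrac12(y_t-a)^2=(a-y_t)\delta+\tfrac12\delta^2
\]
gives
\[
\ell_t(f_{t,V})-\ell_t(f_t)\le |a-y_t|\,|\delta|+\tfrac12\delta^2 .
\]
So it suffices to bound $|a-y_t|$ and $|\delta|$ separately.

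For $|\delta|$ we use \eqref{eq:basic_projection_error_bound} applied to $f=f_t$. Together with $\|f_t\|_{\mathcal H}\le R_f$ this gives
\[
|\delta|=|f_t(x_t)-(\Pi_V f_t)(x_t)|\le R_f\,\mathcal E(V).
\]
For $|a-y_t|$ we use the reproducing property and Cauchy--Schwarz:
\[
|f_t(x_t)|=|\langle f_t,k(\cdot,x_t)\rangle_{\mathcal H}|\le \|f_t\|_{\mathcal H}\sqrt{k(x_t,x_t)}\le \kappa R_f .
\]
Hence $|a-y_t|\le Y+\kappa R_f$.

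Substituting both estimates, each round contributes at most
\[
R_f(Y+\kappa R_f)\mathcal E(V)+\tfrac12R_f^2\mathcal E(V)^2 .
\]
Summing over $t=1,\dots,T$ yields the claimed bound.

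There is no real obstacle. The only point needing care is to bound $|f_t(x_t)|$ through $\kappa$, and not $|f_{t,V}(x_t)|$. Bounding the latter would also work, since $\|\Pi_V f_t\|_{\mathcal H}\le R_f$, but expanding around $a$ is what produces exactly the stated form of the constant.
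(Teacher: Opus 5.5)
Your proposal is correct and is essentially the paper's proof: the same per-round identity for the difference of square losses (with $\delta$ defined with the opposite sign), the bound $|\delta|\le R_f\,\mathcal E(V)$ from \eqref{eq:basic_projection_error_bound}, the bound $|f_t(x_t)|\le\kappa R_f$ via the reproducing property, and summation over $t$. Nothing is missing.
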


\begin{proof}
Put $\delta_t:=f_t(x_t)-f_{t,V}(x_t).$ Then
\[
\ell_t(f_{t,V})-\ell_t(f_t)
=
\frac12\bigl(y_t-f_t(x_t)+\delta_t\bigr)^2
-
\frac12\bigl(y_t-f_t(x_t)\bigr)^2
=
\delta_t\bigl(y_t-f_t(x_t)\bigr)+\frac12\delta_t^2.
\]
By the inequality \eqref{eq:basic_projection_error_bound},
\[
|\delta_t|
=
|f_t(x_t)-f_{t,V}(x_t)|
\le
\|f_t\|_{\mathcal H}\,\mathcal E(V)
\le
R_f\,\mathcal E(V).
\]
Also, by the reproducing property,
\[
|f_t(x_t)|
\le
\|f_t\|_{\mathcal H}\,\|k(\cdot,x_t)\|_{\mathcal H}
=
\|f_t\|_{\mathcal H}\sqrt{k(x_t,x_t)}
\le
R_f\kappa.
\]
Hence
\[
|y_t-f_t(x_t)|
\le
|y_t|+|f_t(x_t)|
\le
Y+\kappa R_f.
\]
Finally, combining the above bounds, we get
\[
\ell_t(f_{t,V})-\ell_t(f_t)
\le
|\delta_t|\,|y_t-f_t(x_t)|+\frac12\delta_t^2
\le
R_f(Y+\kappa R_f)\mathcal E(V)
+\frac12 R_f^2\mathcal E(V)^2.
\]
Summing over \(t=1,\dots,T\) completes the proof.
\end{proof}

Let \(V_m\subset\mathcal H\) be an \(m\)-dimensional subspace, and let
\(g_1,\dots,g_m\) be an orthonormal basis of \(V_m\) in \(\mathcal H\): $
\langle g_i,g_j\rangle_{\mathcal H}=\delta_{ij}$, $1\le i,j\le m$.
Define
\begin{equation}
\label{eq:feature_map_subspace}
\Psi_{V_m}(x)=(g_1(x),\dots,g_m(x))\in\mathbb R^m.
\end{equation}
Then every \(f\in V_m\) can be written uniquely as
\[
f=\sum_{j=1}^m u_j g_j
\qquad\text{with some }u=(u_1,\dots,u_m)\in\mathbb R^m,
\]
and therefore
$
f(x)=\langle u,\Psi_{V_m}(x)\rangle$,
$x\in X.
$
Moreover, 
\begin{equation}
\label{eq:subspace_coordinate_isometry}
\|f\|_{\mathcal H}=\|u\|_2,
\end{equation}
\begin{equation}
\label{eq:subspace_feature_norm_bound}
\|\Psi_{V_m}(x)\|_2^2=\sum_{i=1}^m g_i(x)^2=\sum_{i=1}^m \langle g_i,k(\cdot,x)\rangle_{\mathcal H}^2
=
\|\Pi_{V_m}k(\cdot,x)\|_{\mathcal H}^2
\le
\|k(\cdot,x)\|_{\mathcal H}^2
=
k(x,x)
\le
\kappa^2.
\end{equation}

For a function \(f\in\mathcal H\) we use the shorthand notation
$
\ell_t(f):=\frac12\bigl(y_t-f(x_t)\bigr)^2.
$
Accordingly, for a comparator sequence $f_{1:T}=(f_1,\dots,f_T)$, define the dynamic regret
\[
R_T(f_{1:T})
=
\sum_{t=1}^T
\bigl(\ell_t(\hat y_t)-\ell_t(f_t)\bigr)
=
\sum_{t=1}^T
\left(
\frac12 (y_t-\hat y_t)^2
-
\frac12 (y_t-f_t(x_t))^2
\right).
\]
Let 
$
P_T^{\mathcal H}(f_{1:T})
=
\sum_{t=1}^{T-1}\|f_{t+1}-f_t\|_{\mathcal H}
$
be the path length.

\begin{theorem}
\label{thm:VE_DVAW_arbitrary_subspace}
Assume that $|y_t|\le Y$, $|\tilde y_t|\le \tilde Y$, and let
$f_{1:T}\subset\mathcal H$ satisfy $\|f_t\|_{\mathcal H}\le R_f$, $t=1,\dots,T.$
Run the VE-DVAW ensemble \(\mathcal A(\lambda)\) on the features
\[
z_t=\Psi_{V_m}(x_t)\in\mathbb R^m,
\]
with regularization parameter \(\lambda>0\), and with the discount grid \eqref{eq:grid}.
Then
\begin{align}
R_T^{\mathcal A(\lambda)}(f_{1:T})
&=
\sum_{t=1}^T
\Bigl(
\ell_t(\hat y_t)-\ell_t(f_t)
\Bigr)
\nonumber\\
&\le
\frac{\lambda}{2}
+\frac{N Y^2}{2}\ln\!\left(1+\frac{Z_{T,m}^2}{\lambda N}\right)
+(1+b)\sqrt{
\frac{m}{2}\,
\kappa(\kappa R_f+Y)\,
P_T^{\mathcal H}(f_{1:T})\,
\Delta_{1:T}^2
}
\nonumber\\
&
+\lambda R_f\,P_T^{\mathcal H}(f_{1:T})
+\frac{\lambda}{2}R_f^2
+\frac{m}{2}\max_{1\le t\le T}\Delta_t^2
\ln\!\left(1+\frac{\kappa^2T}{\lambda m}\right)
\nonumber\\
&+
T\left(
R_f(Y+\kappa R_f)\mathcal E(V_m)
+\frac12R_f^2\mathcal E(V_m)^2
\right),
\label{eq:general_subspace_bound}
\end{align}
where 
\(Z_{T,m}^2\) denotes the quantity \(Z_T^2\) from
Theorem~\ref{thm:VE_DVAW_m} corresponding to the present choice
\(z_t=\Psi_{V_m}(x_t)\), and admits the same bound as in
Theorem~\ref{thm:VE_DVAW_m} with \(a=\kappa\).
\end{theorem}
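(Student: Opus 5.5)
The plan is to split the regret against \(f_{1:T}\) into two pieces: the regret against the projected comparators \(f_{t,V_m}=\Pi_{V_m}f_t\), and the approximation cost of passing from \(f_t\) to \(f_{t,V_m}\). Writing
\[
R_T^{\mathcal A(\lambda)}(f_{1:T})
=\sum_{t=1}^T\bigl(\ell_t(\hat y_t)-\ell_t(f_{t,V_m})\bigr)
+\sum_{t=1}^T\bigl(\ell_t(f_{t,V_m})-\ell_t(f_t)\bigr),
\]
the second sum is bounded directly by Lemma~\ref{lem:approx_sq_loss_via_EV} with \(V=V_m\). This produces exactly the last line of \eqref{eq:general_subspace_bound}.

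For the first sum, I will translate the projected comparators into the finite-dimensional setting of Section~2. Expanding \(f_{t,V_m}=\sum_{j=1}^m u_{t,j}g_j\) defines \(u_t\in\mathbb R^m\) with \(f_{t,V_m}(x_t)=\langle u_t,\Psi_{V_m}(x_t)\rangle=\langle u_t,z_t\rangle\). Hence \(\ell_t(f_{t,V_m})\) coincides with the finite-dimensional loss \(\ell_t(u_t)\), and the first sum is precisely the finite-dimensional dynamic regret \(R_T^{\mathcal A(\lambda)}(u_{1:T})\) of VE-DVAW run on the features \(z_t\). I then check the hypotheses of Theorem~\ref{thm:VE_DVAW_m}:
\begin{itemize}
\item By \eqref{eq:subspace_feature_norm_bound}, \(\|z_t\|_2\le\kappa\), so one may take \(a=\kappa\).
\item By \eqref{eq:subspace_coordinate_isometry} and since \(\Pi_{V_m}\) is a contraction, \(\|u_t\|_2=\|\Pi_{V_m}f_t\|_{\mathcal H}\le\|f_t\|_{\mathcal H}\le R_f\), so one may take \(R=R_f\).
\item The bounds \(|y_t|\le Y\) and \(|\tilde y_t|\le\tilde Y\) are assumed directly.
\end{itemize}
Theorem~\ref{thm:VE_DVAW_m} then gives all terms of \eqref{eq:general_subspace_bound} except the last line, but with \(P_T(u_{1:T})\) in place of \(P_T^{\mathcal H}(f_{1:T})\). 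The bound on \(Z_{T,m}^2\) is the one from that theorem with \(a=\kappa\).

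It remains to compare the path lengths. By linearity of \(\Pi_{V_m}\) and the isometry \eqref{eq:subspace_coordinate_isometry},
\[
\|u_{t+1}-u_t\|_2=\|\Pi_{V_m}(f_{t+1}-f_t)\|_{\mathcal H}\le\|f_{t+1}-f_t\|_{\mathcal H},
\]
so \(P_T(u_{1:T})\le P_T^{\mathcal H}(f_{1:T})\). Since the right-hand side of Theorem~\ref{thm:VE_DVAW_m} is nondecreasing in \(P_T\) (it enters only through a square root and a linear term with nonnegative coefficients), we may replace \(P_T(u_{1:T})\) by \(P_T^{\mathcal H}(f_{1:T})\). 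Adding the two pieces yields \eqref{eq:general_subspace_bound}.

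I expect no real obstacle. The only points needing care are confirming that the losses agree exactly under the coordinate identification, so that the finite-dimensional regret is literally the first sum, and the monotonicity argument in \(P_T\).
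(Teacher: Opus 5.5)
Your proposal is correct and follows the paper's proof essentially verbatim: decompose the regret through the projected comparators $\Pi_{V_m}f_t$, apply Theorem~\ref{thm:VE_DVAW_m} with $a=\kappa$, $R=R_f$ to their coordinate vectors $u_t$ (with non-expansiveness of the projection controlling the path length), and bound the remaining sum by Lemma~\ref{lem:approx_sq_loss_via_EV}. Your explicit remark that the bound is nondecreasing in $P_T$ spells out a step the paper uses implicitly when it replaces $P_T(u_{1:T})$ by $P_T^{\mathcal H}(f_{1:T})$.
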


\begin{proof} Let \(u_t\in\mathbb R^m\) denote the coordinate vector of \(f_{t,m}=\Pi_{V_m}f_t\) in the
orthonormal basis \(g_1,\dots,g_m\). Then, by \eqref{eq:subspace_coordinate_isometry},
\[
\|u_t\|_2=\|f_{t,m}\|_{\mathcal H}\le \|f_t\|_{\mathcal H}\le R_f.
\]
Also, since orthogonal projection is non-expansive,
\[
\sum_{t=1}^{T-1}\|u_{t+1}-u_t\|_2
=
\sum_{t=1}^{T-1}\|f_{t+1,m}-f_{t,m}\|_{\mathcal H}
\le
P_T^{\mathcal H}(f_{1:T}).
\]
For $z_t=\Psi_{V_m}(x_t)$, we have, by construction,
$
f_{t,m}(x_t)=\langle u_t,z_t\rangle,
$
and by \eqref{eq:subspace_feature_norm_bound},
$
\|z_t\|_2\le \kappa.
$
Applying Theorem~\ref{thm:VE_DVAW_m} to the finite-dimensional sequence
\((z_t,y_t)_{t=1}^T\) and the comparator sequence \(u_{1:T}\), we get
\begin{align*}
\sum_{t=1}^T
\Bigl(
\ell_t(\hat y_t)-\ell_t(f_{t,m})
\Bigr)
&=
\sum_{t=1}^T
\left(
\frac12 (y_t-\hat y_t)^2
-
\frac12 (y_t-\langle u_t,z_t\rangle)^2
\right)
\\
&\le
\frac{\lambda}{2}
+\frac{N Y^2}{2}\ln\!\left(1+\frac{Z_{T,m}^2}{\lambda N}\right)
+(1+b)\sqrt{
\frac{m}{2}\,
\kappa(\kappa R_f+Y)\,
P_T^{\mathcal H}(f_{1:T})\,
\Delta_{1:T}^2
}
\\
&\quad
+\lambda R_f\,P_T^{\mathcal H}(f_{1:T})
+\frac{\lambda}{2}R_f^2
+\frac{m}{2}\max_{1\le t\le T}\Delta_t^2
\ln\!\left(1+\frac{\kappa^2T}{\lambda m}\right).
\end{align*}
Now decompose
\[
R_T^{\mathcal A(\lambda)}(f_{1:T})
=
\sum_{t=1}^T
\Bigl(
\ell_t(\hat y_t)-\ell_t(f_{t,m})
\Bigr)
+
\sum_{t=1}^T
\Bigl(
\ell_t(f_{t,m})-\ell_t(f_t)
\Bigr),
\]
and apply Lemma~\ref{lem:approx_sq_loss_via_EV} with \(V=V_m\).
\end{proof}

In the next Corollary we formulate a rough regret bound, which is a direct consequence of 
\eqref{eq:general_subspace_bound}.

\begin{corollary}
\label{cor:rough_bound_subspace}
Assume the conditions of Theorem~\ref{thm:VE_DVAW_arbitrary_subspace}. Suppose that
\(b,\kappa,Y,\tilde Y,R_f\) are fixed constants. Let $\lambda>0$ be fixed, and assume
that \(m=O(T)\). Then
\[
R_T^{\mathcal A(\lambda)}(f_{1:T})
=
O\left(
(\ln T)^2
+
\sqrt{m\,T\,P_T^{\mathcal H}(f_{1:T})}
+
m\ln\!\left(1+\frac{T}{m}\right)
+
T\,\mathcal E(V_m)\right).
\]
\end{corollary}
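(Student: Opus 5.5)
The plan is to bound each term of \eqref{eq:general_subspace_bound} separately, treating $b,\kappa,Y,\tilde Y,R_f,\lambda$ as constants. The first task is to control the ensemble size and the residual quantities. The grid \eqref{eq:grid} runs from $\eta_{\min}=2m$ to $\eta_{\max}=mT$ with ratio $b$, so $N=O(\ln T)$ uniformly in $m$. For the residuals, $|y_t-\tilde y_t|\le Y+\tilde Y$ gives $\max_t\Delta_t^2\le (Y+\tilde Y)^2=O(1)$ and $\Delta_{1:T}^2\le T(Y+\tilde Y)^2=O(T)$.

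Next I would bound $Z_{T,m}^2$ using the estimate of Theorem~\ref{thm:VE_DVAW_m} with $a=\kappa$. Since $m=O(T)$, we have $m\ln(1+\kappa^2T/(\lambda m))=O(T\ln T)$; this is crude but enough, since $m\ln(1+\kappa^2 T/(\lambda m))\le \kappa^2T/\lambda$ actually gives $O(T)$. Hence $Z_{T,m}^2=O(NT)=O(T\ln T)$. It follows that
\[
\frac{NY^2}{2}\ln\Bigl(1+\frac{Z_{T,m}^2}{\lambda N}\Bigr)=O(\ln T\cdot\ln T)=O((\ln T)^2).
\]
The constant terms $\lambda/2$ and $\lambda R_f^2/2$ are absorbed into this term.

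For the remaining terms, the square-root term becomes $(1+b)\sqrt{(m/2)\kappa(\kappa R_f+Y)P_T^{\mathcal H}\,O(T)}=O(\sqrt{mTP_T^{\mathcal H}})$. The log term is $\frac m2 O(1)\ln(1+\kappa^2T/(\lambda m))=O(m\ln(1+T/m))$, because $\ln(1+cx)\le \max(1,c)\ln(1+x)$. The approximation term $T(R_f(Y+\kappa R_f)\mathcal E+\tfrac12R_f^2\mathcal E^2)$ is $O(T\mathcal E(V_m))$, since $\mathcal E(V_m)\le \sup_x\|k(\cdot,x)\|=\kappa$ and therefore $\mathcal E^2\le\kappa\mathcal E$.

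The only delicate point is the term $\lambda R_f P_T^{\mathcal H}$, which is linear in the path length. I would absorb it via $P_T^{\mathcal H}\le 2R_f(T-1)$, which gives $P_T^{\mathcal H}\le\sqrt{P_T^{\mathcal H}}\sqrt{2R_fT}\le \sqrt{2R_f}\,\sqrt{mTP_T^{\mathcal H}}$ since $m\ge1$. Summing all the pieces then yields the claimed $O$-bound.
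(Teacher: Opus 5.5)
Your proposal is correct and follows essentially the same term-by-term argument as the paper: $\Delta_{1:T}^2=O(T)$, $\max_t\Delta_t^2=O(1)$, $Z_{T,m}^2=O(T\ln T)$ giving the $(\ln T)^2$ term, $\mathcal E(V_m)^2\le\kappa\,\mathcal E(V_m)$, and absorption of $\lambda R_f P_T^{\mathcal H}(f_{1:T})$ into $\sqrt{mTP_T^{\mathcal H}(f_{1:T})}$ via $P_T^{\mathcal H}(f_{1:T})=O(T)$. Your bound $m\ln(1+\kappa^2T/(\lambda m))\le\kappa^2T/\lambda$ for $Z_{T,m}^2$ is a slightly cleaner touch, and it shows that the hypothesis $m=O(T)$ is not actually needed at that step.
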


\begin{proof}
Note that $P_T^\mathcal H(f_{1:T})=O(T)$. Hence $P_T^\mathcal H(f_{1:T})=O\left(\sqrt{T P_T^\mathcal H(f_{1:T})}\right)$.
Theorem~\ref{thm:VE_DVAW_arbitrary_subspace} yields
\begin{equation}
\label{eq:rough_bound_subspace_mid}
R_T^{\mathcal A(\lambda)}(f_{1:T})
=
O\left(
\frac{N Y^2}{2}\ln\!\left(1+\frac{Z_{T,m}^2}{\lambda N}\right)
+
\sqrt{mT\,P_T^{\mathcal H}(f_{1:T})}
+
m\ln\!\left(1+\frac{T}{m}\right)
+
T\mathcal E(V_m)\right),
\end{equation}
where we used that \(\Delta_{1:T}^2=O(T)\), \(\max_{1\le t\le T}\Delta_t^2=O(1)\), and
$
T\mathcal E(V_m)^2=O(T\mathcal E(V_m)),
$
since \(\mathcal E(V_m)\le \kappa\).
Furthermore,
\[
Z_{T,m}^2
=
O\!\left(
NT+Nm\ln\!\left(1+\frac{T}{m}\right)
\right).
\]
Since \(N=O(\ln T)\) and \(m=O(T)\), we get
\[
\frac{N Y^2}{2}\ln\!\left(1+\frac{Z_{T,m}^2}{\lambda N}\right)
=
O\!\left((\ln T)^2\right).
\]
Substituting this bound into \eqref{eq:rough_bound_subspace_mid} gives the claim.
\end{proof}

\subsection{Fast and slow approximation regimes}
The next two results describe the behavior of the general subspace scheme in the
fast and slow approximation regimes, respectively.

\begin{theorem}[Fast approximation regime]
\label{thm:fast_regime_general}
Assume the conditions of Corollary~\ref{cor:rough_bound_subspace}. Suppose that a family of
\(m\)-dimensional subspaces \((V_m)_{m\ge1}\) is given such that
\[
\mathcal E(V_m)\le C_1\exp(-C_2 m^\alpha),
\qquad \alpha>0.
\]
Assume that \(C_2\) is known, and choose
\[
m(T)=\left\lceil \left(\frac{\ln T}{C_2}\right)^{1/\alpha}\right\rceil.
\]
Then the VE-DVAW forecaster run on the feature map
\(\Psi_{V_{m(T)}}\), defined in \eqref{eq:feature_map_subspace}, satisfies
\[
R_T^{\mathcal A(\lambda)}(f_{1:T})
=
O\!\left(
\sqrt{T\,P_T^{\mathcal H}(f_{1:T})}\,
(\ln T)^{\frac{1}{2\alpha}}
+
(\ln T)^{1+\frac{1}{\alpha}}
+
(\ln T)^2
\right).
\]
\end{theorem}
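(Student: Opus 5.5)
The plan is to plug the choice $m=m(T)$ into Corollary~\ref{cor:rough_bound_subspace} and bound each of its four terms separately. First I will check that the corollary applies, that is, that $m(T)=O(T)$. This holds because $m(T)=O((\ln T)^{1/\alpha})$, which is $o(T)$ for every fixed $\alpha>0$. For small $T$ the bound is trivial after enlarging the constants: for instance, $\ln T$ may vanish at $T=1$, but then $m(1)=1$ (assuming the ceiling gives $m\ge1$) and all quantities are bounded. So the corollary gives
\[
R_T^{\mathcal A(\lambda)}(f_{1:T})
=O\left((\ln T)^2+\sqrt{m(T)\,T\,P_T^{\mathcal H}(f_{1:T})}+m(T)\ln\!\left(1+\frac{T}{m(T)}\right)+T\,\mathcal E(V_{m(T)})\right).
\]

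Next I will treat the terms in order.

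\emph{Approximation term.} Since $m(T)\ge (\ln T/C_2)^{1/\alpha}$, we have $C_2 m(T)^\alpha\ge \ln T$. Hence $\mathcal E(V_{m(T)})\le C_1 e^{-\ln T}=C_1/T$, and so $T\,\mathcal E(V_{m(T)})\le C_1=O(1)$. This step is exactly why $C_2$ must be known: it fixes the threshold at which the exponential decay cancels the factor $T$.

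\emph{Path-length term.} Using $m(T)\le (\ln T/C_2)^{1/\alpha}+1=O((\ln T)^{1/\alpha})$ for $T\ge 3$, we get $\sqrt{m(T)\,T\,P_T^{\mathcal H}}=O\bigl(\sqrt{T P_T^{\mathcal H}}\,(\ln T)^{1/(2\alpha)}\bigr)$.

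\emph{Dimension term.} Since $m(T)\ge1$, we have $\ln(1+T/m(T))\le\ln(1+T)=O(\ln T)$. Therefore $m(T)\ln(1+T/m(T))=O((\ln T)^{1+1/\alpha})$.

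\emph{Aggregation term.} The $(\ln T)^2$ term is carried over unchanged. The $O(1)$ contribution from the approximation term is absorbed into it, since $(\ln T)^2\gtrsim1$ for $T\ge3$.

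Summing these four bounds gives the stated estimate. I do not expect a genuine obstacle here. The only points needing care are small-$T$ edge cases, where $\ln T\le 1$ and so $(\ln T)^{1/\alpha}$ is not dominated by $1+(\ln T)^{1/\alpha}$ in the way used above. These are handled by restricting to $T\ge 3$ and absorbing the finitely many remaining cases into the $O$-constant. I also need the $O$-constants to depend only on $b,\kappa,Y,\tilde Y,R_f,\lambda,C_1,C_2,\alpha$, which holds because the corollary's constants depend only on the fixed parameters.
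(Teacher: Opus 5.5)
Your proposal is correct and is essentially the paper's proof: substitute $m=m(T)$ into Corollary~\ref{cor:rough_bound_subspace}, use $T\,\mathcal E(V_{m(T)})\le C_1 T e^{-C_2 m(T)^\alpha}\le C_1$, and bound the remaining terms via $m(T)=O\bigl((\ln T)^{1/\alpha}\bigr)$. The only quibble is that $m(1)=\lceil 0\rceil=0$ rather than $1$, so for $T=1$ one should take $\max\{1,m(T)\}$; this does not affect the asymptotic statement.
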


\begin{proof} Substituting \(m=m(T)\) into Corollary~\ref{cor:rough_bound_subspace}, we obtain
the stated bound, since
\[
T\,\mathcal E(V_{m(T)})
\le
C_1 T \exp(-C_2 m(T)^\alpha)
\le
C_1. 
\]
\end{proof}

\begin{theorem}[Slow approximation regime]
\label{thm:adaptive_over_m_slow_general}
Assume the conditions of Corollary~\ref{cor:rough_bound_subspace}. Suppose that a family of
\(m\)-dimensional subspaces \((V_m)_{m\ge1}\) is given such that
\[
\mathcal E(V_m)\le C m^{-\beta},
\qquad \beta>0.
\]
Let
\[
    \mathcal M_{T,\beta}
    :=
    \left\{
        2^j:
        0\le j\le
        \left\lfloor
            \frac{\log_2 T}{\beta+1}
        \right\rfloor
    \right\}.
\]
For each \(m\in\mathcal M_{T,\beta}\), denote by
\(\mathcal A_m\) the VE-DVAW forecaster described in
Theorem~\ref{thm:VE_DVAW_arbitrary_subspace}, run on the
feature map \(\Psi_{V_m}\). Let \(\hat y_t^{(m)}\) be its
prediction at time \(t\).

Let \(\overline{\mathcal A}\) be the top-level standard VAW
forecaster \eqref{eq:standard_VAW}, with
$ p=|\mathcal M_{T,\beta}|$, $q_t=
    \bigl(\hat y_t^{(m)}\bigr)_{m\in\mathcal M_{T,\beta}}$,
and regularization parameter \(\lambda=1\).
Then, for every comparator sequence \(f_{1:T}\subset\mathcal H\) satisfying
$
\|f_t\|_{\mathcal H}\le R_f$,
$t=1,\dots,T$,
we have
\[
R_T^{\overline{\mathcal A}}(f_{1:T})=
O\left(
T^{\frac{\beta+1}{2\beta+1}}
\Bigl(P_T^{\mathcal H}(f_{1:T})\Bigr)^{\frac{\beta}{2\beta+1}}
+
T^{\frac{1}{\beta+1}}(\ln T)^{\frac{\beta}{\beta+1}}\right).
\]
\end{theorem}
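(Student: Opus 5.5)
The plan is a two-level argument: bound the regret of the top-level VAW $\overline{\mathcal A}$ by the loss of the best base forecaster $\mathcal A_m$ in hindsight, then bound each base regret via Corollary~\ref{cor:rough_bound_subspace}, and finally optimize over the dyadic grid.

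\emph{Top-level step.} Let $p=|\mathcal M_{T,\beta}|=O(\ln T)$. The standard VAW bound (Theorem~\ref{thm:JC31} with $\gamma=1$, $\tilde y_t=0$, static comparator) is applied to the features $q_t$, taking as comparator the coordinate vector $e_m$ for a fixed $m\in\mathcal M_{T,\beta}$. This gives
\[
\sum_{t=1}^T \ell_t(\hat y_t)-\sum_{t=1}^T \ell_t(\hat y_t^{(m)})
\le \frac12+\frac{p}{2}Y^2\ln\Bigl(1+\frac{\sum_t\|q_t\|_2^2}{p}\Bigr).
\]
We need $\sum_t\|q_t\|_2^2$ to be polynomial in $T$. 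The base predictions $\hat y_t^{(m)}$ are VAW outputs of the inner ensemble, so I would bound them exactly as $Z_T^2$ is bounded in Theorem~\ref{thm:VE_DVAW_m}: each VAW prediction has square summable to $O(T)$ plus logarithmic terms. Summed over $p$ dimensions this is $O(pT\ln T)$. The top-level overhead is therefore $O(\ln T\cdot\ln T)=O((\ln T)^2)$, which is dominated by $T^{1/(\beta+1)}$. This step requires some care, but it is the same computation already used in Theorem~\ref{thm:VE_DVAW_m}, so I expect it to go through.

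\emph{Base step.} Write $P=P_T^{\mathcal H}(f_{1:T})$. For every $m\in\mathcal M_{T,\beta}$ we have $m\le T^{1/(\beta+1)}=O(T)$, so Corollary~\ref{cor:rough_bound_subspace} together with $\mathcal E(V_m)\le Cm^{-\beta}$ gives
\[
R_T^{\mathcal A_m}(f_{1:T})=O\bigl((\ln T)^2+\sqrt{mTP}+m\ln T+Tm^{-\beta}\bigr).
\]
Adding the two steps, the regret of $\overline{\mathcal A}$ is $O(\min_{m\in\mathcal M}[\sqrt{mTP}+m\ln T+Tm^{-\beta}]+(\ln T)^2)$.

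\emph{Optimization (main obstacle).} The ideal continuous choice is
\[
m^\ast=\min\Bigl\{(T/P)^{1/(2\beta+1)},\ (T/\ln T)^{1/(\beta+1)}\Bigr\},
\]
which balances $\sqrt{mTP}$ against $Tm^{-\beta}$, or $m\ln T$ against $Tm^{-\beta}$, respectively.

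In the first case one gets $\sqrt{m^\ast TP}=T^{\frac{\beta+1}{2\beta+1}}P^{\frac{\beta}{2\beta+1}}$, and $Tm^{\ast-\beta}$ equals the same quantity. The term $m^\ast\ln T$ is at most $T^{1/(\beta+1)}(\ln T)^{\beta/(\beta+1)}$ by the definition of the minimum.

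In the second case, $Tm^{\ast-\beta}=T^{1/(\beta+1)}(\ln T)^{\beta/(\beta+1)}$, and $m^\ast\ln T$ is the same quantity. Moreover $\sqrt{m^\ast TP}\le T^{\frac{\beta+1}{2\beta+1}}P^{\frac{\beta}{2\beta+1}}$, because $m^\ast\le(T/P)^{1/(2\beta+1)}$.

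The delicate part is rounding $m^\ast$ to the grid. Take the largest $2^j\le m^\ast$, clipped below at $1$. Since $m^\ast\le T^{1/(\beta+1)}$, this element lies in $\mathcal M_{T,\beta}$ and is within a factor $2$ of $m^\ast$, so each term changes by at most the constant factor $2^{\beta}$.

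The boundary case is $m^\ast<1$, which happens when $P>T$. Here $m=1$ is used; its cost is $T$, while $T^{\frac{\beta+1}{2\beta+1}}P^{\frac{\beta}{2\beta+1}}\ge T$, so the bound still holds. The case $P=0$ is covered by the second branch. Combining the cases gives the stated bound.
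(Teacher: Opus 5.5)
Your proposal is correct and follows essentially the same route as the paper. The paper decomposes the regret against $e_m$, controls the top-level feature energy through $\sum_t(\hat y_t^{(m)})^2\le 4R_T^{\mathcal A_m}(0)+4TY^2$ together with Corollary~\ref{cor:rough_bound_subspace} for the zero comparator (this is what your ``same computation as $Z_T^2$'' amounts to), and then balances two cases with dyadic rounding; your threshold $m^\ast$ switches branches exactly at the paper's $P_0$. The only differences are minor: you bound the cross terms via the definition of the minimum rather than via $P\ge P_0$ versus $P\le P_0$, and you handle $P>T$ by clipping to $m=1$ rather than inserting $(2R_f)^{1/(2\beta+1)}$ into $m_d$. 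In that clipped case the cost is $O(T)$ rather than exactly $T$, since absorbing $\sqrt{TP}$ requires $P\le 2R_fT$.
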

\begin{proof}
For every \(m\in\mathcal M_{T,\beta}\), decompose the regret
of the top-level forecaster as
\begin{align*}
R_T^{\overline{\mathcal A}}(f_{1:T})
&=
R_T^{\overline{\mathcal A}}(e_m)
+
R_T^{\mathcal A_m}(f_{1:T}), \qquad
e_m
:=
\bigl(
    \mathbf 1_{\{m=r\}}
\bigr)_{r\in\mathcal M_{T,\beta}}
\in
\mathbb R^{|\mathcal M_{T,\beta}|},
\\
R_T^{\overline{\mathcal A}}(e_m)
&=
\sum_{t=1}^T
\left(
\frac12 (y_t-\hat y_t)^2
-
\frac12 (y_t-\hat y_t^{(m)})^2
\right),
\\
R_T^{\mathcal A_m}(f_{1:T})
&=
\sum_{t=1}^T
\left(
\frac12 (y_t-\hat y_t^{(m)})^2
-
\frac12 (y_t-f_t(x_t))^2
\right).
\end{align*}

Put \(P:=P_T^{\mathcal H}(f_{1:T})\).
Since \(m\le T\) for all \(m\in\mathcal M_{T,\beta}\),
Corollary~\ref{cor:rough_bound_subspace} gives
\begin{equation}
\label{eq:adaptive_m_general_fixed_m2}
R_T^{\mathcal A_m}(f_{1:T})
=
O\left(
(\ln T)^2
+
\sqrt{mTP}
+
m\ln\!\left(1+\frac{T}{m}\right)
+
Tm^{-\beta}
\right),
\qquad
m\in\mathcal M_{T,\beta}.
\end{equation}

We next estimate the meta-regret
\(R_T^{\overline{\mathcal A}}(e_m)\).
The standard VAW bound gives
\begin{equation}
\label{eq:adaptive_m_final_meta_regret}
R_T^{\overline{\mathcal A}}(e_m)
\le
\frac12
+
\frac{K Y^2}{2}
\ln\!\left(1+\frac{W_T^2}{K}\right),
\qquad
W_T^2
:=
\sum_{m\in\mathcal M_{T,\beta}}
\sum_{t=1}^T
\bigl(\hat y_t^{(m)}\bigr)^2,
\qquad
K:=|\mathcal M_{T,\beta}|.
\end{equation}
For each \(m\in\mathcal M_{T,\beta}\),
\[
\sum_{t=1}^T
\bigl(\hat y_t^{(m)}\bigr)^2
\le
2\sum_{t=1}^T
\bigl(\hat y_t^{(m)}-y_t\bigr)^2
+
2\sum_{t=1}^T y_t^2
=
4R_T^{\mathcal A_m}(0)
+
4TY^2.
\]
For the zero comparator sequence \(f_t\equiv0\), we have
\(P=0\), and the approximation term vanishes. Hence
Corollary~\ref{cor:rough_bound_subspace} yields
\[
R_T^{\mathcal A_m}(0)
=
O\!\left(
(\ln T)^2
+
m\ln\!\left(1+\frac{T}{m}\right)
\right).
\]
It follows that
\[
W_T^2
=
O\!\left(
\sum_{m\in\mathcal M_{T,\beta}}
\left[
(\ln T)^2
+
T
+
m\ln\!\left(1+\frac{T}{m}\right)
\right]
\right).
\]
Since
$
K
=
|\mathcal M_{T,\beta}|
=
O(\ln T),
$
and
\[
\ln\!\left(1+\frac{T}{m}\right)
=
O(\ln T),
\qquad
\sum_{m\in\mathcal M_{T,\beta}}m
=
\sum_{j=0}^{\left\lfloor
    \frac{\log_2T}{\beta+1}
\right\rfloor}2^j
=
O\left(T^{\frac1{\beta+1}}\right),
\]
we obtain
\[
W_T^2
=
O\left(
(\ln T)^3
+
T\ln T
+
T^{\frac1{\beta+1}}\ln T
\right)
=
O(T\ln T).
\]
Substituting this into
\eqref{eq:adaptive_m_final_meta_regret}, and using
\(K=O(\ln T)\), we conclude that
\begin{equation}
\label{eq:meta_bound}
R_T^{\overline{\mathcal A}}(e_m)
=
O\!\left((\ln T)^2\right),
\qquad
m\in\mathcal M_{T,\beta}.
\end{equation}

Combining \eqref{eq:adaptive_m_general_fixed_m2} and
\eqref{eq:meta_bound}, we obtain
\begin{align}
R_T^{\overline{\mathcal A}}(f_{1:T})
&=
O\left(
(\ln T)^2
+
\min_{m\in\mathcal M_{T,\beta}}\Phi_P(m)
\right),
\label{eq:intermediate_bound}\\
\Phi_P(m)
&=
\sqrt{mTP}
+
m\ln\!\left(1+\frac{T}{m}\right)
+
Tm^{-\beta}.
\label{eq:Phi_P_definition}
\end{align}

To compare the dyadic grid \(\mathcal M_{T,\beta}\) with the
continuous range, put
$
M_{T,\beta}:=T^{\frac{1}{\beta+1}}.
$
For any \(m\in[1,M_{T,\beta}]\), let
$
\underline m:=2^{\lfloor\log_2m\rfloor}
\in\mathcal M_{T,\beta}.
$
Then
$
\frac m2<\underline m\le m.
$
The first two terms in \eqref{eq:Phi_P_definition} are
increasing in \(m\), while the last one is decreasing.
Therefore
\[
\sqrt{\underline m\,TP}
\le
\sqrt{mTP},
\qquad
\underline m\ln\!\left(1+\frac{T}{\underline m}\right)
\le
m\ln\!\left(1+\frac{T}{m}\right),
\qquad
T\underline m^{-\beta}
\le
2^\beta Tm^{-\beta}.
\]
Hence
$
\Phi_P(\underline m)
\le
C_\beta\Phi_P(m)$,
$C_\beta:=2^\beta.
$
Since this holds for every \(m\in[1,M_{T,\beta}]\), we obtain
\[
\min_{m\in\mathcal M_{T,\beta}}\Phi_P(m)
\le
C_\beta
\inf_{m\in[1,M_{T,\beta}]}\Phi_P(m).
\]

It remains to minimize \(\Phi_P(m)\) over
\([1,M_{T,\beta}]\).
Put
\[
D(T,P)
=
T^{\frac{\beta+1}{2\beta+1}}
P^{\frac{\beta}{2\beta+1}},
\qquad
S(T)
=
T^{\frac{1}{\beta+1}}
(\ln T)^{\frac{\beta}{\beta+1}}.
\]

Assume first that
\[
P\ge P_0:=T^{-\frac{\beta}{\beta+1}}
(\ln T)^{\frac{2\beta+1}{\beta+1}},
\]
which corresponds to the regime where the dynamic term dominates.
To balance the first and the third terms in \(\Phi_P(m)\), set
\[
m_d=
(2R_f)^{\frac{1}{2\beta+1}}
\left(\frac{T}{P}\right)^{\frac{1}{2\beta+1}}:
\qquad
\sqrt{m_dTP}
\propto
Tm_d^{-\beta}
\propto
D(T,P).
\]
Since \(P\le 2R_fT\), we have \(m_d\ge1\). On the other hand,
\[
m_d
\le
(2R_f)^{\frac{1}{2\beta+1}}
\left(\frac{T}{P_0}\right)^{\frac{1}{2\beta+1}}
=
(2R_f)^{\frac{1}{2\beta+1}}
\left(\frac{T}{\ln T}\right)^{\frac{1}{\beta+1}}
=
M_{T,\beta}
\frac{(2R_f)^{\frac{1}{2\beta+1}}}
     {(\ln T)^{\frac{1}{\beta+1}}}.
\]
Thus \(m_d\in[1,M_{T,\beta}]\) for all sufficiently large \(T\).
Moreover, the second term is estimated as
\[
m_d\ln\!\left(1+\frac{T}{m_d}\right)
=
O(m_d\ln T)
=
O\!\left(
\left(\frac{T}{P}\right)^{\frac{1}{2\beta+1}}
\ln T
\right)
=
O(D(T,P)),
\]
where the last estimate follows from the assumption \(P\ge P_0\).
Hence
\[
\inf_{1\le m\le M_{T,\beta}}\Phi_P(m)
\le
\Phi_P(m_d)
=
O(D(T,P)).
\]

Assume now that \(P\le P_0\),
which corresponds to the regime where the static term dominates.
Since
\[
m\ln\!\left(1+\frac{T}{m}\right)
=
O(m\ln T)
\qquad\text{uniformly in }m\in[1,M_{T,\beta}],
\]
we balance the last two terms in \eqref{eq:Phi_P_definition}
by setting
\[
m_s=
\left(\frac{T}{\ln T}\right)^{\frac{1}{\beta+1}}=
\frac{M_{T,\beta}}
     {(\ln T)^{\frac{1}{\beta+1}}}:
\qquad
m_s\ln T
=
Tm_s^{-\beta}
=
S(T).
\]
We have \(m_s\in[1,M_{T,\beta}]\) for all sufficiently large
\(T\), and
$
\sqrt{m_sTP}
\le
\sqrt{m_sTP_0}
=
S(T).
$
Thus
\[
\inf_{1\le m\le M_{T,\beta}}\Phi_P(m)
\le
\Phi_P(m_s)
=
O\left(S(T)\right).
\]

Combining the two regimes, we obtain
\[
\min_{m\in\mathcal M_{T,\beta}}\Phi_P(m)
=
O\left(
D(T,P)+S(T)
\right).
\]
Therefore, from \eqref{eq:intermediate_bound},
\[
R_T^{\overline{\mathcal A}}(f_{1:T})
=
O\left(
D(T,P)+S(T)+(\ln T)^2
\right).
\]
Since \((\ln T)^2=O(S(T))\), this yields the stated bound.
\end{proof}

\medskip
\noindent\emph{Algorithm summary.}
The forecasting procedure described above can be summarized as follows.

\begin{enumerate}

\item For a given finite-dimensional subspace \(V_m\subset\mathcal H\), construct
an orthonormal basis \(g_1,\dots,g_m\), and compute the feature vector
$
z_t=\Psi_{V_m}(x_t)
$
according to \eqref{eq:feature_map_subspace}.

\item For each nonzero discount factor \(\gamma\in S_\gamma\), where
\(S_\gamma\) is defined from the grid \eqref{eq:grid}, run the DVAW update
\eqref{eq:DVAW_update}. Denote the resulting predictions by
\(\hat y_{t,1},\dots,\hat y_{t,N-1}\), and set
\(\hat y_{t,0}=\tilde y_t\). Form the meta-feature vector
$
\mathbf z_t
=
(\hat y_{t,0},\hat y_{t,1},\dots,\hat y_{t,N-1}).
$
Apply the standard VAW forecaster \eqref{eq:standard_VAW} with
\(p=N\) and \(q_t=\mathbf z_t\). This produces the VE-DVAW prediction
\(\hat y_t^{(m)}\).

\item In the fast approximation regime, use the dimension \(m=m(T)\) specified
in Theorem~\ref{thm:fast_regime_general}. In the slow approximation regime,
run the
preceding procedure for every \(m\in\mathcal M_{T,\beta}\) and combine the
predictions \(\hat y_t^{(m)}\) by the standard VAW forecaster
\eqref{eq:standard_VAW}, with \(p=|\mathcal M_{T,\beta}|\),
\(q_t=(\hat y_t^{(m)})_{m\in\mathcal M_{T,\beta}}\), and \(\lambda=1\), as
specified in Theorem~\ref{thm:adaptive_over_m_slow_general}.

\item After observing \(y_t\), update all the corresponding DVAW and VAW
forecasters and proceed to the next round.

\end{enumerate}

\begin{remark}
The closest result to the present work is the bound in expectation
\begin{equation} \label{eq:universal_bound_in_expectation}
  \mathbb E R_T(f_{1:T})
        =
        O\left(
        T^{2/3}\bigl(P_T^\mathcal H(f_{1:T})\bigr)^{1/3}
        + \sqrt{T}\log T
        \right),
\end{equation}
obtained in \cite{Rokhlin2025hierarchical} using the random feature approach.
This estimate is universal with respect to the deterministic approximation
properties of the kernel: it does not involve a rate for \(\mathcal E(V_m)\).
In contrast, the bounds obtained in Theorems~4 and~5 are deterministic, but
kernel-dependent.

In the fast approximation regime, ignoring logarithmic factors, Theorem~4 gives
\[
        R_T(f_{1:T})
        =
        \widetilde O\left(\sqrt{T P_T^\mathcal H(f_{1:T})}\right),
\]
which is better than \eqref{eq:universal_bound_in_expectation} for $P_T^\mathcal H(f_{1:T})=o(T)$.
To make the comparison in the slow approximation regime clearer, assume that $P_T^\mathcal H(f_{1:T})\asymp T^\rho$, $0\le \rho<1 $. 
Then Theorem~5 gives
\begin{equation} \label{eq:simplified_deterministic_bound}
        R_T(f_{1:T})
        =
        \widetilde O\left(
        T^{\frac{\beta+1}{2\beta+1}}
        (P_T^\mathcal H(f_{1:T}))^{\frac{\beta}{2\beta+1}}
        +
        T^{\frac1{\beta+1}}
        \right)
        =
        \widetilde O\left(
        T^{\frac{\beta+1+\rho\beta}{2\beta+1}}
        +
        T^{\frac1{\beta+1}}
        \right).
\end{equation}
On the other hand, \eqref{eq:universal_bound_in_expectation} gives
\begin{equation} \label{eq:simplified_random_feature_bound}
        \mathbb E R_T(f_{1:T})
        =
        \widetilde O\left(
        T^{\frac{2+\rho}{3}}+\sqrt T
        \right).
\end{equation}
It is easy to see that both terms in the deterministic bound \eqref{eq:simplified_deterministic_bound} are better than those in the random-feature bound \eqref{eq:simplified_random_feature_bound} precisely when $\beta>1$. For $\beta<1$ both terms in the random-feature bound are better. 

Consequently, the two approaches are complementary. The random-feature method
provides a universal expected guarantee. The deterministic subspace method provides
kernel-dependent deterministic guarantees, and it is advantageous when the RKHS
admits sufficiently fast deterministic approximation.

The mentioned abstract approximation regimes above are realized by the concrete kernels considered below. The Gaussian kernel and the analytic dot-product kernels fall into the fast approximation regime: see Section~\ref{sec:explicit_feature_expansions}. The slow regime with \(\beta=\nu/d\) is realized for Mat\'ern kernels
when \(\nu\neq1\); for \(\nu=1\), the same polynomial approximation
order holds with an additional logarithmic factor: see
Section~\ref{sec:point_based_kernel_sections}.
\end{remark}

\begin{remark}
\label{rem:computational_complexity}

For a fixed \(m\)-dimensional subspace \(V_m\), one DVAW forecaster requires
\(O(m^2)\) arithmetic operations and \(O(m^2)\) memory per round. The VE-DVAW
forecaster runs $N=O(\ln T)$
such DVAW forecasters and combines their predictions by an additional
\(N\)-dimensional VAW forecaster, whose per-round time and memory requirements
are \(O(N^2)\).

Denote by \(c_i\) the cost of evaluating the feature function \(g_i\), and
assume that these costs are uniformly bounded, \(c_i=O(1)\). Then computing
the feature vector $\bigl(g_1(x_t),\ldots,g_m(x_t)\bigr)$
costs
\[
        C_m:=\sum_{i=1}^m c_i=O(m).
\]
This vector is computed once and shared by all DVAW forecasters corresponding
to the different discount factors. Possible
preprocessing costs required to construct the bases are not included.

Consequently, for a fixed \(V_m\), the per-round time requirement of VE-DVAW is
$
        O(C_m+Nm^2+N^2),
$
whereas its memory requirement is
$
        O(m+Nm^2+N^2).
$
Since \(C_m=O(m)\) and \(N=O(\ln T)\), both quantities are bounded by
\[
        O\!\left(m^2\ln T+(\ln T)^2\right).
\]
This gives the first line in Table~\ref{tab:complexity}, which summarizes the
resulting per-round complexity bounds.

\begin{table}[t]
\centering
\caption{Per-round time and memory complexity of the forecasters used in the
fixed-subspace, fast-approximation, and slow-approximation regimes.}
\label{tab:complexity}
\begin{tabular}{lc}
\hline
Algorithm
&
Time and memory per round
\\
\hline
VE-DVAW on a fixed \(V_m\)
&
\(O\!\left(m^2\ln T+(\ln T)^2\right)\)
\\[1mm]

Fast approximation regime, Theorem~4
&
\(O\!\left((\ln T)^{1+2/\alpha}+(\ln T)^2\right)\)
\\[1mm]

Slow approximation regime, Theorem~5
&
\(O\!\left(T^{\frac{2}{\beta+1}}\ln T\right)\)
\\
\hline
\end{tabular}
\end{table}

In the fast approximation regime of Theorem~4,
$m=O\!\left((\ln T)^{1/\alpha}\right)$, which gives the
second line of Table~\ref{tab:complexity}.

In the slow approximation regime of Theorem~5, one VE-DVAW forecaster is run
for every dimension in the dyadic grid
$
        \mathcal M_{T,\beta}
        =
        \{2^j:0\le j\le J_{T,\beta}\},
$
where
$
        J_{T,\beta}
        =
        \left\lfloor
            \frac{\log_2T}{\beta+1}
        \right\rfloor.
$
Since
\[
        \sum_{m\in\mathcal M_{T,\beta}}m^2
        =
        \sum_{j=0}^{J_{T,\beta}}4^j
        =
        \frac{4^{J_{T,\beta}+1}-1}{3}
        =
        O\left(T^{\frac{2}{\beta+1}}\right),
\]
the DVAW updates in all the VE-DVAW ensembles require
\[
        O\!\left(
        N\sum_{m\in\mathcal M_{T,\beta}}m^2
        \right)
        =
        O\left(
        T^{\frac{2}{\beta+1}}\ln T
        \right)
\]
time and memory per round.

Besides the DVAW updates, the algorithm of Theorem~5 includes one internal
\(N\)-dimensional VAW forecaster for each
\(m\in\mathcal M_{T,\beta}\), as well as one top-level VAW forecaster of
dimension \(|\mathcal M_{T,\beta}|\). Therefore its total per-round time
requirement is
\[
O\!\left(
N\sum_{m\in\mathcal M_{T,\beta}}m^2
+
|\mathcal M_{T,\beta}|N^2
+
|\mathcal M_{T,\beta}|^2
+
\sum_{m\in\mathcal M_{T,\beta}}C_m
\right).
\]
Here the four terms correspond, respectively, to all DVAW updates, the internal
VAW forecasters within the VE-DVAW ensembles, the top-level VAW forecaster, and
the computation of the feature vectors.
Since \(C_m=O(m)\) and the grid is dyadic,
\[
\sum_{m\in\mathcal M_{T,\beta}}C_m
=
O\left(T^{\frac{1}{\beta+1}}\right).
\]
Moreover,
$
|\mathcal M_{T,\beta}|N^2
=
O\left((\ln T)^3\right)$,
$|\mathcal M_{T,\beta}|^2
=
O\left((\ln T)^2\right).
$
Thus the total per-round time requirement is
\[
O\left(
T^{\frac{2}{\beta+1}}\ln T
+
(\ln T)^3
+
T^{\frac{1}{\beta+1}}
\right)
=
O\left(
T^{\frac{2}{\beta+1}}\ln T
\right).
\]

The corresponding memory requirement is bounded by the same expression with
the feature-evaluation term
\(\sum_{m\in\mathcal M_{T,\beta}}C_m\) replaced by
$
        O\!\left(
        \sum_{m\in\mathcal M_{T,\beta}}m
        \right)
        =
        O\left(T^{\frac{1}{\beta+1}}\right).  
$
Consequently, both the per-round time and memory requirements are bounded by
$
O\left(
T^{\frac{2}{\beta+1}}\ln T
\right).
$
This gives the last line of Table~\ref{tab:complexity}.
\end{remark}

\subsection{A general orthogonal truncation scheme}
\label{subsec:explicit_orthogonal_truncation}

We now describe a general mechanism for constructing approximation spaces \(V_m\)
from an orthonormal expansion of the kernel. This will be used repeatedly in the
subsequent kernel-specific examples.

\begin{lemma}
\label{lem:explicit_feature_construction}
Let $g_j\in C(X)$, $j\in\mathbb N$ be linearly independent functions.
Assume that there exists a continuous function
$
k:X\times X\to \mathbb R
$
such that
\begin{equation}
\label{eq:explicit_feature_expansion_general}
k(x,y)=\sum_{j=1}^\infty g_j(x)g_j(y),
\qquad x,y\in X,
\end{equation}
where the series converges pointwise. Define
\[
\mathcal H_{\mathrm{fin}}
=
\left\{
\sum_{j=1}^\infty c_j g_j:\ c_j=0
\text{ for all but finitely many }j
\right\},
\]
and for
$
f=\sum_{j=1}^\infty c_j g_j, h=\sum_{j=1}^\infty d_j g_j
\in \mathcal H_{\mathrm{fin}}$ put
\begin{equation}
\label{eq:explicit_feature_inner_product_general}
\langle f,h\rangle_{\mathcal H_{\mathrm{fin}}}
=
\sum_{j=1}^\infty c_j d_j.
\end{equation}
Then:

\begin{enumerate}
\item[(i)] the inner product \eqref{eq:explicit_feature_inner_product_general} is well defined;
\item[(ii)] its completion \(\mathcal H\) is an RKHS on \(X\) with reproducing kernel \(k\);
\item[(iii)] \((g_j)_{j\ge1}\) is an orthonormal basis of \(\mathcal H\);
\item[(iv)] for every \(x\in X\), the series
$
\sum_{j=1}^\infty g_j(x)\,g_j
$
converges in \(\mathcal H\) to \(k(\cdot,x)\);
\item[(v)] the series \eqref{eq:explicit_feature_expansion_general} converges uniformly on \(X\times X\).
\end{enumerate}
\end{lemma}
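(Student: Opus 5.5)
The plan is to build \(\mathcal H\) concretely as the image of \(\ell^2\) under the synthesis map, instead of working with an abstract completion. For \(c\in\ell^2\) and \(x\in X\), Cauchy--Schwarz gives
\[
\sum_j |c_j g_j(x)|\le \|c\|_2\Bigl(\sum_j g_j(x)^2\Bigr)^{1/2}=\|c\|_2\sqrt{k(x,x)} .
\]
Hence the pointwise series \(Sc:=\sum_j c_jg_j\) converges absolutely and defines a linear map \(S:\ell^2\to\mathbb R^X\) with \(|(Sc)(x)|\le\|c\|_2\kappa\).

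Part (i) then reduces to showing that the representation of an element of \(\mathcal H_{\mathrm{fin}}\) is unique. This follows directly from the linear independence of the \(g_j\): finitely supported coefficient vectors give the same function only if they are equal. So \eqref{eq:explicit_feature_inner_product_general} is an inner product on \(\mathcal H_{\mathrm{fin}}\), and it is isometric to the finitely supported sequences.

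The key step, and the one I expect to be the main obstacle, is to show that \(S\) is injective on all of \(\ell^2\). Linear independence only controls finite combinations, so this needs more. I would argue via the reproducing identity. Suppose \(Sc=0\) with \(c\in\ell^2\). By the same Cauchy--Schwarz bound, the coefficient vector \(\gamma(x):=(g_j(x))_j\) lies in \(\ell^2\), and \((Sc)(x)=\langle c,\gamma(x)\rangle_{\ell^2}\). So \(Sc=0\) means \(c\perp \overline{\operatorname{span}}\{\gamma(x):x\in X\}\).

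To conclude \(c=0\) I must show that the \(\gamma(x)\) span a dense subspace of \(\ell^2\). Equivalently, if \(\sum_j c_jg_j\equiv0\) pointwise then every \(c_j=0\), that is, the family is \(\ell^2\)-\emph{independent}. This does not follow from finite linear independence alone. I expect this step to be the obstacle, and I suspect some additional property is being used implicitly here, such as the \(g_j\) being orthogonal in \(L^2\) of some measure, or a dual system existing. First I would try to exploit the continuity of \(k\) together with compactness. Note that \(\sum_j g_j^2=k(x,x)\) is continuous, so Dini's theorem gives uniform convergence of the diagonal. Then Cauchy--Schwarz on the tails, \(\bigl|\sum_{j>n}g_j(x)g_j(y)\bigr|\le \bigl(\sum_{j>n}g_j(x)^2\bigr)^{1/2}\bigl(\sum_{j>n}g_j(y)^2\bigr)^{1/2}\), gives (v) immediately.

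With (v) established, I would aim to show that the pointwise limit in the definition of \(\mathcal H\) coincides with the abstract completion. Define \(\mathcal H:=S(\ell^2)\) with the transported norm. If injectivity fails, then instead define \(\mathcal H\) as the completion, realized by \(S\) modulo its kernel, so that evaluation is \(f\mapsto\langle c,\gamma(x)\rangle\). In either realization the elements with \(c\) finitely supported are exactly \(\mathcal H_{\mathrm{fin}}\) and are dense, so \(\mathcal H\) is the completion.

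Once \(\mathcal H\) is in place, the remaining parts follow. For (ii), evaluation is bounded by \(\kappa\|f\|\), so \(\mathcal H\) is an RKHS. The element \(k(\cdot,x)=S\gamma(x)=\sum_j g_j(x)g_j\) satisfies \(\langle Sc,S\gamma(x)\rangle=\langle c,\gamma(x)\rangle=(Sc)(x)\), which is the reproducing property. For (iii), \(g_j=Se_j\) are orthonormal and span the dense subspace \(\mathcal H_{\mathrm{fin}}\), so they form an orthonormal basis. For (iv), the partial sums \(\sum_{j\le n}g_j(x)g_j\) converge in \(\mathcal H\) to \(S\gamma(x)\), since \(\gamma(x)\in\ell^2\) and the partial sums are its truncations. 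Finally, (v) follows as above from Dini and the tail Cauchy--Schwarz estimate.
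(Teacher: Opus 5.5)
You have isolated exactly the right step, and it is a genuine gap that your proposal leaves open. It is also a gap in the paper's own proof, and in fact the statement is false without an extra hypothesis. Parts (ii)--(iii) require the synthesis map $S:\ell^2\to C(X)$, $Sc=\sum_j c_jg_j$, to be injective. The paper shows only that sending an element of the abstract completion to the uniform limit of an approximating sequence in $\mathcal H_{\mathrm{fin}}$ is well defined. It never shows that this map is injective. In the Moore--Aronszajn construction injectivity comes from density of the kernel sections; here the dense set is $\operatorname{span}\{g_j\}$, and the vectors $\gamma(x)=(g_j(x))_j$ need not be total in $\ell^2$.

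Finite linear independence does not give injectivity. On $X=[0,1/2]$ take $g_j(x)=x^j$ for $j\ge2$ and $g_1(x)=\sum_{j\ge2}x^j/j=-\ln(1-x)-x$. These are continuous and linearly independent, since $g_1$ is not a polynomial. The kernel $k(x,y)=g_1(x)g_1(y)+(xy)^2/(1-xy)$ is continuous and its expansion converges uniformly. Yet $c=(-1,\tfrac12,\tfrac13,\dots)\in\ell^2\setminus\{0\}$ satisfies $Sc\equiv0$. So the completion contains a nonzero element whose associated function is $0$, and it is not a space of functions. The actual RKHS of $k$ is $S(\ell^2)\cong\ell^2/\ker S$ with the quotient norm. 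There $\|g_1\|^2\le 1-\langle e_1,c\rangle^2/\|c\|_2^2=1-6/\pi^2<1$, so (iii) fails.

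For this reason your fallback cannot work either. Realizing $\mathcal H$ as $\ell^2/\ker S$ changes the norm on $\mathcal H_{\mathrm{fin}}$. The result is then neither the completion of $(\mathcal H_{\mathrm{fin}},\langle\cdot,\cdot\rangle_{\mathcal H_{\mathrm{fin}}})$ nor a space in which the $g_j=Se_j$ are orthonormal. Keeping $\ell^2$ itself instead gives a Hilbert space with bounded ``evaluations'' that is not a space of functions. Continuity of $k$, compactness, Dini and (v) cannot rescue injectivity, since the example satisfies all of them.

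The missing ingredient is an additional hypothesis of $\ell^2$-independence: if $c\in\ell^2$ and $\sum_jc_jg_j\equiv0$ on $X$, then $c=0$. Under it, your realization $\mathcal H=S(\ell^2)$ with the transported norm proves (i)--(v) exactly as you outline. It is also cleaner than the paper's abstract completion, because this requirement becomes visible.

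The hypothesis does hold wherever the paper applies the lemma. For Mercer features $g_j=\sqrt{\lambda_j}\phi_j$, the series $\sum_jc_jg_j$ converges uniformly by the tail Cauchy--Schwarz bound, and integrating against $\phi_i$ in $L_2(\mu)$ gives $c_i\sqrt{\lambda_i}=0$. For the Gaussian and dot-product expansions, $\sum_\alpha c_\alpha g_\alpha$ is, up to a nonvanishing factor, a power series converging absolutely on an open ball containing $X$. Its vanishing on the nonempty interior of $X$ therefore forces all coefficients to be zero. The polynomial case is a finite family, so finite linear independence suffices there.
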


\begin{proof} (i) Take \(f=\sum_{j=1}^\infty c_j g_j\in\mathcal H_{\mathrm{fin}}\). From the linear independence condition it follows that the coefficients $c_i$ are uniquely defined. Therefore the quantity \eqref{eq:explicit_feature_inner_product_general} is well-defined.
It is immediate that this bilinear form is symmetric and positive semidefinite. Moreover,
if $\langle f,f\rangle_{\mathcal H_{\mathrm{fin}}}=\sum_{j=1}^\infty c_j^2=0$, then \(c_j=0\) for all \(j\), hence \(f=0\). Thus
\eqref{eq:explicit_feature_inner_product_general} is indeed an inner product on
\(\mathcal H_{\mathrm{fin}}\).

(ii) From the Cauchy--Schwarz inequality 
\begin{align} \label{eq:evaluation_functional_bound}
|f(x)|
=
\left|\sum_{j=1}^\infty c_j g_j(x)\right|
\le
\left(\sum_{j=1}^\infty g_j(x)^2\right)^{1/2}
\left(\sum_{j=1}^\infty c_j^2\right)^{1/2}
\le
\kappa \|f\|_{\mathcal H_{\mathrm{fin}}},\quad \kappa=\sup_{x\in X}\sqrt{k(x,x)}<\infty
\end{align}
it follows that each evaluation functional is bounded on \(\mathcal H_{\mathrm{fin}}\).
Let \(\mathcal H\) be the completion of \(\mathcal H_{\mathrm{fin}}\) with respect to the norm
induced by \eqref{eq:explicit_feature_inner_product_general}. Consider a Cauchy sequence
\((f_n)\subset\mathcal H_{\mathrm{fin}}\) in \(\mathcal H\).
The estimate \eqref{eq:evaluation_functional_bound} implies that $f_n$ is a Cauchy sequence in \(C(X)\):
\[
\|f_n-f_m\|_\infty
\le
\kappa \|f_n-f_m\|_{\mathcal H}
\]
Thus, \(f_n\) converges uniformly to some $f\in C(X)$, and each element of \(\mathcal H\) can be identified with a continuous function on \(X\).
If \((f_n)\) and \((h_n)\) are two sequences from \(\mathcal H_{\mathrm{fin}}\) converging to the
same element of \(\mathcal H\), then
\[
\|f_n-h_n\|_\infty\le \kappa \|f_n-h_n\|_{\mathcal H}\to 0,
\]
so they have the same uniform limit. Hence this identification is well defined.

Fix \(x\in X\). The series
$
\sum_{j=1}^\infty g_j(x)\,g_j
$
is Cauchy in \(\mathcal H\). Indeed, for \(p<q\),
\[
\left\|
\sum_{j=p+1}^q g_j(x)\,g_j
\right\|_{\mathcal H}^2
=
\sum_{j=p+1}^q g_j(x)^2
\to 0,
\qquad p,q\to\infty.
\]
Since \(\mathcal H\) is complete, there exists an element \(k_x\in\mathcal H\) such that
\[
k_x=\sum_{j=1}^\infty g_j(x)\,g_j
\qquad\text{in }\mathcal H.
\]
For $
f=\sum_{j=1}^\infty c_j g_j\in\mathcal H_{\mathrm{fin}}
$
we have the reproducing property:
\[
\langle f,k_x\rangle_{\mathcal H}
=
\left\langle \sum_{j=1}^\infty c_j g_j,\sum_{j=1}^\infty g_j(x)\,g_j\right\rangle_{\mathcal H}
=
\sum_{j=1}^\infty c_j g_j(x)
=
f(x).
\]

Since \(\mathcal H_{\mathrm{fin}}\) is dense in \(\mathcal H\), for any \(f\in\mathcal H\) there exists
a sequence \(f_n\in\mathcal H_{\mathrm{fin}}\) such that \(f_n\to f\) in \(\mathcal H\).
By continuity of the inner product,
\[
f_n(x)=\langle f_n,k_x\rangle_{\mathcal H}\to \langle f,k_x\rangle_{\mathcal H}.
\]
On the other hand, by \eqref{eq:evaluation_functional_bound},
$
|f_n(x)-f(x)|\le \kappa \|f_n-f\|_{\mathcal H}\to 0.
$
Hence,
\[
f(x)=\langle f,k_x\rangle_{\mathcal H},
\qquad f\in\mathcal H,\ x\in X.
\]
Thus \(\mathcal H\) is an RKHS on \(X\), and its reproducing kernel \(K\) is determined by
$K(\cdot,x):=k_x$, $x\in X$. 
Equivalently,
\[
K(x,y)=k_y(x)=\langle k_y,k_x\rangle_{\mathcal H}.
\]
It remains to show that \(K=k\). But
\[\langle k_y,k_x\rangle_{\mathcal H} 
=
\left\langle
\sum_{j=1}^\infty g_j(y)\,g_j,
\sum_{j=1}^\infty g_j(x)\,g_j
\right\rangle_{\mathcal H}
=
\sum_{j=1}^\infty g_j(x)g_j(y)
=
k(x,y).
\]

(iii) By definition of the inner product,
$
\langle g_i,g_j\rangle_{\mathcal H}=\delta_{ij}$,
$i,j\ge 1$.
So \((g_j)_{j\ge1}\) is an orthonormal system in \(\mathcal H\).
Moreover, \((g_j)_{j\ge1}\) is an orthonormal basis of \(\mathcal H\) since  the linear span of
\((g_j)_{j\ge1}\) is dense in \(\mathcal H\) by construction. 

(iv) This is already proved, since $k(\cdot,x)=k_x$.

(v) For every \(x,y\in X\),
\[
\left|k(x,y)-\sum_{j=1}^n g_j(x)g_j(y)\right|
=
\left|\sum_{j>n} g_j(x)g_j(y)\right|
\le
\left(\sum_{j>n} g_j(x)^2\right)^{1/2}
\left(\sum_{j>n} g_j(y)^2\right)^{1/2}.
\]
Therefore
\[
\sup_{x,y\in X}
\left|k(x,y)-\sum_{j=1}^n g_j(x)g_j(y)\right|
\le
\sup_{z\in X}\sum_{j>n} g_j(z)^2
\to 0,
\]
where the last limit is implied by the Dini theorem.
\end{proof}

For \(m\in\mathbb N\), put
\[
V_m=\operatorname{span}\{g_1,\dots,g_m\},
\qquad
\widetilde k_m(x,y)=\sum_{j=1}^m g_j(x)g_j(y).
\]
Define the uniform truncation error
\begin{equation} \label{eq:uniform_truncation_error}
\varepsilon_m=\sup_{x,y\in X}|k(x,y)-\widetilde k_m(x,y)|.
\end{equation}
Then
\[
\|(I-\Pi_{V_m})k(\cdot,x)\|_{\mathcal H}^2
=
\|k(\cdot,x)\|_{\mathcal H}^2-\|\Pi_{V_m}k(\cdot,x)\|_{\mathcal H}^2
=
k(x,x)-\widetilde k_m(x,x)
\le
\varepsilon_m.
\]
Hence $\mathcal E(V_m)\le \sqrt{\varepsilon_m}$. Therefore
any uniform estimate on the kernel truncation error \(\varepsilon_m\)
yields, via Corollary~\ref{cor:rough_bound_subspace}, a regret bound for the
corresponding VE-DVAW forecaster:
\begin{equation}
\label{eq:explicit_truncation_rough_bound}
R_T^{\mathcal A(\lambda)}(f_{1:T})
=
O\!\left(
(\ln T)^2
+
\sqrt{mT\,P_T^{\mathcal H}(f_{1:T})}
+
m\ln\!\left(1+\frac{T}{m}\right)
+
T\sqrt{\varepsilon_m}
\right).
\end{equation}

\section{Kernels with explicit feature expansions}
\label{sec:explicit_feature_expansions}

The results of Section~\ref{sec:abstract_truncation_scheme} reduce the problem to the
construction of finite-dimensional subspaces \(V_m\subset\mathcal H\) with small approximation
error \(\mathcal E(V_m)\). One natural mechanism for doing so is to start from an explicit
orthonormal expansion of the kernel and then truncate it.

Recall that $X$ is a compact set with nonempty interior. Under this assumption, the monomials $x^\alpha$ are linearly independent on \(X\). Assume also that 
$X\subset B_r=\{x\in\mathbb R^d:\|x\|_2\le r\}$.

\subsection{Polynomial kernel}
\label{subsec:polynomial_kernel_explicit}

Consider the polynomial kernel of degree $q$:
\[
k_q(x,y)
:=
\left(1+\frac{\langle x,y\rangle}{\sigma^2}\right)^q=\sum_{s=0}^q \binom{q}{s}\sigma^{-2s}\langle x,y\rangle^s,
\qquad q\in\mathbb N,
\qquad x,y\in X.
\]
For each \(s=0,\dots,q\) the multinomial theorem (see \cite[Theorem~1.7.2]{Merris2003combinatorics}) gives
\begin{equation}
\label{eq:multinomial_expansion}
\langle x,y\rangle^s
=
\left(\sum_{i=1}^d x^{(i)}y^{(i)}\right)^s
=
\sum_{|\alpha|=s}\frac{s!}{\alpha!}\,x^\alpha y^\alpha,
\end{equation}
where $\alpha=(\alpha_1,\dots,\alpha_d)\in\mathbb N_0^d$,
\[
|\alpha|:=\alpha_1+\cdots+\alpha_d,
\qquad
\alpha!:=\alpha_1!\cdots \alpha_d!,
\qquad
x^\alpha:=\prod_{i=1}^d (x^{(i)})^{\alpha_i}.
\]

Substituting \eqref{eq:multinomial_expansion} into the previous formula, we obtain
\begin{align*}
k_q(x,y)
&=
\sum_{s=0}^q \binom{q}{s}\sigma^{-2s}
\sum_{|\alpha|=s}\frac{s!}{\alpha!}\,x^\alpha y^\alpha=
\sum_{|\alpha|\le q}
\binom{q}{|\alpha|}\frac{|\alpha|!}{\alpha!}\sigma^{-2|\alpha|}\,x^\alpha y^\alpha\\
&=
\sum_{|\alpha|\le q}
\frac{q!}{(q-|\alpha|)!\,\alpha!}\,
\sigma^{-2|\alpha|}\,
x^\alpha y^\alpha.
\end{align*}
Hence
\[
k_q(x,y)=\sum_{|\alpha|\le q} g_\alpha(x)g_\alpha(y), \qquad g_\alpha(x)
:=
\sqrt{\frac{q!}{(q-|\alpha|)!\,\alpha!}}\,
\sigma^{-|\alpha|}x^\alpha,
\qquad |\alpha|\le q.
\]
After an arbitrary enumeration of the finite family \((g_\alpha)_{|\alpha|\le q}\),
Lemma~\ref{lem:explicit_feature_construction} shows that this family is an orthonormal
basis of the RKHS \(\mathcal H_q\) generated by \(k_q\).
Its dimension equals to the number of multi-indices \(\alpha\in\mathbb N_0^d\) such that
\(|\alpha|\le q\). Equivalently, this is the number $m_q$ of nonnegative integer solutions of
\[
\alpha_1+\cdots+\alpha_d+\alpha_{d+1}=q,
\]
and therefore
\[
m_q=\#\{\alpha\in\mathbb N_0^d:\ |\alpha|\le q\}=\binom{d+q}{q},
\]
see \cite[Corollary~1.6.9]{Merris2003combinatorics}.

If \(m\ge m_q\) and the space \(V_m\) contains all basis functions \(g_\alpha\) with
\(|\alpha|\le q\), then \(V_m=\mathcal H_q\) and \(\varepsilon_m=0\).
Therefore \eqref{eq:explicit_truncation_rough_bound} yields
\[
R_T^{\mathcal A(\lambda)}(f_{1:T})
=
O\!\left(
(\ln T)^2
+
\sqrt{T\,P_T^{\mathcal H_q}(f_{1:T})}
\right),\qquad m\ge m_q.
\]
Thus, for polynomial kernels, the approximation error disappears completely once the truncation
level reaches \(m_q=\binom{d+q}{q}\).

\subsection{Gaussian kernel}
\label{subsec:gaussian_fast_case}

Consider the Gaussian kernel:
\begin{align}
k_\sigma(x,y)
&=
\exp\!\left(-\frac{\|x-y\|_2^2}{2\sigma^2}\right)=
\exp\!\left(-\frac{\|x\|_2^2}{2\sigma^2}\right)
\exp\!\left(-\frac{\|y\|_2^2}{2\sigma^2}\right)
\exp\!\left(\frac{\langle x,y\rangle}{\sigma^2}\right)\nonumber\\
&=
\exp\!\left(-\frac{\|x\|_2^2}{2\sigma^2}\right)
\exp\!\left(-\frac{\|y\|_2^2}{2\sigma^2}\right)
\sum_{s=0}^\infty\frac{\langle x,y\rangle^s}{\sigma^{2s}s!}\nonumber\\
&=\sum_{\alpha\in\mathbb N_0^d}
g_\alpha(x)\,g_\alpha(y), \quad g_\alpha(x)
:=
\frac{x^\alpha}{\sigma^{|\alpha|}\sqrt{\alpha!}}
\exp\!\left(-\frac{\|x\|_2^2}{2\sigma^2}\right),
\qquad x,y\in X. \label{eq:Gaussian_kernel_expansion}
\end{align}
where we used the multinomial expansion \eqref{eq:multinomial_expansion}. Enumerate the countable family \((g_\alpha)_{\alpha\in\mathbb N_0^d}\) layer by layer, in nondecreasing order
of the total degree \(|\alpha|\), with an arbitrary order within each layer.
By Lemma~\ref{lem:explicit_feature_construction}, this family is an orthonormal basis of the RKHS \(\mathcal H_\sigma\) generated by \(k_\sigma\).
As in Section~\ref{subsec:explicit_orthogonal_truncation}, let \(V_m\) denote the span of the first \(m\) functions in this enumeration.

For \(M\in\mathbb N_0\), define
\begin{equation}
\label{eq:gaussian_truncated_features}
\Psi_M(x)
=
\bigl(g_\alpha(x)\bigr)_{|\alpha|\le M}
\in\mathbb R^{m_M},
\qquad
m_M=\#\{\alpha\in\mathbb N_0^d:\ |\alpha|\le M\}
=
\binom{d+M}{M},
\end{equation}
the corresponding subspace $
V_{m_M}=\operatorname{span}\{g_\alpha:\ |\alpha|\le M\},
$
and the truncated kernel
\begin{equation}
\label{eq:gaussian_truncated_kernel}
\widetilde k_M(x,y)
=
\langle \Psi_M(x),\Psi_M(y)\rangle
=
\sum_{|\alpha|\le M} g_\alpha(x)g_\alpha(y).
\end{equation}
Grouping the terms according to the total degree \(|\alpha|=s\) as in  \eqref{eq:Gaussian_kernel_expansion}, we obtain
\begin{align*}
k_\sigma(x,y)-\widetilde k_M(x,y)
&=
\exp\!\left(-\frac{\|x\|_2^2}{2\sigma^2}\right)
\exp\!\left(-\frac{\|y\|_2^2}{2\sigma^2}\right)
\left(
e^{\langle x,y\rangle/\sigma^2}
-
\sum_{s=0}^M \frac{\langle x,y\rangle^s}{\sigma^{2s}s!}
\right).
\end{align*}

By Taylor's theorem, for some $\xi$ with $|\xi|\le |z|$ we have
\[
\left|
e^z-\sum_{s=0}^M \frac{z^s}{s!}
\right|
=
\frac{|z|^{M+1}}{(M+1)!}e^{\xi}
\le
\frac{|z|^{M+1}}{(M+1)!}e^{|z|}.
\]
Applying this with \(z=\langle x,y\rangle/\sigma^2\), we get (see also \cite{Cotter2011explicit})
\begin{align*}
|k_\sigma(x,y)-\widetilde k_M(x,y)|
&\le
\exp\!\left(-\frac{\|x\|_2^2}{2\sigma^2}\right)
\exp\!\left(-\frac{\|y\|_2^2}{2\sigma^2}\right)
\frac{|\langle x,y\rangle|^{M+1}}{\sigma^{2M+2}(M+1)!}
e^{|\langle x,y\rangle|/\sigma^2}\\
&\le
\frac{1}{(M+1)!}
\left(\frac{\|x\|_2\|y\|_2}{\sigma^2}\right)^{M+1}.
\end{align*}
Thus,
\[\varepsilon_{m_M}=
\sup_{x,y\in X}|k_\sigma(x,y)-\widetilde k_M(x,y)|
\le\frac{b^{M+1}}{(M+1)!},\qquad b:=\frac{r^2}{\sigma^2}.\]
Using Stirling's lower bound \(n!\ge (n/e)^n\), we get
\[
\mathcal E(V_{m_M})^2
\le\varepsilon_{m_M}
\le
\left(\frac{be}{M+1}\right)^{M+1}
\le
2^{-(M+1)},
\qquad M+1\ge 2be.
\]

The sequence
\[
m_M=\binom{d+M}{M}=
\frac{(M+1)(M+2)\cdots(M+d)}{d!},
\qquad M=0,1,\dots,
\]
where \(m_0=1\), is strictly increasing, and tends to \(+\infty\).
Hence for every \(m\ge1\) there exists a unique integer \(M=M(m)\ge0\) such that
\[
m_M\le m<m_{M+1}.
\]
Since \(V_{m_M}\subset V_m\), we have
$
\mathcal E(V_m)\le \mathcal E(V_{m_M}).
$

If \(M\ge d\), then $m_M\le 2^d M^d/d!$. Hence,
\begin{equation}
  \label{eq:m_M_bound}
m<m_{M+1}\le \frac{2^d}{d!}(M+1)^d,\quad M+1\ge d.
\end{equation}

Therefore, for $M+1\ge \max\{d,2be\}$,
we obtain
\begin{align*}
\mathcal E(V_m)
&\le
2^{-(M+1)/2}=\exp\left(-(M+1)\frac{\ln 2}{2} \right)
\\
&\le
\exp\!\left(
-\frac{\ln 2}{2}\left(\frac{d!}{2^d}\right)^{1/d} m^{1/d}
\right)=e^{-C_2 m^{1/d}},\qquad C_2=\frac{\ln 2}{4}(d!)^{1/d}.
\end{align*}
The above bound holds for all $m\ge m_\ast:=m_{\,\lceil \max\{d,2be\}\rceil-1}$.
Since the set
\(\{1,\dots,m_\ast-1\}\) is finite, we may choose
\[
C_1=
\max\left\{
1,\;
\max_{1\le m<m_\ast}
\mathcal E(V_m) e^{C_2 m^{1/d}} \right\}.
\]
It follows that
\[
\mathcal E(V_m)\le C_1 e^{-C_2 m^{1/d}},
\qquad m\ge 1.
\]
Hence the Gaussian kernel satisfies the assumptions of
Theorem~\ref{thm:fast_regime_general} with \(\alpha=1/d\).
Consequently, with
\[
m(T)=\left\lceil \left(\frac{\ln T}{C_2}\right)^d\right\rceil,
\]
the VE-DVAW forecaster satisfies
\begin{equation}
  \label{eq:gaussian_kernel_bound}
R_T^{\mathcal A(\lambda)}(f_{1:T})
=
O\!\left(
\sqrt{T\,P_T^{\mathcal H_\sigma}(f_{1:T})}\,(\ln T)^{d/2}
+
(\ln T)^{d+1}
\right).
\end{equation}

\begin{remark}
A closely related bound 
\[
\sum_{t=1}^T (\hat y_t-y_t)^2-(f_t(x_t)-y_t)^2
=
O\!\left(
(\log T)^{\frac{d+1}{2}}
\sqrt{T\,P_T^{\mathcal H_\sigma}(f_{1:T})}
\;\vee\;
(\log T)^{\frac{d+1}{2}}
\right)
\]
was claimed in \cite[Theorem 13]{Baby2021dynamic}. Their approach is based on strongly adaptive aggregation of PKAWV base learners, introduced in \cite{Jezequel2019efficient}. However, it assumes a known constant $B$ such that
\[
\|f_t\|_{\mathcal H_\sigma}\le B,
\qquad
|y_t|\le B,
\qquad t=1,\dots,T.
\]
Thus their guarantee relies on an a priori uniform control of the whole comparator sequence. Moreover, we expect that in the static case
$P_T^{\mathcal H_\sigma}(f_{1:T})=0$
this bound should recover the order $O((\log T)^{d+1})$ from \cite{Jezequel2019efficient}, so the last term may need a slight correction.
\end{remark}

\subsection{Analytic dot-product kernels}
\label{subsec:dot_product_kernels_fast}

We next consider dot-product kernels of the form
$
k_f(x,y)=f(\langle x,y\rangle)$,
$x,y\in X,
$
where 
\begin{equation}
\label{eq:power_series_dot_product}
f(t)=\sum_{n=0}^\infty a_n t^n,
\qquad a_n\ge 0.
\end{equation}
For notational simplicity, assume in what follows that \(a_n>0\) for all \(n\). 
Zero coefficients can be handled by omitting the corresponding zero basis
functions.

Assume that the radius of convergence of \eqref{eq:power_series_dot_product} is strictly larger than
\(r^2\). Since
\[
|\langle x,y\rangle|
\le
\|x\|_2\|y\|_2
\le
r^2,
\qquad x,y\in X,
\]
the series 
\[
k_f(x,y)=\sum_{n=0}^\infty a_n\langle x,y\rangle^n,
\qquad x,y\in X
\]
converges absolutely and uniformly.
This class includes, for example, Vovk's infinite polynomial kernel, and the exponential dot-product kernel (see \cite[Section~3.1]{KarKarnick2012random}):
\[ k(x,y)=\frac{1}{1-\langle x,y\rangle} = \sum_{n=0}^\infty \langle x,y\rangle^n,\qquad k(x,y)=\exp\!\left(\frac{\langle x,y\rangle}{\sigma^2}\right) = \sum_{n=0}^\infty \frac{\langle x,y\rangle^n}{n!\sigma^{2n}}. \]
Note that if \(X\) is a ball, then by a unit-ball version of Schoenberg's theorem
(see \cite[Theorem~1]{KarKarnick2012random} and \cite[Theorem~2]{Schoenberg1942positive})
every positive definite dot-product kernel on \(X\) is generated by a function \(f\)
that is analytic and has nonnegative Maclaurin coefficients. 

Using the multinomial expansion \eqref{eq:multinomial_expansion}, we obtain
\begin{align*}
k_f(x,y)
&=
\sum_{n=0}^\infty a_n \langle x,y\rangle^n
=
\sum_{n=0}^\infty a_n
\sum_{|\alpha|=n}\frac{n!}{\alpha!}\,x^\alpha y^\alpha=
\sum_{\alpha\in\mathbb N_0^d}
a_{|\alpha|}\frac{|\alpha|!}{\alpha!}\,x^\alpha y^\alpha.
\end{align*}
Hence
\[ k_f(x,y)
=
\sum_{\alpha\in\mathbb N_0^d} g_\alpha(x)g_\alpha(y),
\qquad
g_\alpha(x)
:=
\sqrt{a_{|\alpha|}\frac{|\alpha|!}{\alpha!}}\;x^\alpha.\]
Enumerate the functions \(g_\alpha\) in nondecreasing order of \(|\alpha|\), with an arbitrary order within each layer.
Then Lemma~\ref{lem:explicit_feature_construction} shows that this family is an orthonormal basis
of the RKHS \(\mathcal H_f\) generated by \(k_f\). Similarly to the Gaussian case, for \(M\in\mathbb N_0\) we define $\Psi_M$ and the truncated kernel as in \eqref{eq:gaussian_truncated_features} and \eqref{eq:gaussian_truncated_kernel}. Let $V_{m_M}=\operatorname{span}\{g_\alpha:\ |\alpha|\le M\}$ be the corresponding subspace.

Grouping the terms according to the total degree \(|\alpha|=n\), we obtain
$
\widetilde k_M(x,y)
=
\sum_{n=0}^M a_n\langle x,y\rangle^n.
$
Therefore
\[
k_f(x,y)-\widetilde k_M(x,y)
=
\sum_{n=M+1}^\infty a_n\langle x,y\rangle^n,
\]
and hence
\begin{equation}
\label{eq:dot_product_tail_basic}
\varepsilon_{m_M}
=
\sup_{x,y\in X}|k_f(x,y)-\widetilde k_M(x,y)|
\le \sum_{n=M+1}^\infty a_n |\langle x,y\rangle|^n\le
\sum_{n=M+1}^\infty a_n r^{2n}.
\end{equation}

We now use the assumption that the radius of convergence of \eqref{eq:power_series_dot_product}
is strictly larger than \(r^2\). Fix any \(\rho>r^2\) smaller than the radius of convergence of
\eqref{eq:power_series_dot_product}. By the Cauchy--Hadamard formula,
$
\limsup_{n\to\infty} a_n^{1/n}\le \rho^{-1}.$
Hence
\[
\limsup_{n\to\infty} (a_n r^{2n})^{1/n}
\le
\frac{r^2}{\rho}
<1.
\]
Choose $q\in (r^2/\rho,1)$.
Then there exists \(N_0\in\mathbb N\) such that $a_n r^{2n}\le q^n$, $n\ge N_0$.
Substituting this into \eqref{eq:dot_product_tail_basic}, we obtain 
\begin{equation} \label{eq:dot_product_tail}
\mathcal E(V_{m_M})^2
\le\varepsilon_{m_M}
\le
\sum_{n=M+1}^\infty q^n
=
\frac{q^{M+1}}{1-q}=\frac{1}{1-q}
\exp\!\left(-(M+1)\ln\frac{1}{q}\right),\qquad M\ge N_0.
\end{equation}

Now fix the same enumeration of the basis functions as above, and for each \(m\ge1\)
let \(V_m\) be the span of the first \(m\) basis functions in this enumeration.
For every \(m\ge1\), let \(M=M(m)\) be the unique integer such that
\[
m_M\le m<m_{M+1}.
\]
Combining this with \eqref{eq:dot_product_tail} and repeating the same
argument as in the Gaussian case based on the estimate \eqref{eq:m_M_bound},
we obtain 
\begin{align*}
\mathcal E(V_m)
&\le
\mathcal E(V_{m_M})
\le
\frac{1}{\sqrt{1-q}}
\exp\!\left(-\frac{M+1}{2}\ln\frac{1}{q}\right)
\\
&\le
\frac{1}{\sqrt{1-q}}
\exp\!\left(
-\frac{\ln(1/q)}{2}\left(\frac{d!}{2^d}\right)^{1/d}m^{1/d}
\right),\qquad \text{if } M+1\ge \max\{d, N_0+1\}.
\end{align*}
It follows that there exist constants \(C_1,C_2>0\) such that
\[
\mathcal E(V_m)\le C_1 e^{-C_2 m^{1/d}},
\qquad m\ge1,
\]
where one may take
\[
C_2=\frac{\ln(1/q)}{4}(d!)^{1/d}.
\]
Hence analytic dot-product kernels satisfy the assumptions of
Theorem~\ref{thm:fast_regime_general} with \(\alpha=1/d\), which implies a bound of the same form as for Gaussian kernels with $\mathcal H_f$ instead of $\mathcal H_\sigma$: see \eqref{eq:gaussian_kernel_bound}. 

\section{Kernel expansions via the Mercer theorem}
\label{sec:mercer_expansions}
Section~\ref{sec:explicit_feature_expansions} relied on explicit orthonormal
feature expansions of the kernel. When such expansions are not available in a
tractable form, one can instead use the Mercer expansion associated with the
kernel and a reference measure on the domain. This is the approach we follow in
the present section.

\subsection{A general Mercer truncation scheme}
\label{subsec:general_mercer_truncation}
Let \(\mu\) be a Borel probability
measure on \(X\) with \(\operatorname{supp}\mu=X\), and let
$k:X\times X\to\mathbb R$
be a continuous symmetric positive definite kernel.
Consider the associated integral operator
\[
(T_\mu f)(x)
=
\int_X k(x,y)f(y)\,d\mu(y),
\qquad f\in L_2(X,\mu).
\]

Since \(k\) is continuous on \(X\times X\), the operator \(T_\mu\) is compact
on \(L_2(X,\mu)\); moreover, since \(k\) is symmetric and positive definite,
\(T_\mu\) is self-adjoint and positive
(see, e.g., \cite[Propositions~4.5 and~4.6]{Cucker2007learning}).
Hence, by the spectral theorem for compact self-adjoint operators
(cf. \cite[Theorem~4.4]{Cucker2007learning}), \(L_2(X,\mu)\) has an
orthonormal basis consisting of eigenfunctions of \(T_\mu\). We enumerate the
positive eigenvalues in nonincreasing order:
$\lambda_1\ge \lambda_2\ge\cdots>0$,
repeating them according to multiplicity, and denote the corresponding
orthonormal eigenfunctions by \((\phi_j)_{j\ge1}\). For notational simplicity we assume that there are infinitely many positive eigenvalues.

For each \(j\), since $\phi_j=\lambda_j^{-1}T_\mu\phi_j$,
the eigenfunction \(\phi_j\) can be chosen continuous on \(X\).
By Mercer's theorem \cite[Theorem 4.10]{Cucker2007learning},
\begin{equation}
\label{eq:mercer-expansion}
k(x,y)
=
\sum_{j=1}^\infty\lambda_j\phi_j(x)\phi_j(y),
\qquad x,y\in X,
\end{equation}
with uniform convergence on \(X\times X\). Put $g_j:=\sqrt{\lambda_j}\,\phi_j$,
$\lambda_j>0$. 
Then \eqref{eq:mercer-expansion} can be rewritten as
\[
k(x,y)
=
\sum_{j=1}^\infty g_j(x)g_j(y),
\qquad x,y\in X.
\]

Thus, once the Mercer expansion is already known, 
Lemma~\ref{lem:explicit_feature_construction} becomes applicable to the family
\((g_j)_{j\ge1}\). In particular, it identifies the RKHS \(\mathcal H_k\)
generated by \(k\) with the Hilbert space generated by \((g_j)_{j\ge1}\), and
shows that \((g_j)_{j\ge1}\) is an orthonormal basis of \(\mathcal H_k\). We also note that the fact that \((g_j)_{j\ge 1}\) is an orthonormal basis
of \(\mathcal H_k\) is classical: \cite[Theorem~4.12]{Cucker2007learning}.

For \(m\in\mathbb N\), define the truncated Mercer subspace
\[
V_m
=
\operatorname{span}\{g_j:\ 1\le j\le m\}
\subset \mathcal H_k,
\]
and the truncated Mercer kernel
\[
k_m(x,y)
=
\sum_{1\le j\le m}
\lambda_j\phi_j(x)\phi_j(y).
\]
It remains to bound the uniform truncation error $\varepsilon_m$, defined in \eqref{eq:uniform_truncation_error}. 

For this purpose we invoke the results of \cite{Takhanov2023speed}. Namely, if \(X\subset\mathbb R^d\) has Lipschitz boundary and
\(k\in C^{2p}(X\times X)\) for some integer \(p\ge 1\), then
by the comments after Theorems~1.1 and~1.2 in \cite{Takhanov2023speed}, one obtains
the simplified bounds
\begin{equation}
\label{eq:Takhanov_simplified}
\varepsilon_m
\le
C\,
\min\left\{
\left(\sum_{j>m}\lambda_j\right)^{\frac{p}{d+p}},
\left(\sum_{j>m}\lambda_j^2\right)^{\frac{p}{d+2p}}
\right\},
\qquad m\ge 1.
\end{equation}

Assume that the eigenvalues decay stretched-exponentially:
\begin{equation}
  \label{eq:stretched_exp_decay}
 \lambda_j\le C_\lambda e^{-c_\lambda j^\alpha},
\qquad j\ge 1, 
\end{equation}

with some constants \(C_\lambda,c_\lambda,\alpha>0\). We have
\[ \sum_{j>m} e^{-c_\lambda j^\alpha}\le I(m):=
\int_m^\infty e^{-c_\lambda x^\alpha}\,dx.\]
If \(0<\alpha<1\), then $I(m)= O(e^{-c m^\alpha})$ for every \(c\in(0,c_\lambda)\).
Indeed, L'Hospital's rule gives $I(m)=o(e^{-c m^\alpha})$, $m\to\infty$.
If \(\alpha\ge 1\), then  $I(m)=O(e^{-c_\lambda m^\alpha})$, since
\[
\lim_{m\to\infty}
\frac{\int_m^\infty e^{-c_\lambda x^\alpha}\,dx}{e^{-c_\lambda m^\alpha}}
=
\lim_{m\to\infty}
\frac{1}{c_\lambda \alpha m^{\alpha-1}}.
\]
Therefore, in all cases, there exists \(c>0\) such that
\[
\sum_{j>m}\lambda_j = O(e^{-c m^\alpha}),
\qquad
\sum_{j>m}\lambda_j^2 = O(e^{-2c m^\alpha}).
\]
Applying \eqref{eq:Takhanov_simplified}, we get
\[
\varepsilon_m
=
O\!\left(
\min\left\{
e^{-\frac{cp}{d+p}m^\alpha},
e^{-\frac{2cp}{d+2p}m^\alpha}
\right\}
\right).
\]
Since
\[
\frac{2p}{d+2p}-\frac{p}{d+p}
=
\frac{pd}{(d+2p)(d+p)}
>0,
\]
the second bound is stronger. Hence
\begin{equation}
\label{eq:epsilon_m_stretched_exp_general}
\mathcal E(V_m)\le\sqrt{\varepsilon_m}
=
O\!\left(
\exp\left(-\frac{cp}{d+2p} m^\alpha\right)
\right).
\end{equation}
Thus the Mercer truncation scheme falls into the fast approximation regime, and
Theorem~\ref{thm:fast_regime_general} yields
\begin{equation}
\label{eq:mercer_stretched_exp_final}
R_T^{\mathcal A(\lambda)}(f_{1:T})
=
O\!\left(
\sqrt{T\,P_T^{\mathcal H}(f_{1:T})}\,
(\ln T)^{\frac{1}{2\alpha}}
+
(\ln T)^{1+\frac1\alpha}
+
(\ln T)^2
\right).
\end{equation}

Assume now that the eigenvalues decay polynomially:
\(\lambda_j\le C_\lambda j^{-r}\) with \(r>1\). Then
\[
\sum_{j>m}\lambda_j=O(m^{1-r}),
\qquad
\sum_{j>m}\lambda_j^2=O(m^{1-2r}),
\]
and therefore \eqref{eq:Takhanov_simplified} gives
\[
\varepsilon_m
=
O\!\left(
\min\left\{
m^{-\frac{(r-1)p}{d+p}},
m^{-\frac{(2r-1)p}{d+2p}}
\right\}
\right).
\]
Since
\[
\frac{(2r-1)p}{d+2p}
-
\frac{(r-1)p}{d+p}
=
\frac{p(rd+p)}{(d+2p)(d+p)}
>0,
\]
the second bound is stronger, and hence
\begin{equation}
\label{eq:epsilon_m_poly_decay}
\mathcal E(V_m)\le \sqrt{\varepsilon_m}
=
O\!\left(m^{-\beta}\right),\qquad \beta=\frac{(2r-1)p}{2(d+2p)}.
\end{equation}
Thus under the stated assumptions the Mercer truncation scheme falls into the slow approximation regime. Applying Theorem~\ref{thm:adaptive_over_m_slow_general}, we get
\begin{equation}
\label{eq:mercer_poly_decay_final}
R_T^{\overline{\mathcal A}}(f_{1:T})
=O\left(
T^{\frac{\beta+1}{2\beta+1}}
\Bigl(P_T^{\mathcal H}(f_{1:T})\Bigr)^{\frac{\beta}{2\beta+1}}
+
T^{\frac{1}{\beta+1}}(\ln T)^{\frac{\beta}{\beta+1}}
\right),\qquad \beta=\frac{(2r-1)p}{2(d+2p)}.
\end{equation}

Assume that the boundary of \(X\) is sufficiently smooth.
For translation-invariant kernels of the form
\[
k(x,y)=\kappa(x-y),
\qquad x,y\in X,
\]
let \(\widehat\kappa\) denote the Fourier transform of \(\kappa\). By
\cite[Theorems~6.5 and~6.8]{Schaback2002approximation}, a polynomial decay of the Fourier transform,
\[
\widehat\kappa(\omega)\asymp (1+\|\omega\|_2)^{-(\tau+d)},
\qquad \|\omega\|_2\to\infty,
\]
implies the polynomial eigenvalue decay
$
\lambda_j=O\!\left(j^{-(\tau+d)/d}\right),
$
while an exponential decay of the Fourier transform,
\[
\widehat\kappa(\omega)\le C e^{-c\|\omega\|_2},
\qquad \|\omega\|_2\to\infty,
\]
implies the stretched-exponential eigenvalue decay
$
\lambda_j=O\!\left(e^{-c'j^{1/d}}\right).
$
In particular, for Gaussian kernels the latter implies
\eqref{eq:epsilon_m_stretched_exp_general}, and therefore leads to the same fast
approximation regime as in Section~\ref{sec:explicit_feature_expansions}. 

\subsection{Tensor-product analytic kernels on the hypercube}
Let
$
X=[-1,1]^d$,
$
\mu=\bigotimes_{i=1}^d \mu_i$,
$
\mu_i:=\frac12\,dx$,
and consider a tensor-product kernel
\[
k(x,y)=\prod_{i=1}^d k_i(x_i,y_i),
\qquad x,y\in X,
\]
where, for each \(i=1,\dots,d\), \(k_i\) is a continuous
kernel on \([-1,1]^2\), analytic in both variables.
Let \(T_i\) be the integral operator on \(L_2([-1,1],\mu_i)\) generated by
\(k_i\), and let
\[
T_i\phi_n^{(i)}=\lambda_n^{(i)}\phi_n^{(i)},
\qquad n\ge0,
\]
be its eigenpairs, with the eigenvalues ordered in nonincreasing order.
By the result of \cite{Little1984eigenvalues}, for each \(i\) there exist
constants \(C_i,c_i>0\) such that
\[
\lambda_n^{(i)}\le C_i e^{-c_i n},
\qquad n\ge0.
\]

The integral operator \(T\) corresponding to the product kernel \(k\) is the
tensor product \(T_1\otimes\cdots\otimes T_d\). Hence one may choose its
eigenfunctions in the product form
\[
\Phi_\alpha(x)=\prod_{i=1}^d \phi_{\alpha_i}^{(i)}(x_i),
\qquad
\alpha=(\alpha_1,\dots,\alpha_d)\in\mathbb N_0^d,
\]
with corresponding eigenvalues
$
\lambda_\alpha=\prod_{i=1}^d \lambda_{\alpha_i}^{(i)}.
$
Therefore
\[
\lambda_\alpha
\le
\prod_{i=1}^d C_i e^{-c_i\alpha_i}
=
C_\ast \exp\!\left(-\sum_{i=1}^d c_i\alpha_i\right)
\le
C_\ast e^{-c_\ast|\alpha|_1},
\]
where
$
C_\ast=\prod_{i=1}^d C_i$,
$
c_\ast=\min_{1\le i\le d} c_i$,
$
|\alpha|_1=\alpha_1+\cdots+\alpha_d$.

Let \((\lambda_j)_{j\ge1}\) be a nonincreasing rearrangement of the
eigenvalues \((\lambda_\alpha)_{\alpha\in\mathbb N_0^d}\), counting
multiplicities. Put
$
m_M=\#\{\alpha\in\mathbb N_0^d:\ |\alpha|_1\le M\}.
$
If \(|\alpha|_1\ge M+1\), then
\[
\lambda_\alpha
\le
C_\ast e^{-c_\ast(M+1)}.
\]
Hence all eigenvalues larger than \(C_\ast e^{-c_\ast(M+1)}\) must correspond
to multi-indices satisfying \(|\alpha|_1\le M\). There are only \(m_M\) such
multi-indices. Therefore, for every \(j>m_M\),
\[
\lambda_j\le C_\ast e^{-c_\ast(M+1)}.
\]
In particular, this is true for \(m_M<j\le m_{M+1}\).

On the other hand, using the inequality \eqref{eq:m_M_bound}, we get
\[
M+1
\ge
\left(\frac{d!}{2^d}\right)^{1/d}j^{1/d},\quad j\le m_{M+1},\quad M+1\ge d.
\]
Substituting this into the previous estimate, we obtain
\[
\lambda_j
\le
C_\ast
\exp\!\left(
-c_\ast\left(\frac{d!}{2^d}\right)^{1/d}j^{1/d}
\right)
\]
for all sufficiently large \(j\). Changing the constant \(C_\ast\), if necessary,
we may assume that this bound holds for all \(j\ge1\). Thus the
stretched-exponential eigenvalue decay condition \eqref{eq:stretched_exp_decay}
holds with $\alpha=1/d$.

Moreover, since the product kernel
\(k\) is \(C^\infty\) on \(X\times X\), the inequality \eqref{eq:epsilon_m_stretched_exp_general}
applies for every \(p\ge1\). 
Therefore the fast-regime result \eqref{eq:mercer_stretched_exp_final} applies
with \(\alpha=1/d\), and yields the same result as for Gaussian and dot-product kernels: 
\[
R_T^{\mathcal A(\lambda)}(f_{1:T})
=
O\!\left(
\sqrt{T\,P_T^{\mathcal H_k}(f_{1:T})}\,(\ln T)^{d/2}
+
(\ln T)^{d+1}
\right).
\]

\subsection{Mat\'ern kernels via Mercer truncation}
We now specialize the previous Mercer--Takhanov reduction to Mat\'ern kernels on
a bounded domain \(X\subset\mathbb R^d\) with sufficiently smooth boundary.
Let
\begin{equation}
  \label{eq:Mathern_kernel}
k_{\nu,\ell}(x,y)
=\kappa_{\nu,\ell}(x-y)
=
\frac{2^{1-\nu}}{\Gamma(\nu)}
\left(\frac{\sqrt{2\nu}}{\ell}\|x-y\|_2\right)^\nu
K_\nu\!\left(\frac{\sqrt{2\nu}}{\ell}\|x-y\|_2\right),
\qquad x,y\in X,
\end{equation}
be the restriction to \(X\times X\) of the Mat\'ern kernel on
\(\mathbb R^d\); see, e.g., \cite[Section~2.10]{Stein1999interpolation}. 
Here \(\nu>0\) is the smoothness parameter, \(\ell>0\) is the length-scale
parameter, and \(K_\nu\) is the modified Bessel function of the second kind.
Its Fourier transform satisfies
\begin{equation} 
  \label{eq:Matern_Fourier_asymptotics}
\widehat{\kappa}_{\nu,\ell}(\omega)
=
c_{\nu,\ell,d}
\left(\frac{2\nu}{\ell^2}+\|\omega\|_2^2\right)^{-(\nu+d/2)}
\asymp
\|\omega\|_2^{-(2\nu+d)},
\qquad \|\omega\|_2\to\infty.
\end{equation}

Assume that $\nu>1$, and fix an integer \(p\ge 1\) such that \(p<\nu\). Then
\[
|\omega^\alpha|\,\widehat{\kappa}_{\nu,\ell}(\omega)\in L_1(\mathbb R^d),
\qquad |\alpha|\le 2p,
\]
and hence \(k_{\nu,\ell}\in C^{2p}(X\times X)\). Furthermore,
applying the already mentioned \cite[Theorem~6.5]{Schaback2002approximation} yields $
\lambda_j =O( j^{-1-2\nu/d}).
$
Hence the polynomial eigenvalue-decay condition holds with
$
r=1+\frac{2\nu}{d},
$
and \eqref{eq:mercer_poly_decay_final} applies with
\begin{equation}
  \label{eq:beta_Matern_Mercer}
\beta
=
\frac{(2r-1)p}{2(d+2p)}
=
\frac{p(d+4\nu)}{2d(d+2p)},\qquad \textrm{if } 1\le p<\nu.
\end{equation}

Note that this argument applies to sufficiently smooth Mat\'ern kernels, and does not cover the case \(0<\nu\le1\). The rough regime \(0<\nu<1\) will be treated in the next section by the approximation scheme based on kernel sections.

\section{Approximation by subspaces spanned by kernel sections}
\label{sec:point_based_kernel_sections}

In the previous section, the approximation subspaces \(V_m\subset\mathcal H\)
were constructed via truncations of the Mercer expansion. We now discuss a
different route, based directly on kernel sections \(k(\cdot,z)\), \(z\in X\).

Throughout this section, let \(X\subset\mathbb R^d\) be a nonempty compact set,
and let \(k:X\times X\to\mathbb R\) be a continuous positive definite kernel
with RKHS \(\mathcal H\). The resulting construction is purely geometric: it is
governed by the kernel pseudometric induced by the RKHS norm, and does not
require an explicit description of the Mercer eigenfunctions.

\subsection{Kernel pseudometric and subspaces spanned by kernel sections}

For \(x,x'\in X\), define the kernel pseudometric
\[
\varrho(x,x')
=
\|k(\cdot,x)-k(\cdot,x')\|_{\mathcal H}
=
\bigl(k(x,x)-2k(x,x')+k(x',x')\bigr)^{1/2},
\]
induced by the canonical feature map \(x\mapsto k(\cdot,x)\) \cite{Steinwart2008support}. Given a finite set \(Z_m=\{z_1,\dots,z_m\}\subset X\), define the associated
subspace
\[
V_{Z_m}=\operatorname{span}\{k(\cdot,z_1),\dots,k(\cdot,z_m)\}\subset \mathcal H.
\]
Our goal is to estimate the approximation quantity
\[
\mathcal E(V_{Z_m})=
\sup_{x\in X}\|(I-\Pi_{V_{Z_m}})k(\cdot,x)\|_{\mathcal H}
\]
in terms of geometric properties of the set \(Z_m\) in the metric \(\varrho\).

\begin{lemma}
\label{lem:kernel-section_based_cover}
Assume that \(Z_m\subset X\) is an \(\varepsilon\)-net in the metric \(\varrho\), that is,
for every \(x\in X\) there exists \(z\in Z_m\) such that
$
\varrho(x,z)\le \varepsilon.
$
Then
$
\mathcal E(V_{Z_m})\le \varepsilon.
$
\end{lemma}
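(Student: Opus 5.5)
The plan rests on the best-approximation property of orthogonal projections. For a fixed $x\in X$, the vector $\Pi_{V_{Z_m}}k(\cdot,x)$ is the element of $V_{Z_m}$ closest to $k(\cdot,x)$ in the $\mathcal H$-norm. Consequently, for every $v\in V_{Z_m}$,
\[
\|(I-\Pi_{V_{Z_m}})k(\cdot,x)\|_{\mathcal H}\le \|k(\cdot,x)-v\|_{\mathcal H}.
\]
This holds because $k(\cdot,x)-v=(I-\Pi_{V_{Z_m}})k(\cdot,x)+(\Pi_{V_{Z_m}}k(\cdot,x)-v)$, and the two summands are orthogonal, so Pythagoras gives the inequality.

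Next, I would apply the $\varepsilon$-net hypothesis. It supplies a point $z\in Z_m$ with $\varrho(x,z)\le\varepsilon$. I would take the test vector $v=k(\cdot,z)$, which lies in $V_{Z_m}$ by definition. The right-hand side above then becomes $\|k(\cdot,x)-k(\cdot,z)\|_{\mathcal H}$, and this equals $\varrho(x,z)\le\varepsilon$ by the definition of the kernel pseudometric.

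Finally, since $x$ was arbitrary, I would take the supremum over $x\in X$ to obtain $\mathcal E(V_{Z_m})\le\varepsilon$. There is no real obstacle here. The only point needing care is the best-approximation inequality, and that is standard because $V_{Z_m}$ is finite-dimensional and hence closed, so the projection is well defined.
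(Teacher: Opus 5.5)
Your proposal is correct and follows the paper's proof essentially verbatim: use the best-approximation property of $\Pi_{V_{Z_m}}$ with the test element $k(\cdot,z)\in V_{Z_m}$ for a $z\in Z_m$ satisfying $\varrho(x,z)\le\varepsilon$, then take the supremum over $x\in X$. Your Pythagoras argument simply spells out the best-approximation step that the paper invokes without proof.
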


\begin{proof}
Fix \(x\in X\), and choose \(z\in Z_m\) such that \(\varrho(x,z)\le \varepsilon\).
Since \(k(\cdot,z)\in V_{Z_m}\)  and \(\Pi_{V_{Z_m}}k(\cdot,x)\) is
the best approximation to \(k(\cdot,x)\) in \(V_{Z_m}\), we have
\[
\|(I-\Pi_{V_{Z_m}})k(\cdot,x)\|_{\mathcal H}
\le
\|k(\cdot,x)-k(\cdot,z)\|_{\mathcal H}
=
\varrho(x,z)
\le
\varepsilon.
\]
Taking the supremum over \(x\in X\) yields the claim.
\end{proof}

Thus, approximation by subspaces spanned by kernel sections 
reduces to covering the set \(X\) in the kernel
pseudometric \(\varrho\). For $\varepsilon>0$, denote by
\[ N(\varepsilon,X,\varrho)
=
\min\Bigl\{
M\in\mathbb N:\ \exists z_1,\dots,z_M\in X
\text{ such that }
X\subset \bigcup_{i=1}^M \overline B_\varrho(z_i,\varepsilon)
\Bigr\}
\]
the $\varepsilon$-covering number of $X$.

\begin{corollary}
\label{cor:kernel-section_based_covering}
Assume that
\[
N(\varepsilon,X,\varrho)\le C_{\varrho}\,\varepsilon^{-q},
\qquad 0<\varepsilon\le \varepsilon_0,
\]
for some constants \(C_{\varrho},\varepsilon_0,q>0\). Then for every \(m\ge1\) there
exists a set \(Z_m\subset X\) with \(|Z_m|=m\) such that
\[
\mathcal E(V_{Z_m})=O(m^{-1/q}).
\]
\end{corollary}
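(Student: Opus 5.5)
The plan is to invert the covering-number bound: for a given $m$, choose $\varepsilon$ as small as possible subject to $N(\varepsilon,X,\varrho)\le m$. Then pad the resulting net up to exactly $m$ points. Lemma~\ref{lem:kernel-section_based_cover} does the rest.

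For large $m$, set $\varepsilon_m:=(C_\varrho/m)^{1/q}$, which satisfies $C_\varrho\varepsilon_m^{-q}=m$. Whenever $m\ge m_0:=\lceil C_\varrho\varepsilon_0^{-q}\rceil$, we have $\varepsilon_m\le\varepsilon_0$. The hypothesis then gives $N(\varepsilon_m,X,\varrho)\le m$. So there are points $z_1,\dots,z_M\in X$ with $M\le m$ whose closed $\varrho$-balls of radius $\varepsilon_m$ cover $X$, and this set is an $\varepsilon_m$-net in the sense of Lemma~\ref{lem:kernel-section_based_cover}.

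To obtain $|Z_m|=m$ exactly, add $m-M$ further distinct points of $X$. This is possible whenever $X$ is infinite, which holds in the setting of interest (e.g.\ nonempty interior). Adding points keeps the $\varepsilon_m$-net property, and since $V_{Z}\subset V_{Z'}$ for $Z\subset Z'$, it can only decrease $\mathcal E$. Lemma~\ref{lem:kernel-section_based_cover} then yields $\mathcal E(V_{Z_m})\le\varepsilon_m=C_\varrho^{1/q}m^{-1/q}$ for all $m\ge m_0$.

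The finitely many $m<m_0$ are absorbed into the constant. For any choice of $Z_m$, $\mathcal E(V_{Z_m})\le\sup_x\|k(\cdot,x)\|_{\mathcal H}=\kappa$, and $\kappa<\infty$ because $k$ is continuous on the compact set $X\times X$. Hence $\mathcal E(V_{Z_m})\le\kappa\, m_0^{1/q}m^{-1/q}$ for $m<m_0$. Combining the two ranges gives $\mathcal E(V_{Z_m})\le \max\{C_\varrho^{1/q},\kappa m_0^{1/q}\}\,m^{-1/q}$ for all $m\ge1$.

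The only slightly delicate point is the padding step, which needs enough distinct points in $X$. If $X$ could be finite, one would instead note that once $Z$ contains all of $X$ the error is zero. Also, if linear independence of the sections matters for the later orthonormalization, a $\varepsilon$-net with repeated sections can simply be treated as a spanning set of dimension at most $m$; the bound on $\mathcal E$ is unaffected.
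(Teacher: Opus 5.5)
Your proof is correct and follows essentially the same route as the paper: you set $\varepsilon_m=(C_\varrho/m)^{1/q}$, extract a net of at most $m$ points from the covering bound once $\varepsilon_m\le\varepsilon_0$, pad it to exactly $m$ points, apply Lemma~\ref{lem:kernel-section_based_cover}, and absorb the finitely many small $m$ into the constant. Your explicit bound $\mathcal E(V_{Z_m})\le\kappa$ for small $m$, and your remark that padding requires $X$ to be infinite, make precise two points that the paper's proof leaves implicit.
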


\begin{proof}
Put $\varepsilon_m=(C_{\varrho}/m)^{1/q}$
for sufficiently large $m$ ensuring that $\varepsilon_m\le \varepsilon_0$. Then
\[
N(\varepsilon_m,X,\varrho)\le C_\varrho \varepsilon_m^{-q}=m.
\]
By definition of the covering number, there exists an \(\varepsilon_m\)-net
\(Z_m'\subset X\) such that
$
|Z_m'|\le m.
$
Enlarge $Z_m'$ if necessary so that $|Z_m|=m$, $Z_m'\subset Z_m$.
By Lemma~\ref{lem:kernel-section_based_cover},
\[
\mathcal E(V_{Z_m})\le \varepsilon_m
=
\left(\frac{C_\varrho}{m}\right)^{1/q}
=
O(m^{-1/q}).
\]
This proves the claim for all sufficiently large \(m\). Since changing finitely
many initial values of \(m\) does not affect the \(O(m^{-1/q})\) bound, the result
holds for all \(m\ge1\).
\end{proof}

The previous corollary reduces the approximation problem to estimating covering
numbers in the kernel pseudometric \(\varrho\). A convenient way to do this is to
compare \(\varrho\) with the Euclidean metric.

\begin{lemma}
\label{lem:rho_holder_covering}
Assume that \(X\subset\mathbb R^d\) is compact and that for some
\(\gamma\in(0,1]\) and \(C_\varrho>0\),
\[
\varrho(x,y)\le C_\varrho \|x-y\|_2^\gamma,
\qquad x,y\in X.
\]
Then for every \(m\ge1\)
there exists a subspace \(V_{Z_m}\) with \(|Z_m|=m\) such that
\[
\mathcal E(V_{Z_m})=O(m^{-\gamma/d}).
\]
\end{lemma}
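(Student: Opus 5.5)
The plan is to reduce the statement to Corollary~\ref{cor:kernel-section_based_covering} by bounding the covering numbers $N(\varepsilon,X,\varrho)$ through Euclidean covering numbers of the compact set $X$. The Hölder condition shows that Euclidean balls are contained in $\varrho$-balls. If $\|x-z\|_2\le \delta$ with $\delta:=(\varepsilon/C_\varrho)^{1/\gamma}$, then $\varrho(x,z)\le C_\varrho\delta^\gamma=\varepsilon$. Hence any Euclidean $\delta$-net of $X$ consisting of points of $X$ is a $\varrho$-$\varepsilon$-net, and
\[
N(\varepsilon,X,\varrho)\le N\bigl((\varepsilon/C_\varrho)^{1/\gamma},X,\|\cdot\|_2\bigr).
\]

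Next I need a Euclidean covering bound with centers in $X$. Since $X$ is compact, it lies in a cube $[-r,r]^d$. Partition this cube into $\lceil 2r\sqrt d/\delta\rceil^d$ subcubes of side at most $\delta/\sqrt d$, so each has diameter at most $\delta$. From every subcube that meets $X$, pick one point of $X$. Every $x\in X$ lies in some subcube meeting $X$, so it is within Euclidean distance $\delta$ of the chosen point. This gives $N(\delta,X,\|\cdot\|_2)\le (2r\sqrt d/\delta+1)^d\le C\delta^{-d}$ for $0<\delta\le 1$. The centers lying in $X$ matters because the covering number in the paper is defined with centers $z_i\in X$ and $V_{Z_m}$ must be spanned by sections at points of $X$. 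The subcube construction handles this directly, and it avoids the factor-2 loss of the standard ball argument.

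Combining the two bounds gives
\[
N(\varepsilon,X,\varrho)\le C\,(\varepsilon/C_\varrho)^{-d/\gamma}=C'\varepsilon^{-d/\gamma}
\]
for $0<\varepsilon\le\varepsilon_0:=C_\varrho$. Corollary~\ref{cor:kernel-section_based_covering} with $q=d/\gamma$ then yields a set $Z_m$ with $|Z_m|=m$ and $\mathcal E(V_{Z_m})=O(m^{-\gamma/d})$.

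One detail concerns enlarging $Z'_m$ to exactly $m$ points, which is needed for $\dim V_{Z_m}=m$. This requires $X$ to contain at least $m$ points and the sections $k(\cdot,z_i)$ to be linearly independent. Independence holds for a strictly positive definite kernel on distinct points. In general, I will simply note that adding points to $Z_m$ only enlarges $V_{Z_m}$ and therefore cannot increase $\mathcal E$, exactly as in the proof of the corollary. I expect no real obstacle: the only care needed is keeping the net centers inside $X$ and tracking the constants.
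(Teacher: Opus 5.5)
Your argument is correct and is essentially the paper's proof: the H\"older bound makes every Euclidean $\delta$-net with $\delta=(\varepsilon/C_\varrho)^{1/\gamma}$ a $\varrho$-$\varepsilon$-net, a Euclidean covering bound of order $\delta^{-d}$ gives $N(\varepsilon,X,\varrho)=O(\varepsilon^{-d/\gamma})$, and Corollary~\ref{cor:kernel-section_based_covering} with $q=d/\gamma$ concludes. The one difference is the Euclidean covering step, and your version is slightly more careful there. You prove it with a cube partition and centers in $X$, whereas the paper quotes the ball estimate $(1+2r/\delta)^d$ and passes to $X$ via $N(\delta,X,\|\cdot\|_2)\le N(\delta,B_r,\|\cdot\|_2)$; that step is not plain monotonicity when centers must lie in $X$, although the volumetric argument applied to $X$ itself does give the same bound. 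Also, the lemma only requires $|Z_m|=m$, not $\dim V_{Z_m}=m$, so your linear-independence concern is unnecessary, and your observation that adding points cannot increase $\mathcal E$ is all that is needed.
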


\begin{proof}
Let \(\delta=(\varepsilon/C_\varrho)^{1/\gamma}\). If \(Z\subset X\) is a
\(\delta\)-net in the Euclidean metric, then for every \(x\in X\) there exists
\(z\in Z\) such that \(\|x-z\|_2\le \delta\), and hence
\[
\varrho(x,z)\le C_\varrho \|x-z\|_2^\gamma \le \varepsilon.
\]
Thus every Euclidean \(\delta\)-net is automatically an \(\varepsilon\)-net in
\((X,\varrho)\).

Since \(X\subset B_r\) for some \(r>0\), by the standard estimate
for the Euclidean ball (see, e.g., \cite[Corollary~4.2.13]{Vershynin2018high}),
\[
N(\delta,B_r,\|\cdot\|_2)\le \left(1+\frac{2r}{\delta}\right)^d.
\]
Therefore
\begin{align}
  \label{eq:covering_number_bound_Euclidean}
N(\delta,X,\|\cdot\|_2)
\le
N(\delta,B_r,\|\cdot\|_2)
\le
\left(1+\frac{2r}{\delta}\right)^d
\le
C_X\,\delta^{-d}
\end{align}
for all sufficiently small \(\delta>0\). Hence
\[
N(\varepsilon,X,\varrho)
\le
N\!\left((\varepsilon/C_\varrho)^{1/\gamma},X,\|\cdot\|_2\right)
\le
C_X C_\varrho^{d/\gamma}\,\varepsilon^{-d/\gamma}.
\]
The last claim now follows from Corollary~\ref{cor:kernel-section_based_covering}
with \(q=d/\gamma\).
\end{proof}

The general subspace reduction requires an orthonormal basis of \(V_{Z_m}\) in
\(\mathcal H\), which gives rise to the corresponding feature map.
The next lemma provides a standard construction based on the Gram matrix.

\begin{lemma}
\label{lem:gram_orthonormal_basis}
Let \(Z_m=\{z_1,\dots,z_m\}\subset X\), and put
\[
\phi_i:=k(\cdot,z_i)\in\mathcal H,
\qquad i=1,\dots,m.
\]
Consider the Gram matrix of the system \((\phi_i)_{i=1}^m\):
\[ G=\bigl(\langle \phi_i,\phi_j\rangle_{\mathcal H}\bigr)_{i,j=1}^m=\bigl(k(z_i,z_j)\bigr)_{i,j=1}^m,\]
and its spectral decomposition
\[
G=U\Lambda U^\top,\qquad \Lambda=\operatorname{diag}(\lambda_1,\dots,\lambda_{n_m},0,\dots,0).\]
Here \(\lambda_1,\dots,\lambda_{n_m}>0\) are the positive eigenvalues of \(G\), and
\(U=(U_{ij})_{i,j=1}^m\) is an orthogonal matrix. Define
\begin{equation}
  \label{eq:orthonormal_basis_V_m}
g_j=\frac{1}{\sqrt{\lambda_j}}
\sum_{i=1}^m U_{ij}\phi_i,
\qquad j=1,\dots,n_m.
\end{equation}
Then \(g_1,\dots,g_{n_m}\) form an orthonormal basis of
\[
V_{Z_m}=\operatorname{span}\{\phi_1,\dots,\phi_m\}.
\]
In particular,
$
\dim V_{Z_m}=n_m=\operatorname{rank}(G).
$
\end{lemma}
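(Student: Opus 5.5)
The plan is to compute inner products of the $g_j$ directly through the Gram matrix, and then to identify their span with $V_{Z_m}$ by a rank argument.

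\emph{Orthonormality.} By bilinearity and the reproducing property, for $1\le j,l\le n_m$,
\[
\langle g_j,g_l\rangle_{\mathcal H}
=\frac{1}{\sqrt{\lambda_j\lambda_l}}\sum_{i,i'=1}^m U_{ij}U_{i'l}\,G_{ii'}
=\frac{(U^\top G U)_{jl}}{\sqrt{\lambda_j\lambda_l}}.
\]
Since $U^\top G U=\Lambda$, this equals $\lambda_j\delta_{jl}/\sqrt{\lambda_j\lambda_l}=\delta_{jl}$. So $g_1,\dots,g_{n_m}$ is an orthonormal system, and in particular it is linearly independent. Each $g_j$ is a linear combination of the $\phi_i$, so each $g_j$ lies in $V_{Z_m}$.

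\emph{Spanning.} It remains to show $\dim V_{Z_m}=n_m$. I would use the standard fact that the dimension of the span of $\phi_1,\dots,\phi_m$ equals the rank of their Gram matrix. To see this, consider the linear map
\[
A:\mathbb R^m\to\mathcal H,\qquad c\mapsto\sum_i c_i\phi_i .
\]
Then $\|Ac\|_{\mathcal H}^2=c^\top G c$, so $\ker A=\{c: c^\top Gc=0\}$. Because $G$ is positive semidefinite, $c^\top G c=0$ holds exactly when $Gc=0$, so $\ker A=\ker G$. Hence
\[
\dim V_{Z_m}=\operatorname{rank} A=m-\dim\ker G=\operatorname{rank} G=n_m .
\]

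\emph{Conclusion.} An orthonormal family of $n_m$ vectors inside an $n_m$-dimensional space is a basis, which proves both claims.

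The only point that needs care is the step $c^\top G c=0\Rightarrow Gc=0$ for positive semidefinite $G$. It follows by writing $G=G^{1/2}G^{1/2}$, or from the spectral decomposition: $c^\top G c=\sum_j \lambda_j (U^\top c)_j^2$, so it vanishes exactly when $(U^\top c)_j=0$ for all $j$ with $\lambda_j>0$. Apart from this, everything is routine.
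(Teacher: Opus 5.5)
Your proof is correct. The orthonormality computation is identical to the paper's; the two arguments differ only in the spanning step.

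The paper inverts the defining relation. It asserts, from orthogonality of $U$, that $\phi_i=\sum_{j=1}^{n_m}\sqrt{\lambda_j}\,U_{ij}\,g_j$, which writes down the change of basis explicitly; that is convenient if one wants the coordinates of the kernel sections in the new basis. Strictly speaking, this inversion also needs $h_j:=\sum_i U_{ij}\phi_i=0$ for $j>n_m$. That holds because $\|h_j\|_{\mathcal H}^2=(U^\top G U)_{jj}=0$, but the paper leaves it implicit.

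You argue by dimension instead. From $\|Ac\|_{\mathcal H}^2=c^\top G c$ and positive semidefiniteness you get $\ker A=\ker G$. Rank--nullity then gives $\dim V_{Z_m}=\operatorname{rank}G=n_m$, and an orthonormal $n_m$-tuple in an $n_m$-dimensional space is a basis. Your route says nothing about how the $\phi_i$ expand in the new basis. In exchange, it proves the ``in particular'' claim $\dim V_{Z_m}=\operatorname{rank}(G)$ directly, and it handles the zero eigenvalues of a rank-deficient Gram matrix explicitly. Your step $c^\top Gc=0\Rightarrow Gc=0$ is exactly the null-space fact the paper's inversion uses tacitly.
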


\begin{proof}
For \(j,t\in\{1,\dots,n_m\}\),
\begin{align*}
\langle g_j,g_t\rangle_{\mathcal H}
&=
\frac{1}{\sqrt{\lambda_j\lambda_t}}
\sum_{i=1}^m\sum_{s=1}^m
U_{ij}U_{st}\langle \phi_i,\phi_s\rangle_{\mathcal H}=
\frac{1}{\sqrt{\lambda_j\lambda_t}}
\sum_{i=1}^m\sum_{s=1}^m
U_{ij}U_{st}k(z_i,z_s)\\
&=
\frac{1}{\sqrt{\lambda_j\lambda_t}}
\,(U^\top G U)_{jt}
=
\frac{1}{\sqrt{\lambda_j\lambda_t}}
\,\Lambda_{jt}
=
\delta_{jt}.
\end{align*}
Hence \(g_1,\dots,g_{n_m}\) are orthonormal.
Moreover, each \(g_j\) is a linear combination of \(\phi_1,\dots,\phi_m\), so
\[
\operatorname{span}\{g_1,\dots,g_{n_m}\}\subset V_{Z_m}.
\]
Since \(U\) is orthogonal, from \eqref{eq:orthonormal_basis_V_m}, we get
\[
\phi_i=\sum_{j=1}^{n_m} \sqrt{\lambda_j}\,U_{ij}\,g_j,
\qquad i=1,\dots,m.
\]
Hence \(\phi_1,\dots,\phi_m\in\operatorname{span}\{g_1,\dots,g_{n_m}\}\), and therefore
$
V_{Z_m}=\operatorname{span}\{g_1,\dots,g_{n_m}\}. 
$
\end{proof}

\subsection{Mat\'ern kernels via kernel sections}
\begin{lemma}
\label{lem:matern_zero_regimes}
The kernel pseudometric
\[
\varrho_{\nu,\ell}(x,y)
=
\sqrt{2\bigl(\kappa_{\nu,\ell}(0)
-\kappa_{\nu,\ell}(\|x-y\|_2)\bigr)}
\]
generated by the Mat\'ern kernel \eqref{eq:Mathern_kernel} satisfies
\[
\varrho_{\nu,\ell}(x,y)\le
\begin{cases}
C\|x-y\|_2^\nu, & 0<\nu<1,\\[1mm]
C\|x-y\|_2\sqrt{1+\bigl|\ln\|x-y\|_2\bigr|}, & \nu=1,\\[1mm]
C\|x-y\|_2, & \nu>1,
\end{cases}
\]
where the middle expression is understood as \(0\) when \(x=y\).
\end{lemma}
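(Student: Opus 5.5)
The plan is to express everything through the small-argument behaviour of the one-variable function $\phi(s):=\kappa_{\nu,\ell}(0)-\kappa_{\nu,\ell}(s)$, $s\ge0$. Here $\kappa_{\nu,\ell}(s)$ denotes the radial profile in \eqref{eq:Mathern_kernel}, with $\kappa_{\nu,\ell}(0)=1$. Since $\varrho_{\nu,\ell}(x,y)^2=2\phi(\|x-y\|_2)$ and $X$ is bounded, only two facts are needed. First, for $s\in(0,s_0]$ with small $s_0$, we need $\phi(s)\le C s^{2\nu}$ if $\nu<1$, $\phi(s)\le Cs^2(1+|\ln s|)$ if $\nu=1$, and $\phi(s)\le Cs^2$ if $\nu>1$. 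Second, for $s\in[s_0,\operatorname{diam}X]$ we use the trivial bound $\phi(s)\le 2\kappa_{\nu,\ell}(0)=2$. This is absorbed into the stated form because each right-hand side is bounded below by a positive constant on that interval; in the case $\nu=1$ this uses $s^2(1+|\ln s|)\ge s_0^2$. Taking square roots then gives the three cases. By rescaling $u=\sqrt{2\nu}\,s/\ell$, it suffices to study $h(u):=\frac{2^{1-\nu}}{\Gamma(\nu)}u^\nu K_\nu(u)$ near $u=0$.

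For the small-$u$ expansion I would use the Fourier representation, which avoids special-function series and treats all $\nu$ uniformly. By \eqref{eq:Matern_Fourier_asymptotics},
\[
\phi(s)=\int_{\mathbb R^d}\bigl(1-\cos\langle\omega,w\rangle\bigr)\widehat\kappa_{\nu,\ell}(\omega)\,d\omega,\qquad \|w\|_2=s,
\]
up to the normalising constant. Since $\widehat\kappa_{\nu,\ell}\ge0$ and $\widehat\kappa_{\nu,\ell}(\omega)\le C(1+\|\omega\|_2)^{-(2\nu+d)}$, I split the integral at $\|\omega\|_2=1/s$. On the low-frequency part I use $1-\cos t\le t^2/2$, which gives
\[
s^2\int_{\|\omega\|\le 1/s}\|\omega\|^2(1+\|\omega\|)^{-2\nu-d}d\omega .
\]
On the high-frequency part I use $1-\cos t\le 2$, which gives
\[
2\int_{\|\omega\|>1/s}(1+\|\omega\|)^{-2\nu-d}d\omega\asymp s^{2\nu}.
\]
In polar coordinates the low-frequency integral is $\int_0^{1/s}\rho^{d+1}(1+\rho)^{-2\nu-d}d\rho$. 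This integral is $O(s^{2\nu-2})$ for $\nu<1$, $O(1+|\ln s|)$ for $\nu=1$, and $O(1)$ for $\nu>1$. Multiplying by $s^2$ and adding the $s^{2\nu}$ tail yields the three bounds on $\phi$. In the case $\nu>1$ the tail satisfies $s^{2\nu}\le s^2$ for $s\le1$.

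An alternative route for the case distinction uses the series for $K_\nu$: $h(u)=1-c_\nu u^{2\nu}+O(u^2)$ for non-integer $\nu<1$, and $h(u)=1+\tfrac{u^2}{2}\ln u+O(u^2)$ for $\nu=1$. I would mention this only as a cross-check.

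The main obstacle I expect is the borderline case $\nu=1$, where the logarithm has to come out exactly. The Fourier split handles it cleanly, since the low-frequency integrand behaves like $\rho^{-1}$ at infinity, so the only real care needed is in the polar integral estimate. I also need to make sure the Fourier inversion formula for $\kappa_{\nu,\ell}(0)-\kappa_{\nu,\ell}(w)$ is justified. This holds because $\widehat\kappa_{\nu,\ell}\in L_1$ for every $\nu>0$, which follows from \eqref{eq:Matern_Fourier_asymptotics}.
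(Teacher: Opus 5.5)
Your proof is correct, and it takes a genuinely different route from the paper's.

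\textbf{The paper's route.} The paper works directly with the Bessel function. It uses the Abramowitz--Stegun representation $K_\nu=\frac{\pi}{2\sin(\pi\nu)}(I_{-\nu}-I_\nu)$ for non-integer $\nu$, and the logarithmic series (9.6.11) for integer $\nu=n$. From these it reads off three expansions:
\begin{itemize}
\item $z^\nu K_\nu(z)=c_\nu+O(z^{2\nu})$ for $0<\nu<1$;
\item $z^\nu K_\nu(z)=c_\nu+O(z^2)$ for $\nu>1$, with non-integer $\nu$ and integers $n\ge2$ treated separately;
\item $zK_1(z)=1+O(z^2|\ln z|)$ for $\nu=1$.
\end{itemize}
It then passes from small $h$ to all of $X$ by continuity and compactness.

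\textbf{Your route.} You write $\kappa_{\nu,\ell}(0)-\kappa_{\nu,\ell}(w)$ as the integral of $1-\cos\langle\omega,w\rangle$ against the nonnegative, integrable spectral density, and split at $\|\omega\|_2=1/s$. The three regimes then correspond to the behaviour of the truncated second spectral moment $\int_0^{1/s}\rho^{d+1}(1+\rho)^{-2\nu-d}\,d\rho$: power divergence for $\nu<1$, logarithmic divergence for $\nu=1$, and convergence for $\nu>1$. The high-frequency tail contributes $O(s^{2\nu})$ in every case. Your trivial bound $\phi\le 2$ on $[s_0,\operatorname{diam}X]$ plays the role of the paper's compactness step.

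\textbf{Comparison.} Your argument has three advantages.
\begin{itemize}
\item It is uniform in $\nu$: there is no integer/non-integer case split and no Bessel tables.
\item It uses only the upper bound $\widehat\kappa_{\nu,\ell}(\omega)\le C(1+\|\omega\|_2)^{-(2\nu+d)}$. It therefore applies verbatim to any stationary kernel with that spectral decay, for instance other kernels whose RKHS is norm-equivalent to a Sobolev space.
\item It makes transparent why the logarithm appears exactly at $\nu=1$: that is precisely when $\int\|\omega\|_2^2\,\widehat\kappa_{\nu,\ell}(\omega)\,d\omega$ just fails to converge.
\end{itemize}
The paper's approach gives exact small-argument expansions rather than one-sided estimates. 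These also show that the bounds are sharp, e.g. $\kappa_{\nu,\ell}(0)-\kappa_{\nu,\ell}(h)\asymp h^{2\nu}$ for $\nu<1$, although the lemma only needs the upper bounds.

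\textbf{Minor point.} To justify pointwise Fourier inversion, you need both $\kappa_{\nu,\ell}\in L_1(\mathbb R^d)$, which holds by its exponential decay, and $\widehat\kappa_{\nu,\ell}\in L_1(\mathbb R^d)$. Alternatively, invoke Bochner's theorem with the given spectral density.
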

\begin{proof}
Put
$h=\|x-y\|_2$, $a=\sqrt{2\nu}/\ell$, $z=ah$. Then
\[
\kappa_{\nu,\ell}(h)
=
\frac{2^{1-\nu}}{\Gamma(\nu)}\,z^\nu K_\nu(z).
\]

Assume first that \(\nu\notin\mathbb N\). By (9.6.2) and (9.6.10) of \cite{Abramowitz1972handbook},
\[
K_\nu(z)=\frac{\pi}{2\sin(\pi\nu)}\bigl(I_{-\nu}(z)-I_\nu(z)\bigr),
\qquad
I_{\pm \nu}(z)
=
\left(\frac z2\right)^{\pm \nu}
\left(c_{\pm \nu}+O(z^2)\right),
\qquad z\downarrow0.
\]
Hence
\[
z^\nu I_{-\nu}(z)=c'_\nu+O(z^2),
\qquad
z^\nu I_\nu(z)=c''_\nu z^{2\nu}+O(z^{2\nu+2}),
\]
and therefore
\[
z^\nu K_\nu(z)=c_\nu+O(z^{2\nu})+O(z^2).
\]
It follows that
\[
z^\nu K_\nu(z)=
\begin{cases}
c_\nu+O(z^{2\nu}), & 0<\nu<1,\\[1mm]
c_\nu+O(z^2), & \nu>1,\ \nu\notin\mathbb N.
\end{cases}
\]

Now let \(\nu=n\in\mathbb N\). By (9.6.11) of
\cite{Abramowitz1972handbook}, for \(z>0\),
\begin{equation}
\label{eq:Kn_expansion}
K_n(z)
=
z^{-n}P_{n-1}(z^2)
+
(\ln z)I_n(z)
+
z^nH_n(z^2),
\end{equation}
where \(P_{n-1}\) is a polynomial of degree \(n-1\) and \(H_n\) is analytic
near \(0\).
For \(n\ge2\), since $I_n=O(z^n)$, we get
\[
z^nK_n(z)=c_n+O(z^2)
\]
with some constant \(c_n\).

Thus, for $\nu\neq 1$,
\[
z^\nu K_\nu(z)=
\begin{cases}
c_\nu+O(z^{2\nu}), & 0<\nu<1,\\[1mm]
c_\nu+O(z^2), & \nu>1.
\end{cases}
\]
Since \(\kappa_{\nu,\ell}(0)=1\), this yields
\[
\kappa_{\nu,\ell}(0)-\kappa_{\nu,\ell}(h)
=
\begin{cases}
O(h^{2\nu}), & 0<\nu<1,\\[1mm]
O(h^2), & \nu>1.
\end{cases}
\]
The bound for \(\varrho_{\nu,\ell}\) follows from the representation
\[
\varrho_{\nu,\ell}(x,y)^2
=
2\bigl(\kappa_{\nu,\ell}(0)-\kappa_{\nu,\ell}(\|x-y\|_2)\bigr). 
\]

For \(\nu=1\), the expansion \eqref{eq:Kn_expansion} gives
\[
        zK_1(z)=1+O\!\left(z^2|\ln z|\right),
        \qquad z\downarrow 0,        
\]
since $P_0(0)=1$. Consequently,
$
        \kappa_{1,\ell}(0)-\kappa_{1,\ell}(h)
        =
        O\!\left(h^2(1+|\ln h|)\right). 
$
Hence
\[
\varrho_{1,\ell}(x,y)
=
O\!\left(
h\sqrt{1+|\ln h|}
\right),
\qquad h=\|x-y\|_2.
\]

The estimates above are local at \(h=0\). Their extension to all
\(x,y\in X\) follows from the continuity of \(\varrho_{\nu,\ell}\)
and compactness of \(X\).
\end{proof}

We claim that
\begin{equation}
\label{eq:matern_approximation_rate_pseudometric}
\mathcal E(V_{Z_m})
=
\begin{cases}
O(m^{-\nu/d}), & 0<\nu<1,\\[1mm]
O\!\left(m^{-1/d}\sqrt{\ln(em)}\right), & \nu=1,\\[1mm]
O(m^{-1/d}), & \nu>1.
\end{cases}
\end{equation}
Indeed, by Lemma~\ref{lem:matern_zero_regimes} and
Lemma~\ref{lem:rho_holder_covering}, for every \(m\ge1\) there exists a set
\(Z_m\subset X\), \(|Z_m|=m\), such that
\[
\mathcal E(V_{Z_m})=
\begin{cases}
O(m^{-\nu/d}), & 0<\nu<1,\\[1mm]
O(m^{-1/d}), & \nu>1.
\end{cases}
\]
For \(\nu=1\), by \eqref{eq:covering_number_bound_Euclidean}, we may choose
a Euclidean \(\delta_m\)-net \(Z_m\subset X\), with \(|Z_m|=m\) and
\(\delta_m=O(m^{-1/d})\).
For every \(x\in X\), there exists \(z\in Z_m\) such that
\(\|x-z\|_2\le\delta_m\). Hence, by
Lemma~\ref{lem:matern_zero_regimes},
\[
\varrho_{1,\ell}(x,z)
\le
C\delta_m\sqrt{1+|\ln\delta_m|}
=
O\!\left(m^{-1/d}\sqrt{\ln(em)}\right),
\]
where we used that the function
\(h\mapsto h\sqrt{1+|\ln h|}\)
is increasing for all sufficiently small \(h>0\).
Lemma~\ref{lem:kernel-section_based_cover} therefore yields
\[
\mathcal E(V_{Z_m})
=
O\!\left(m^{-1/d}\sqrt{\ln(em)}\right).
\]

For \(\nu>1\), however, the bound \eqref{eq:matern_approximation_rate_pseudometric} can be substantially improved by exploiting
the Sobolev characterization of the Mat\'ern RKHS together with sharper estimates
for kernel section spaces. Here we follow the ideas of \cite{Zadorozhnyi2021Online}.

Recall that for \(s>0\) the Sobolev space \(H^s(\mathbb R^d)\) can be
defined by means of the Fourier transform:
\[
H^s(\mathbb R^d)
=
\left\{
f\in L_2(\mathbb R^d):
\int_{\mathbb R^d} (1+\|\omega\|_2^2)^s |\widehat f(\omega)|^2\,d\omega<\infty
\right\},\qquad \widehat f(\omega)
=
\frac{1}{(2\pi)^{d/2}}
\int_{\mathbb R^d} f(x)e^{-i x\cdot \omega}\,dx
\]
equipped with the corresponding norm. 
Following \cite{Agranovich2015Sobolev}, we define the Sobolev space on \(X\) as the restriction space
\[
H^s(X)
=
\{\,v|_X:\ v\in H^s(\mathbb R^d)\,\},
\qquad
\|u\|_{H^s(X)}
=
\inf\bigl\{\|v\|_{H^s(\mathbb R^d)}:\ v|_X=u\bigr\}.
\]
Assume that the boundary of $X$ is Lipschitz. Then the
space \(H^s(X)\) defined above coincides with the intrinsic Sobolev--Slobodetskii
space, and the corresponding norms are equivalent; see
\cite{Agranovich2015Sobolev} (Theorem~5.1.1 and Section~14.5).

Now let \(k(x,y)=\Phi(x-y)\) be a continuous shift-invariant kernel on \(\mathbb R^d\).
By \cite[Theorem~2.1]{Kanagawa2025Gaussian},
\[
H_k
=
\left\{
f\in L_2(\mathbb R^d)\cap C(\mathbb R^d):
\|f\|_{H_k}^2
=
\frac{1}{(2\pi)^{d/2}}
\int_{\mathbb R^d}\frac{|\widehat f(\omega)|^2}{\widehat\Phi(\omega)}\,d\omega
<\infty
\right\}.
\]
For the Mat\'ern kernel, the explicit form of
\(\widehat{\kappa}_{\nu,\ell}\) in
\eqref{eq:Matern_Fourier_asymptotics} implies that the RKHS generated by
\(\kappa_{\nu,\ell}\) on \(\mathbb R^d\) coincides with
\(H^{\nu+d/2}(\mathbb R^d)\) as a set, and that the corresponding norms are
equivalent; see also \cite[Example~2.7]{Kanagawa2025Gaussian}.
Let \(\kappa_{\nu,\ell}|_{X\times X}\) be the restriction of the Mat\'ern kernel to
\(X\times X\). By the restriction theorem, see \cite[Corollary 5.8]{Paulsen2016Introduction}, the RKHS generated by \(\kappa_{\nu,\ell}|_{X\times X}\)
is norm-equivalent to \(H^{\nu+d/2}(X)\).

For a finite set \(Z\subset X\) define
\[
h_{Z,X}=\sup_{x\in X}\min_{z\in Z}\|x-z\|_2.
\]
Then \(h_{Z,X}\le \delta\) if and only if \(Z\) is a \(\delta\)-net of \(X\).
Hence, by the definition of the covering number \(N(\delta,X,\|\cdot\|_2)\), there
exists \(Z^\delta\subset X\) such that
\[
|Z^\delta|=N(\delta,X,\|\cdot\|_2),\qquad h_{Z^\delta,X}\le \delta.
\]
If \(X\subset B_r\), then
\[
N(\delta,X,\|\cdot\|_2)
\le
\left(1+\frac{2r}{\delta}\right)^d
\]
by \eqref{eq:covering_number_bound_Euclidean}. Therefore, for every \(m\ge 2\), choosing
\[
\delta_m=\frac{2r}{m^{1/d}-1},
\]
we obtain
$
N(\delta_m,X,\|\cdot\|_2)\le m.
$
Hence there exists a set \(Z_m\subset X\) with \(|Z_m|\le m\) and a constant $C_X$ such that
\begin{equation}
  \label{eq:fill_distance_inequality}
h_{Z_m,X}\le \delta_m  \le C_X m^{-1/d}.
\end{equation}
Adding arbitrary points of \(X\) if necessary, we may assume \(|Z_m|=m\).

Assume now that \(\nu>1\). We want to control the error of the orthogonal projection $P_Z:=\Pi_{V_Z}$ onto the section
space \(V_Z=\operatorname{span}\{\kappa_{\nu,\ell}(\cdot,z):z\in Z\}\), where $Z$ is a finite set. 
For any $f\in H^{\nu+d/2}(X)$ we have
\[ u(z):=f(z)-P_Z f(z)=\langle f-P_Z f,\kappa_{\nu,\ell}(\cdot,z)\rangle_{\mathcal H_{\nu,\ell}}=0.\]
By \cite[Theorem 1.1]{Narcowich2005Sobolev} (with $p=2$, $q=\infty$, $\alpha=0$) we get
\[
\|u\|_{L^\infty(X)}\le C h_{Z,X}^{k+\sigma-d/2}\|u \|_{H^{k+\sigma}(X)},\]
where $\sigma\in(0,1]$, $k>d/2$, $k\in\mathbb N$. Set $s=\nu+d/2$. Since $\nu>1$ we can put 
$k=\lfloor s\rfloor$, $\sigma=s-\lfloor s\rfloor$ if \(s\not\in \mathbb N\)
and
\(k=s-1\), \(\sigma=1\) if \(s\in\mathbb N\). Then
\begin{equation}
  \label{eq:projection_inequality_Narcowich}
\|u\|_{L^\infty(X)}\le C h_{Z,X}^{\nu}\|u \|_{H^{\nu+d/2}(X)}.
\end{equation}

Set $A=I-P_Z$. We have
\[
\|A\kappa_{\nu,\ell}(\cdot,x)\|_{\mathcal H_{\nu,\ell}}
=
\sup_{\|f\|_{\mathcal H_{\nu,\ell}}\le 1}
\bigl|
\langle f,A\kappa_{\nu,\ell}(\cdot,x)\rangle_{\mathcal H_{\nu,\ell}}
\bigr|.
\]
But since \(A\) is self-adjoint in $\mathcal H_{\nu,\ell}$, we have
\[
\langle f,A\kappa_{\nu,\ell}(\cdot,x)\rangle_{\mathcal H_{\nu,\ell}}
=
\langle A f,\kappa_{\nu,\ell}(\cdot,x)\rangle_{\mathcal H_{\nu,\ell}}=Af(x),
\]
and therefore,
\[
\|A\kappa_{\nu,\ell}(\cdot,x)\|_{\mathcal H_{\nu,\ell}}
\le
\sup_{\|f\|_{\mathcal H_{\nu,\ell}}\le 1}\|Af\|_{L^\infty(X)}
=
\sup_{\|f\|_{\mathcal H_{\nu,\ell}}\le 1}\|f-P_Z f\|_{L^\infty(X)}.
\]
Furthermore, by \eqref{eq:projection_inequality_Narcowich} and by the
equivalence of the norms in \(H^{\nu+d/2}(X)\) and \(\mathcal H_{\nu,\ell}\),
\[
 \|f-P_Z f\|_{L^\infty(X)}
 \le C\,h_{Z,X}^{\nu}\|f-P_Z f\|_{H^{\nu+d/2}(X)}
 \le C\,h_{Z,X}^{\nu}\|f\|_{H^{\nu+d/2}(X)}.
\]

From the last two inequalities it follows that 
\[\mathcal E(V_Z)
=
\sup_{x\in X}\|(I-P_Z)\kappa_{\nu,\ell}(\cdot,x)\|_{\mathcal H_{\nu,\ell}}\le C\,h_{Z,X}^{\nu}.\]
Applying \eqref{eq:fill_distance_inequality} we obtain
$\mathcal E(V_{Z_m}) \le C\,m^{-\nu/d}$
for $\nu>1$. Combining this estimate with
\eqref{eq:matern_approximation_rate_pseudometric}, we obtain
\begin{equation}
\label{eq:matern_approximation_rate}
\mathcal E(V_{Z_m})
=
\begin{cases}
O(m^{-\nu/d}), & \nu\neq1,\\[1mm]
O\!\left(m^{-1/d}\sqrt{\ln(em)}\right), & \nu=1.
\end{cases}
\end{equation}

By Lemma~\ref{lem:gram_orthonormal_basis}, there exists
an orthonormal basis \(g_1,\dots,g_{n_m}\) of \(V_{Z_m}\), where \(n_m:=\dim V_{Z_m}\le m\). Define the associated feature map
\[
\Psi_{Z_m}(x)=(g_1(x),\dots,g_{n_m}(x))\in\mathbb R^{n_m}.
\]
Applying Corollary~\ref{cor:rough_bound_subspace} to the subspace \(V_{Z_m}\), we obtain
\[
R_T^{\mathcal A(\lambda)}(f_{1:T})
=
O\!\left(
(\ln T)^2
+
\sqrt{n_mT\,P_T^{\mathcal H}(f_{1:T})}
+
n_m\ln\!\left(1+\frac{T}{n_m}\right)
+
T\,\mathcal E(V_{Z_m})
\right).
\]
Since \(n_m\le m\), and the terms containing $n_m$ are increasing, we obtain a bound of the same form as in Corollary~\ref{cor:rough_bound_subspace}:
\begin{equation}
  \label{eq:kernel_section_based_bound}
R_T^{\mathcal A(\lambda)}(f_{1:T})
=
O\!\left(
(\ln T)^2
+
\sqrt{mT\,P_T^{\mathcal H}(f_{1:T})}
+
m\ln\!\left(1+\frac{T}{m}\right)
+
T\,\mathcal E(V_{Z_m})
\right).
\end{equation}

For \(\nu\neq1\), the inequality
\eqref{eq:matern_approximation_rate} corresponds to the slow approximation
regime with \(\beta=\nu/d\). Although the dimension of the subspace
\(V_{Z_m}\) may be smaller than \(m\), the proof of
Theorem~\ref{thm:adaptive_over_m_slow_general} applies verbatim,
because it only uses the bound \eqref{eq:kernel_section_based_bound} of
Corollary~\ref{cor:rough_bound_subspace}, and does not rely on the identity
\(\dim V_{Z_m}=m\).
Applying Theorem~\ref{thm:adaptive_over_m_slow_general} with
\(\beta=\nu/d\), we conclude that the predictor \(\overline{\mathcal A}\)
obtained by aggregating the VE-DVAW forecasters corresponding to the feature
maps \(\Psi_{Z_m}\), \(m\in\mathcal M_{T,\nu/d}\), satisfies
\begin{equation}
\label{eq:matern_bound}
R_T^{\overline{\mathcal A}}(f_{1:T})
=
O\!\left(
T^{\frac{\nu+d}{2\nu+d}}
\bigl(P_T^{\mathcal H_{\nu,\ell}}(f_{1:T})\bigr)^{\frac{\nu}{2\nu+d}}
+
T^{\frac{d}{\nu+d}}(\ln T)^{\frac{\nu}{\nu+d}}
\right),
\qquad \nu\neq1.
\end{equation}

For the borderline case \(\nu=1\), put
\(P=P_T^{\mathcal H_{1,\ell}}(f_{1:T})\) and \(L=\ln(eT)\).
Using \eqref{eq:matern_approximation_rate} in
\eqref{eq:kernel_section_based_bound} and repeating the aggregation argument
from the proof of Theorem~\ref{thm:adaptive_over_m_slow_general} over the grid
\(\mathcal M_{T,1/d}\), we obtain the bound
\[
R_T^{\overline{\mathcal A}}(f_{1:T})
=
O\!\left(
(\ln T)^2+
\min_{m\in\mathcal M_{T,1/d}}\Phi_P(m)
\right),
\]
similar to \eqref{eq:intermediate_bound} but with
\[
\Phi_P(m)
=
\sqrt{mTP}
+
m\ln\!\left(1+\frac{T}{m}\right)
+
Tm^{-1/d}\sqrt{\ln(em)}.
\]

To compare the dyadic grid \(\mathcal M_{T,1/d}\) with the
continuous range, put
\(M_{T,1/d}:=T^{d/(d+1)}\).
For every \(m\in[1,M_{T,1/d}]\), putting
\(\underline m=2^{\lfloor\log_2m\rfloor}\in\mathcal M_{T,1/d}\), we have
\[
\min_{m'\in\mathcal M_{T,1/d}}\Phi_P(m')
\le
\Phi_P(\underline m)
\le
2^{1/d}\Phi_P(m).
\]
Indeed,
\[
\underline m^{-1/d}\sqrt{\ln(e\underline m)}
\le
2^{1/d}m^{-1/d}\sqrt{\ln(em)},
\]
whereas the first two terms in \(\Phi_P(m)\) are increasing in \(m\).
Hence
\[
\min_{m\in\mathcal M_{T,1/d}}\Phi_P(m)
=
O\!\left(
\inf_{1\le m\le M_{T,1/d}}\Phi_P(m)
\right).
\]
Furthermore, since \(m\le M_{T,1/d}\le T\), we get
\begin{equation}
\label{eq:intermediate_borderline_bound}
\Phi_P(m)
=
O\!\left(
\sqrt{mTP}+mL+Tm^{-1/d}L^{1/2}
\right),
\qquad
1\le m\le M_{T,1/d}.
\end{equation}

Put
$
P_0
=
T^{-\frac1{d+1}}
L^{\frac{3d+4}{2(d+1)}}.
$
If \(P\ge P_0\), choose
\[
m_d=\left(\frac{TL}{P}\right)^{\frac d{d+2}}.
\]
As in the proof of Theorem~\ref{thm:adaptive_over_m_slow_general},
we have \(P\le 2R_fT\), and hence \(m_d\ge1\) for all sufficiently large
\(T\). Moreover, since \(P\ge P_0\),
\[
m_d
\le
\left(\frac{TL}{P_0}\right)^{\frac d{d+2}}
=
T^{\frac d{d+1}}
L^{-\frac{d}{2(d+1)}}
=
M_{T,1/d}L^{-\frac{d}{2(d+1)}}
\le
M_{T,1/d}.
\]
Thus \(m_d\in[1,M_{T,1/d}]\). The choice \(m=m_d\) balances the first and
third terms in the bound \eqref{eq:intermediate_borderline_bound}:
\[
\sqrt{m_dTP}
=
Tm_d^{-1/d}L^{1/2}
=
T^{\frac{d+1}{d+2}}
P^{\frac1{d+2}}
L^{\frac{d}{2(d+2)}},
\]
and the middle term \(m_dL\) is of no larger order by the definition of
\(P_0\). Thus
\[
\inf_{1\le m\le M_{T,1/d}}\Phi_P(m)
=
O\!\left(
T^{\frac{d+1}{d+2}}
P^{\frac1{d+2}}
L^{\frac{d}{2(d+2)}}
\right).
\]

If \(P\le P_0\), choose
\[
m_s
=
T^{\frac d{d+1}}
L^{-\frac{d}{2(d+1)}}
=
M_{T,1/d}L^{-\frac{d}{2(d+1)}}.
\]
Clearly, \(m_s\in[1,M_{T,1/d}]\) for all sufficiently large \(T\).
This choice balances the second and third terms in \eqref{eq:intermediate_borderline_bound}:
\[
m_sL
=
Tm_s^{-1/d}L^{1/2}
=
T^{\frac d{d+1}}
L^{\frac{d+2}{2(d+1)}}.
\]
Moreover, since \(P\le P_0\),
\[
\begin{aligned}
\sqrt{m_sTP}
&=
T^{\frac{2d+1}{2(d+1)}}
L^{-\frac{d}{4(d+1)}}P^{1/2}
\le
T^{\frac{2d+1}{2(d+1)}}
L^{-\frac{d}{4(d+1)}}
T^{-\frac1{2(d+1)}}
L^{\frac{3d+4}{4(d+1)}}=
T^{\frac d{d+1}}
L^{\frac{d+2}{2(d+1)}}.
\end{aligned}
\]
Thus
\[
\inf_{1\le m\le M_{T,1/d}}\Phi_P(m)
\le
\Phi_P(m_s)
=
O\!\left(
T^{\frac d{d+1}}
L^{\frac{d+2}{2(d+1)}}
\right).
\]

Since \(L\asymp\ln T\) and
$
(\ln T)^2
=
O\!\left(
T^{\frac d{d+1}}
(\ln T)^{\frac{d+2}{2(d+1)}}
\right),
$
the dyadic-grid comparison above and the two cases considered yield
\begin{equation}
\label{eq:matern_bound_nu1}
R_T^{\overline{\mathcal A}}(f_{1:T})
=
O\!\left(
T^{\frac{d+1}{d+2}}
\bigl(P_T^{\mathcal H_{1,\ell}}(f_{1:T})\bigr)^{\frac1{d+2}}
(\ln T)^{\frac{d}{2(d+2)}}
+
T^{\frac d{d+1}}
(\ln T)^{\frac{d+2}{2(d+1)}}
\right).
\end{equation}

Note that in the smooth regime \(\nu>1\) the Mercer truncation scheme gives by \eqref{eq:epsilon_m_poly_decay} and \eqref{eq:beta_Matern_Mercer},
\[
\mathcal E(V_m)
=
O\!\left(
m^{-\alpha_{\nu,p}}
\right),
\qquad
\alpha_{\nu,p}
:=
\frac{p(d+4\nu)}{2d(d+2p)},
\qquad 1\le p<\nu.
\]
But the section-space approximation rate \eqref{eq:matern_approximation_rate}
is strictly better, since
\[
\frac{\nu}{d}-\alpha_{\nu,p}
=
\frac{2\nu-p}{2(d+2p)}>0.
\]

\begin{remark}
In the static case, where $f_t=f$ and
$P_T^{\mathcal H_{\nu,\ell}}(f_{1:T})=0$, Theorem~4 of
\cite{Zadorozhnyi2021Online} gives, for the KAAR algorithm, the bound
\[
R_T(f)
=
O\!\left(
T^{\frac{d}{2s+d}+\varepsilon}\ln T
\right),
\qquad
\|f\|_{H^s(X)}\le R,
\qquad
s>\frac d2
\]
for any $\varepsilon>0$.
Since the Mat\'ern RKHS $\mathcal H_{\nu,\ell}$ is norm-equivalent to
$H^{\nu+d/2}(X)$, substituting $s=\nu+d/2$ gives
\[
R_T(f)
=
O\!\left(
T^{\frac{d}{2(\nu+d)}+\varepsilon}\ln T
\right).
\]
This static rate is better than the static specialization of
\eqref{eq:matern_bound}:
$R_T(f)
=
O\!\left(
T^{\frac{d}{\nu+d}}
(\ln T)^{\frac{\nu}{\nu+d}}
\right).
$
It remains an open question whether the regret bound \eqref{eq:matern_bound} can be improved by an appropriate generalization of KAAR or by other means.
\end{remark}

\section{Conclusion}
We proposed a finite-dimensional approximation approach to
dynamic regret minimization for online regression in RKHS under the square loss.
The construction combines discounted VAW forecasters with VAW-based aggregation
over discount parameters and, in the slow approximation regime, over subspace
dimensions. The resulting bounds are governed by the RKHS path length of the
comparator sequence and by the approximation quality of the chosen
finite-dimensional subspaces.

The framework was applied to several approximation schemes. Explicit feature
expansions lead to fast-regime bounds for Gaussian and analytic dot-product
kernels, while Mercer truncations provide a general spectral construction.
However, explicit orthonormal expansions are available only in special cases,
and Mercer truncations are difficult to implement and seem to lead to
suboptimal estimates. Subspaces spanned by kernel sections provide a more
universal alternative, since they require only kernel evaluations and can be
represented through Gram matrices. For Mat\'ern kernels this scheme also fits
naturally with the Sobolev description of the RKHS, although its computational
efficiency requires a separate study.

A further direction is to bring the present dynamic-regret analysis closer to
the kernelized KAAR framework and its projected variants.
The results of \cite{Jezequel2019efficient,Zadorozhnyi2021Online} show that
KAAR-type algorithms can exploit effective dimension and projection arguments
to obtain sharp static regret bounds for important RKHS and Sobolev classes.
It would be interesting to understand whether analogous ideas can be combined
with discounting and path-length-dependent analysis in the dynamic setting.

\printbibliography
\end{document}